\setlist{leftmargin=6.5mm}
\theoremstyle{plain}
\newtheorem{theorem}{Theorem}%[section]
\newtheorem{lemma}{Lemma}
\newtheorem{corollary}{Corollary}
\theoremstyle{definition}
\newtheorem{definition}{Definition}
\newtheorem{assumption}{Assumption}
\theoremstyle{remark}
\newtheoremstyle{TheoremRep}
        {\topsep}{\topsep}              %%% space between body and thm
        {\itshape}                      %%% Thm body font
        {}                              %%% Indent amount (empty = no indent)
        {\bfseries}                     %%% Thm head font
        {.}                             %%% Punctuation after thm head
        { }                             %%% Space after thm head
        {\thmname{#1}\thmnote{ \bfseries #3}}%%% Thm head spec
\theoremstyle{TheoremRep}
\newtheorem{theoremrep}{Theorem}
\newcommand{\E}{\ensuremath{\mathbb{E}}}
\newcommand{\Y}{\ensuremath{\mathcal{Y}}}
\newcommand{\X}{\ensuremath{\mathcal{X}}}
\newcommand{\Z}{\ensuremath{\mathcal{Z}}}
\definecolor{limegreen}{HTML}{8DC73E}
\definecolor{yelloworange}{HTML}{CD7F32}
\definecolor{salmon}{HTML}{F69289}
\title{Understanding the Transferability of Representations \\ via Task-Relatedness}
\author{Akshay Mehra, Yunbei Zhang, and Jihun Hamm\\
{\small Tulane University}\\ 
{\tt\small\{amehra, yzhang111, jhamm3\}@tulane.edu}\\
}
\begin{document}

\maketitle

\begin{abstract}
The growing popularity of transfer learning, due to the availability of models pre-trained on vast amounts of data, makes it imperative to understand when the knowledge of these pre-trained models can be transferred to obtain high-performing models on downstream target tasks.
However, the exact conditions under which transfer learning succeeds in a cross-domain cross-task setting are still poorly understood. 
To bridge this gap, we propose a novel analysis that analyzes the transferability of the representations of pre-trained models to downstream tasks in terms of their relatedness to a given reference task.
Our analysis leads to an upper bound on transferability in terms of task-relatedness, quantified using the difference between the class priors, label sets, and features of the two tasks.
Our experiments using state-of-the-art pre-trained models show the effectiveness of task-relatedness in explaining transferability on various vision and language tasks.
The efficient computability of task-relatedness even without labels of the target task and its high correlation with the model's accuracy after end-to-end fine-tuning on the target task makes it a useful metric for transferability estimation. 
Our empirical results of using task-relatedness on the problem of selecting the best pre-trained model from a model zoo for a target task highlight its utility for practical problems.
%, we propose a new transferability metric based on task-relatedness that is highly correlated to the model's performance on a target task after transfer learning. 
% can be used to 
% The efficient computability of task-relatedness, without requiring models trained on the target tasks or the need for target labels (in certain cases) makes it useful for various practical applications including the problem of selecting the best pre-trained model from a model zoo for a given target task. \AM{Change based on new results.}
\end{abstract}

%%%%%%%%%%%%%%%%%%%%%%%%%%%%%%%%%%%%%%%%%%%%%%%%%%%%%%%%%%%%
\section{Introduction}
\label{sec:introduction}
%%%%%%%%%%%%%%%%%%%%%%%%%%%%%%%%%%%%%%%%%%%%%%%%%%%%%%%%%%%%
%\AM{Contrast with Task2vec and Taskonaomy and emphasize that we don't need to train the model on target data.}
%\AM{Don't say transform the classifier to match target task's.}
Transfer learning (TL) \citep{5288526, Weiss2016-ia} is a powerful tool for developing high-performance machine learning models, especially in current times when large models \citep{radford2021learning, chen2020simple, chen2021empirical, devlin2019bert} pre-trained on huge amounts of data are being fine-tuned for various downstream tasks.
%Learning in this regime is most successful when sufficient labeled data for the downstream task is unavailable or when training large models from scratch is computationally infeasible.
While large pre-trained models achieve impressive performance on downstream tasks even in the zero-shot inference setting \citep{radford2021learning}, their performance can often be improved by fine-tuning them on data from target tasks.
However, our understanding of when representations from these models lead to classifiers that achieve high performance (i.e., high transferability) to downstream tasks is still lacking.

Analytical works based on domain adaptation \citep{ben2007analysis,ben2010theory,shen2018wasserstein,mansour2009domain,le2021lamda, mehra2021understanding,mehra2022do} can only explain cross-domain tasks (i.e., when only features/label priors change across tasks) but in the TL setting, label sets can also change (i.e., cross-task setting). 
Recently, \cite{tran2019transferability} showed that the relatedness between the label sets of the two tasks measured using conditional entropy can explain the difference in their transferability. 
However, \cite{tran2019transferability} focused only on the cross-task setting, and analysis for transferability in a general cross-domain cross-task setting is not addressed. 
% However, their analysis is in a limited setting and does not explain transferability in a cross-domain cross-task setting.
% TL is often used in a cross-domain (features/label priors change across tasks) cross-task (label
% sets change across tasks) setting. 
% Due to this, previous analytical works such as those based on domain adaptation \citep{ben2007analysis,ben2010theory,shen2018wasserstein,mansour2009domain,le2021lamda, mehra2021understanding,mehra2022do} cannot explain transferability as they cannot handle cross-task setting.
% Recently \cite{tran2019transferability}, analytically showed that the conditional entropy between the label sets of the two tasks governs transferability in the cross-task setting.
% However, their analysis is in a limited setting and does not explain transferability in a cross-domain cross-task setting.
Apart from these analytical works, another line of work focuses on proposing transferability metrics that correlate well with performance on downstream tasks after end-to-end fine-tuning. % but are more efficiently computable than fine-tuning the models on the target task. 
We refer to these works as score-based transferability estimation (SbTE) metrics \citep{you2021logme,tan2021otce,huang2022frustratingly,nguyen2020leep,ding2022pactran,shao2022not}.
These works focus on developing scores for selecting a pre-trained model from a model zoo, that achieves the best transferability on a target task.
While these works address a practical problem, they do not focus on providing an analysis of transferability.
%While the efficient computability of these scores, compared to fully fine-tuning all the pre-trained models in the model zoo on the target task addresses a practical problem, they do not provide much insight into when transfer learning leads to high-performing models on the target tasks. 
%to a target task relates to the transferability of a reference task in terms of the relatedness between the two tasks. 

\begin{wrapfigure}[21]{r}{0.45\textwidth}
%\begin{figure}[tb]
\vspace{-0.3cm}
  \centering{
  %\subfigure[]{\includegraphics[width=0.49\textwidth]{}}
  %\subfigure[]{\includegraphics[width=0.42\textwidth]{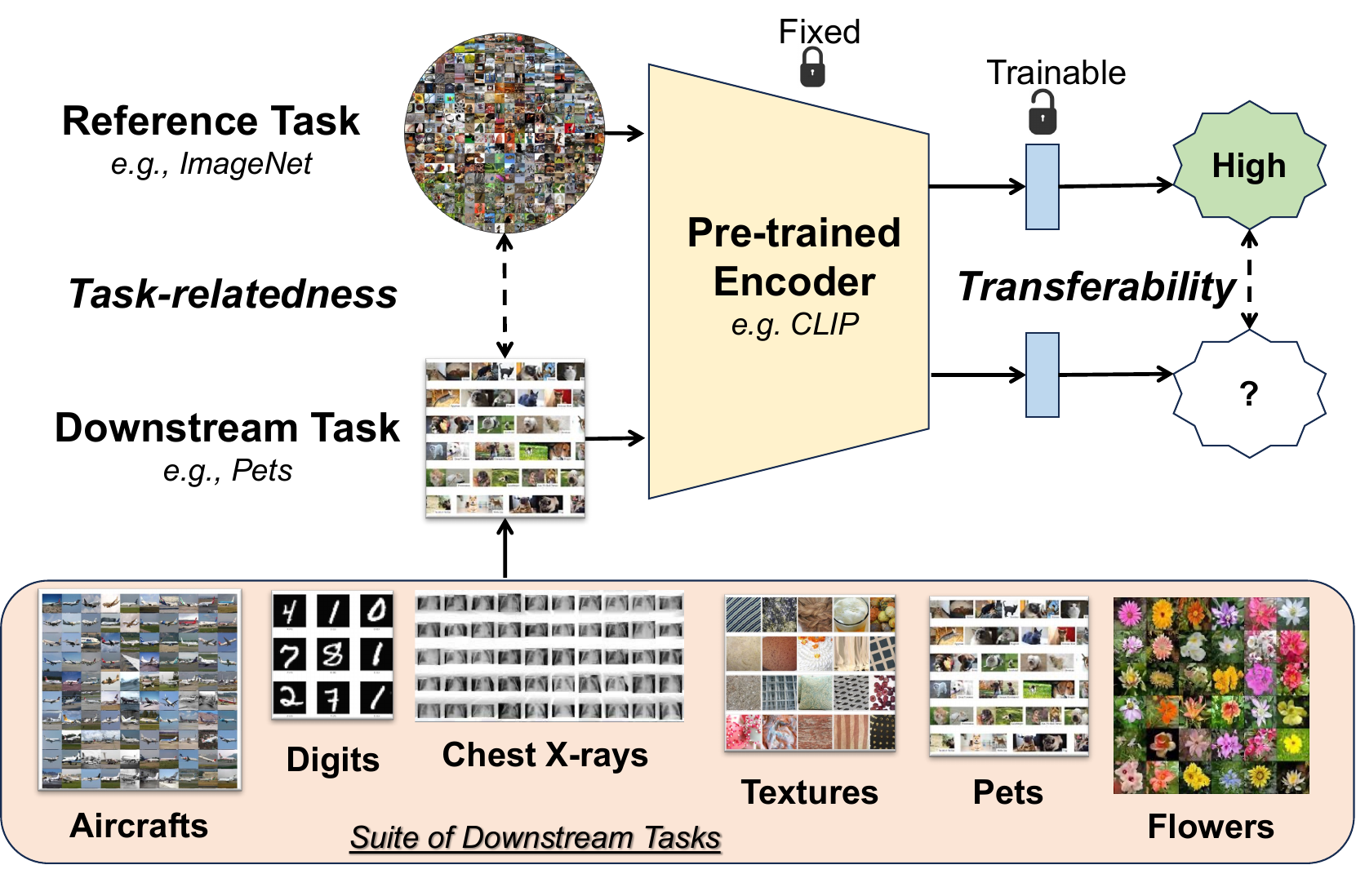}}
  \includegraphics[width=0.47\textwidth]{Images/teaser.pdf}
  }
  \vspace{-0.5cm}
  \caption{
  %{\bf Question:} 
  %\AM{Make Reference Task in Caps}
  Given a pre-trained encoder (e.g., CLIP \citep{radford2021learning}), how does the performance after fine-tuning it on a reference task (e.g., ImageNet) relate to the performance after fine-tuning it on other tasks?
  Through a rigorous bound on transferability (Theorem~\ref{theorem:final_bound}) in terms of the relatedness between a reference and a target task, we show that tasks related to the reference task achieve provably better performance after fine-tuning. % compared to unrelated tasks.
  %\JH{On the right, add a figure of applications such as target/model selection?}
  %In this work, we present a rigorous
  %bound on transferability (Theorem~\ref{theorem:final_bound}) through a  task-relatedness metric between a reference and a target task. 
  %Tasks close to the reference task achieve provably better transferability than unrelated tasks.
  \if0
  {\bf Question:} Given a fixed pre-trained encoder, whose representations for a reference task produce a high-performing classifier, representations of which other downstream tasks can be expected to yield high-performing classifiers?
  %Given an encoder, whose representations for a reference task produce a well-performing classifier, as shown in (a), identify the tasks whose representations would also produce classifier trained on the representations extracted from a fixed encoder, for a reference task (e.g., ImageNet), as shown in (a), identify tasks whose representations extracted from this encoder will lead to classifier that works well provably on these tasks. on which this encoder will provably perform well that on top of which a classifer fine-tuned that performs well on a reference task (e.g., ImageNet) after fine-tuning (shown in (a)), we aim to identify the tasks on which this encoder will provably perform well, after fine-tuning (transferability). 
  {\bf Our answer:} Relatedness of the reference and the downstream tasks (quantified in Theorem~\ref{theorem:final_bound}) provably explains transferability (i.e., tasks related to the reference task provably achieve better transferability). % than unrelated ones).
  \fi
  %Using a fixed encoder, a reference task, and a classifier fine-tuned using the reference task (shown in (a)), we propose an upper bound on transferability in terms of their relatedness to the reference task (shown in (b), related tasks denoted by shorter arrows achieve better performance (green)). % than unrelated tasks). 
  %The crucial notion of relatedness is quantified in Sec.~\ref{sec:task_transfer_bounds}. 
  % Question: Given an encoder, that performs well on a
% reference task (e.g., ImageNet) after fine-tuning (shown in (a)),
% we aim to identify the tasks on which this encoder will provably
% perform well, after fine-tuning (transferability). Our answer: We
% show that relatedness of the reference and the downstream tasks
% (quantified in Theorem 3) provably explains transferability (i.e., as
% shown in (b), tasks related to the reference task denoted by shorter
% arrows will achieve better performance (green)).
    } 
  \label{fig:teaser}
%\end{figure}
\end{wrapfigure}
Thus, we first rigorously analyze the transferability of the representations in producing high-performing classifiers and propose a novel approach that studies transferability in terms of its relatedness to a reference task (see Fig.~\ref{fig:teaser}).
This is in line with previous analytical works \cite{ben2010theory,albuquerque2019generalizing,tran2019transferability,tan2021otce} which studied the model's performance on target tasks in terms of the source task in different settings such as domain adaptation/generalization and recently TL. 
However, there's a crucial difference: we study transferability in terms of a reference task instead of the source task since it is impractical to assume the knowledge of the source task used to train large models such as CLIP \cite{radford2021learning} or GPT, commonly used for TL.
%We 
%which may or may not be the same as the task used for pre-training the model.
%Continuing along the line of analytical works, we propose a task transfer approach to analyze the performance of a model fine-tuned on the target task (transferability) in a cross-domain cross-task setting and relate it to the model's performance on the source task.

Our approach works by transforming the distribution (and classifier) of a reference task, by transforming its class-prior distribution, label set, and feature space to obtain a new distribution that is similar to that of the target task (Fig.~\ref{fig:overview}).
%Under this transformation model, we first show the relation between the model's performance on the transformed reference task and the model's performance on the original reference task (Theorem~\ref{theorem:accuracy_transfer}). 
%Next, since the distribution of the transformed reference task may not exactly match the distribution of the target task, there could be a difference in the model's performance on the two distributions. 
%We show that this residual performance difference can be bounded by the Wasserstein distance  \citep{peyre2019computational,villani2009optimal} between the distributions of the transformed reference and target tasks (Theorem~\ref{theorem:dist_shift}).
Based on these transformations, we show that transferability can be provably explained (and is tightly upper bounded) using three interpretable terms.
A weighted reference loss term appearing due to the class prior distribution difference between the tasks, a label mismatch term appearing as conditional entropy between the label distributions of the tasks, and a distribution mismatch term appearing as the Wasserstein distance between the transformed reference and target distributions (Theorem ~\ref{theorem:final_bound}).
We define task-relatedness as the sum of these three terms (a smaller value implies higher relatedness). 
We propose an optimization problem (Eq.~\ref{Eq:optimization}) and an algorithm (Alg.~\ref{alg:bound_estimation}) to learn the transformations to compute it. % task-relatedness.
%-\JH{What is the practical value/usage of our paper as a SbTE?
% Emphasize end-to-end full-finetuning correlation results.}
%-\JH{Analsys of linear/frozen/partial transferability inspire us to use it as a metric for end-to-end finetuning performance.
% Practically coming from limited or no labeled target data regimes.
% }
%-\JH{Less emphasis on analysis and more details on the empirical results}
%To showcase the effectiveness of our analysis in estimating transferability, we use
Using state-of-the-art (SOTA) pre-trained models, with different architectures, trained with various training methods on computer vision (CV) and natural language processing (NLP) tasks, we show that task-relatedness 
%finds %transformations of the reference task that 
achieves a small gap to transferability (Sec.~\ref{sec:empirical_vs_predicted}).
%We also show that our task-relatedness estimates align well with the semantic relatedness of the tasks and their transferability.
%is highly correlated with the between the reference and target tasks leads to higher transferability showing a strong correlation between task-relatedness and transferability.
Our analysis also leads to new insights into learning in the TL setting such as to improve the transferability of an encoder on a downstream task, one can improve the encoder's transferability on related reference tasks (Sec.~\ref{sec:source_relatedness}).
This is particularly useful when practitioners intend to develop encoders that achieve high transferability to proprietary (and potentially inaccessible) datasets.
% This is particularly useful when 
% can be improved by   results show improving the transferability of an encoder on a reference task, improves the transferability for other related target tasks.
% This suggests that the transferability of an encoder to a practitioner's custom target task can be improved by training it on related (publicly available) datasets (Sec.~\ref{sec:source_relatedness}).
%, (measured via a small Wasserstein distance between the transformed reference and target distributions), provably achieve high transferability. 

We also demonstrate the utility of task-relatedness in estimating the accuracy of the model after end-to-end fine-tuning. 
While the TL setting assumes access to target labels, the high computational cost of end-to-end fine-tuning of a pre-trained model on a target task calls for developing metrics that are efficiently computable and highly correlated with end-to-end fine-tuning accuracy. 
To this end, we propose to use task-relatedness computed in the penultimate layer of the pre-trained model as our transferability estimation metric. 
To further improve the computational efficiency of task-relatedness we only measure the difference between the class-wise means and covariances of the distributions in lieu of the Wasserstein distance as required in Theorem~\ref{theorem:final_bound}.
This enables the computation of task-relatedness with only the statistics of the reference/target tasks.
Our empirical results (Sec.~\ref{sec:sbte_evaluation}) attest that task-relatedness achieves a high correlation with the model's accuracy after end-to-end fine-tuning on the target task making it an effective metric for selecting a pre-trained model from a model zoo that achieves the best accuracy on the target task.
Moreover, unlike previous SbTE metrics, task-relatedness can be estimated even without labeled target data, making it suitable for unsupervised transferability estimation, highlighting the advantage of a reference task as used in our analysis.
%Thus, our task transformation approach leads to both a rigorous bound on transferability and a practical method to estimate transferability.
Our main contributions are: %as follows

% Apart from being theoretically sound, we show that task-relatedness is an effective metric for transferability estimation.
% %at we also demonstrate the practical utility of task-relatedness in the estimation of transferability. 
% \AM{mention source/reference statistics can be used}
% Our empirical results (Sec.~\ref{sec:sbte_evaluation}) show that task-relatedness can be computed efficiently and achieves a high correlation with the model's performance after end-to-end fine-tuning on the target task.
% Unlike previous SbTE metrics, we show that task-relatedness can be estimated even without labeled target data making it suitable for unsupervised transferability estimation.
% Thus, our task transformation approach leads to both a rigorous bound on transferability and a practical method to estimate transferability.
% Our main contributions are as follows

\begin{itemize}
    \item We rigorously analyze transferability for classification tasks. 
    Our analysis, to the best of our knowledge, leads to the first upper bound on transferability in terms of task-relatedness in a cross-domain cross-task setting. % of the target and the reference tasks.

    \item We propose an optimization problem to efficiently compute task-relatedness, using a small amount of target labels and show that it can even predict performance after end-to-end fine-tuning without requiring target labels. 

    \item Using SOTA models and CV/NLP tasks, we show that task-relatedness accurately predicts transferability and show that transferability to unseen tasks can be improved by improving transferability to known (related) tasks. 

\end{itemize}

% This makes task-relatedness an effective metric for 
% First, we show that task-relatedness serves as an effective transferability metric, due to its high correlation with the model's performance after end-to-end fine-tuning on the target task. 
% Similar to previous SbTE metrics, we show that task-relatedness can be estimated significantly more efficiently than end-to-end fine-tuning (which can take about a day for a single target task and pre-trained model as shown by \cite{you2021logme}). 

\if0
These works propose scores that can be efficiently computed, using labeled data from the target task,  for selecting a pre-trained model from a model zoo, that achieves the best accuracy on the target task.
However, these scores are useful in a relative sense, (e.g., a score of 1 on a
target task for a pre-trained model does not indicate whether transferability will be high or low and requires comparison with other models).
Another disadvantage of some of these metrics is that they are not comparable across target tasks (e.g., in Fig. 4 of LogMe \cite{you2021logme}, scores for CIFAR-10 are lower than scores for CIFAR-100 on the same model, but, the accuracy after fine-tuning is better for CIFAR-10).
%Some SbTE methods such as NCE \cite{tran2019transferability} and LEEP \cite{nguyen2020leep}, use predictions of the target data from the source classifier as pseudo-labels to obtain their metrics. 
%However, in TL the source task (e.g., ImageNet) can be vastly different from the target task (e.g., Chest X-ray images), making pseudo-labels less informative and significantly worsening their correlation to transferability. 
Moreover, recent works \cite{agostinelli2022stable,chaves2023performance} have highlighted instability in terms of the effectiveness of these metrics in correlating well with transferability.
%, and do not provide insights into how  there is no intuition as to how they change for other target tasks unless they are recomputed on those tasks.
%This is a critical drawback since relying on these scores to evaluate representations on a suite of downstream tasks, gives no insight into how good the representations will be on new tasks or how their score will change.
%Moreover, recent works \cite{agostinelli2022stable,chaves2023performance} have highlighted instability in terms of the effectiveness of these metrics in correlating well with transferability.
Yet, another line of work \cite{achille2019task2vec,zamir2018taskonomy,dwivedi2019representation,dwivedi2020duality}, demonstrated the effectiveness of task-relatedness for selecting models that achieve high transferability. 
However, these methods require access to a  model fine-tuned on the target task to estimate task-relatedness limiting their practicality. 
%making them impractical for inferring transferability without fine-tuning. 
%and don't provide insights into the effectiveness of the representations on a new (unseen) target task, similar to SbTE methods. 
%Thus, to gauge the effectiveness of the representations in a way that provides insights into how they work on new tasks, we propose to analyze transferability in terms of the relatedness between a target and a reference task (Fig.~\ref{fig:teaser}). 
%As shown in Fig.~\ref{fig:teaser}, we aim to understand how the relatedness between the target and a reference task affects the transferability of these tasks.
%This allows us to measure the transferability of the representation on any task based on the relatedness of the reference and the new task. 

Combining these results, we show that transferability can be provably explained (and is tightly upper bounded) using three interpretable terms.
A weighted reference loss term appearing due to the class prior distribution difference between the tasks, a label mismatch term appearing as conditional entropy between the label distributions of the tasks, and a distribution mismatch term appearing as the Wasserstein distance between the transformed reference and target distributions (Theorem ~\ref{theorem:final_bound}).
We quantify task-relatedness as the sum of these three terms (a smaller value implies higher relatedness). 
We propose an optimization problem (Eq.~\ref{Eq:optimization}) that learns transformations to compute the relatedness between the reference and the target tasks, by minimizing the proposed upper bound (Alg.~\ref{alg:bound_estimation}).
%\JH{Optimization should be explained just briefly, right after the bound.}

%\AM{edit this}

Apart from these, our analysis and the algorithm offer several practical advantages. 
%Secondly, the computation of the bound only requires access to the reference feature extractor (and does not need the reference data) allowing us to measure transferability relative to any reference task (for a fixed feature extractor).
%This is essential when, given a trained feature extractor, the problem is to select the best task to fully adapt this extractor to achieve the highest transferability (e.g., given a CLIP \cite{radford2021learning} feature extractor, find the best reference task between MNIST and SVHN that leads to highest transferability to other digit datasets such as USPS and MNIST\_M after fully finetuning t).
%Most SbTE methods do not work in this setting since they estimate transferability only using the data from the target task. 
Firstly, task-relatedness is an effective metric for the pre-trained model selection problem due to its high correlation with transferability, and its ease of computability via Alg.~\ref{alg:bound_estimation} (Sec.~\ref{sec:sbte_evaluation}).
Secondly, task-relatedness can be estimated without target labels allowing it to estimate transferability for unlabeled target data,
%unlabeled target task data without the knowledge of labels of the target task, 
which is not possible with current SbTE methods. 
%,  bound value computed by solving Eq.~\ref{Eq:optimization} achieves a high correlation with transferability (competitive with various SbTE methods), showing its effectiveness for the problem of pre-trained model selection. 
Lastly, given a target task and model, our bound produces an estimate of transferability that does not require any comparison with values for other tasks/models, unlike SbTE metrics. 
\fi

%%%%%%%%%%%%%%%%%%%%%%%%%%%%%%%%%%%%%%%%%%%%%%%%%%%%%%%%%%%%
\section{Related Work}
\label{sec:related_work}
%%%%%%%%%%%%%%%%%%%%%%%%%%%%%%%%%%%%%%%%%%%%%%%%%%%%%%%%%%%%
{\bf Transfer learning (TL):}
TL \citep{5288526, Weiss2016-ia,devlin2019bert,sanh2020distilbert,ren2015faster,dai2016r, cui2022discriminability} has been studied widely and consists of various settings including transductive transfer, inductive transfer, and  task transfer learning. 
The transductive setting also referred to as domain adaptation \citep{ben2007analysis,ben2010theory} focuses on reducing the shift between two domains.
The task transfer setting focuses on identifying the relationship between tasks, regardless of the model, to explain the transfer performance (see Appendix~\ref{app:additional_rw} for more details).
Lastly, the inductive transfer setting focuses on using an inductive bias such as fine-tuning a pre-trained model (trained via adversarial training \citep{salman2020adversarially}, self-supervised learning \citep{chen2020simple,caron2020unsupervised,chen2021empirical} or by combining language and image information \citep{radford2021learning}) to improve the performance on a target task.
Our work focuses on the inductive transfer learning setting and proposes an upper bound on transferability of the representations of pre-trained models to downstream tasks.
%in terms of their relatedness to a given reference task
%Recent works showed that representations learned using  lead to better transferability on a target task. 
%While the success of these methods was shown empirically a 
%has been attributed to a better feature representation learned by these pre-training methods.
%However, unlike our work, a 
%rigorous analysis of the representations for transferability is not addressed in these works.

% TL achieves impressive results in many areas of machine learning \citep{5288526, Weiss2016-ia,devlin2019bert,sanh2020distilbert,ren2015faster,dai2016r}.
% Recent works showed that representation learned using 
% %have demonstrated that the transferability of models improves when trained with 
% adversarial training \citep{salman2020adversarially}, self-supervised learning \citep{chen2020simple,caron2020unsupervised,chen2021empirical} and by combining language and image information \citep{radford2021learning} lead to better transferability. % to downstream tasks. 
% While the success of these methods was shown empirically a 
% %has been attributed to a better feature representation learned by these pre-training methods.
% %However, unlike our work, a 
% rigorous analysis of the representations for transferability is not addressed in these works.

% Our work 

\noindent{\bf Analytical works for learning under distribution shifts:} Prior works \citep{ben2007analysis,ben2010theory,shen2018wasserstein,mansour2009domain,le2021lamda, mehra2021understanding,mehra2022do} analytically explained learning under distribution shifts using distributional divergence between the marginal distributions and a label mismatch term.
However, these results are applicable under assumptions such as covariate or label shift which need not be satisfied in TL where both the data distribution and the label spaces can be different (see App.~\ref{app:analysis_domain_adaptation} for detailed comparison).
Recently, \cite{tran2019transferability} proposed an upper bound on transferability in a restrictive setting of the same features for both tasks, however, our analysis does not require such an assumption.
Other works \citep{ben2003exploiting,ruder2017overview,padmakumar2022exploring} analyzed the representation for the multi-task learning setting. 
These works showed that when tasks are weakly related, a single representation space (model) may not perform well for all tasks. 
However, the TL setting differs from both of these and our work aims to analyze transferability in this setting. 
%Our analytical results illustrate the same effect of task-relateness on transferability (Sec.~\ref{sec:source_relatedness}).  

\noindent{\bf Score-based transferability estimation (SbTE):}
These works \citep{8803726,nguyen2020leep,huang2022frustratingly,you2021logme,tan2021otce,Mehra_2024_CVPR} use data from the target task and produce a score correlated with transferability.
Such a score is useful for selecting the model from a model zoo that leads to the best transferability to a target task.
%This score is designed for choosing the best pre-trained model from a model zoo that should be used for TL to a specific target task without actual fine-tuning.
%of pre-trained models without fine-tuning on a particular target task for the problem of pre-trained model selection.
%Their goal is to propose a metric that is correlated with transferability 
\cite{tran2019transferability} proposed the Negative Conditional Entropy (NCE) score that predicts transferability using the negative conditional entropy between labels of the tasks but requires the two tasks to have the same input instances. 
\citep{bao2019information} estimates transferability by solving the HGR maximum correlation problem 
 and using normalized Hscore, in the same setting as \cite{tran2019transferability}.
\cite{nguyen2020leep} proposed the LEEP score and computed NCE using soft (pseudo) labels for the target task from a pre-trained model.
OT-CE \citep{tan2021otce} combined Wasserstein distance \citep{alvarez2020geometric} and NCE whereas
\citep{8803726, you2021logme} estimate likelihood and the marginalized likelihood of labeled target examples to estimate transferability. \citep{liu2022wasserstein} proposes a model-agnostic approach that also relies on optimal transport to compute the distance between the tasks similar to OTDD \citep{alvarez2020geometric}.
%However, no intuition on how the score is going to change when unseen data is encountered. However we can say exactly how the score will look like on the new domain
In contrast, we focus on analyzing transferability in terms of task-relatedness theoretically along with demonstrating its effectiveness as a transferability estimation metric for the 
%propose an algorithm to estimate it and demonstrate competitive performance on 
%the efficiently which is effective for the 
pre-trained model selection problem. %(see App.~\ref{app:comparision_with_sbte} for related work on model selection).

%%%%%%%%%%%%%%%%%%%%%%%%%%%%%%%%%%%%%%%%%%%%%%%%%%%%%%%%%%%%
%\section{Task Transfer Analysis}
\section{Analysis of TL using task-relatedness}
\label{sec:theoretical_analysis}
%%%%%%%%%%%%%%%%%%%%%%%%%%%%%%%%%%%%%%%%%%%%%%%%%%%%%%%%%%%%
{\bf Problem setting and notations:}
Let $P_R(x,y)$ and $P_T(x,y)$ denote the distributions of the reference and the target tasks, defined on $\X_R \times \Y_R$ and $\X_T \times \Y_T$ respectively. 
We assume that the feature spaces are common ($\X_R=\X_T=\X$) such as RGB images, but the reference label set $\mathcal{Y}_R = \{1, 2, \cdots, K_R\}$ and the target label set $\mathcal{Y}_T = \{1, 2, \cdots, K_T\}$ can be entirely different. % to allow for arbitrary (unknown) downstream tasks. 
We assume the number of reference task classes ($K_R$) are greater than or equal to the number of target classes ($K_T$). 
In the TL setting, an encoder (feature extractor) $g:\X \rightarrow \Z$ is pre-trained on a dataset %(which may be different from the reference dataset) 
with or without labels depending on the training method (e.g., supervised vs. self-supervised).  
We denote the resultant push-forward distributions of $R$ and $T$ on the encoder output space as $P_R(z, y)$ and $P_T(z, y)$. 
With a fixed encoder $g$, a classifier (linear or non-linear), $h(z):\Z \rightarrow \Delta$, that outputs a probability vector is learned for the reference ($h_R$) and the target ($h_T$) separately, where $\Delta_{R/T}$ is a $K_R/K_T$ simplex for $R/T$. 
The classifier $h_R = \arg \min_{h \in \mathcal{H}}\E_{(z,y) \in P_R}[\ell(h(z;g),y)]$ and $h_T = \arg \min_{h \in \mathcal{H}}\E_{(z,y) \in P_T}[\ell(h(z;g),y)]$ 
%and $h_T$ are obtained by $\min_{h_R \in \mathcal{H}}\E_{(z,y) \in P_R}[\ell(h_R(z;g),y)]$ and 
%$\min_{h_T \in \mathcal{H}}\E_{(z,y) \in P_T}[\ell(h_T(z;g), y)]$ 
where $\mathcal{H}$ is the set of classifiers %(parametrized by $w$) %with empirical distributions % (referred to as transferability).
and $\ell(h(z),y) = -\log(h(z)_y)$ %$= -w_y^Tz + \log\sum_i e^{w_i^Tz}$ \AM{might need to remove this since h(z) is assumed to be a probability and not logits} %\YZ{introduce w}
is the cross-entropy loss. 
Table~\ref{table:notations} in App.~\ref{App:proofs} summarizes the notations used in our work.
Next, we define transferability as commonly used in the literature.
%{\bf We refer to the loss of the optimal classifier $h_T$ as transferability}.
\begin{definition}
(Transferability). Transferability of the representations from an encoder $g$ on a target task $T$ for classifiers in $\mathcal{H}$ is defined as $\E_{(z,y) \in P_T}[\ell(h_T(z;g), y)]$. % where $\mathcal{H}$ is the set of classifiers.
\end{definition}
In the next section, we show the analysis with $\mathcal{H}$ as the class of linear classifiers for ease of explanation and discuss its extension to non-linear classifiers in App.~\ref{app:extenstion_to_non_linear}.
%-\JH{Clarify linear vs end-to-end settings and that we are not restricted to linear. Linear only for the purpose of estimating the bound with few or no labeled data. Even the analysis need not be limited to linear finetuning.}
Proofs for Sec.~\ref{sec:theoretical_analysis} are in App.~\ref{App:proofs}.

\begin{figure*}[t]
  \centering{\includegraphics[width=0.90\textwidth]{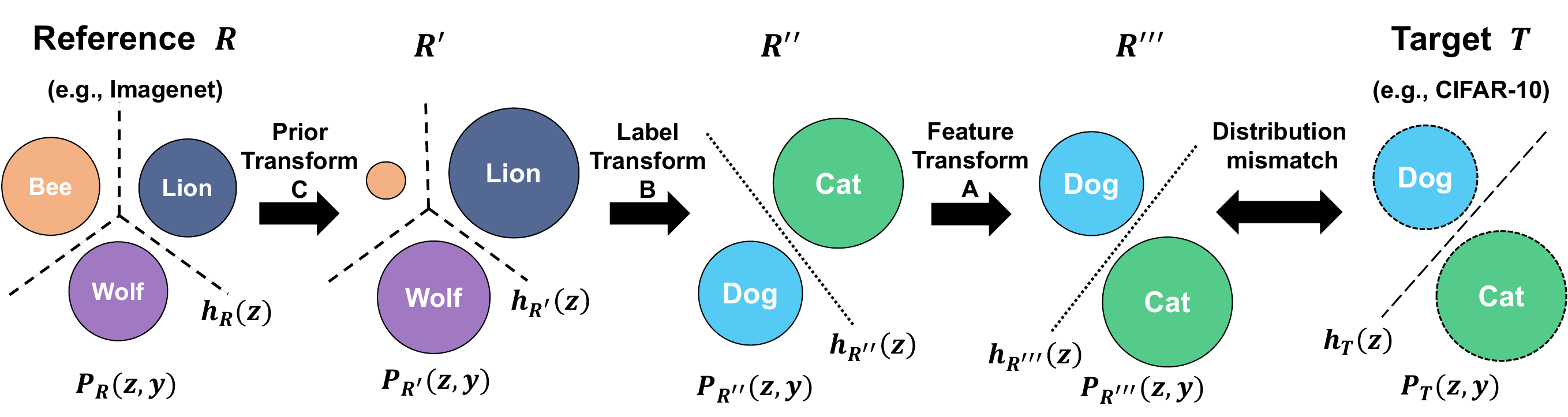}}
  \caption{{\bf: Overview of our task transformation model:} A series of transformations are applied to the reference distribution $P_R(z,y)$ and classifier $h_R$ to produce the transformed distribution $P_{R'''}$ and classifier $h_{R'''}$ to explain transferability to the downstream target task. %after linear fine-tuning. 
  Class-prior transformation ($R\rightarrow R'$) changes the class prior of the reference distribution (e.g., an irrelevant Bee class in $R$ now has smaller prior) followed by label set transformation ($R'\rightarrow R''$) (e.g., to match $\{$Lion, Wolf$\}$ with $\{$Cat, Dog$\}$), followed by feature space transformation ($R''\rightarrow R'''$) to match the feature distribution of the target task $P_T(z,y)$. 
  }
  \label{fig:overview}
\end{figure*}

\subsection{Our task transformation model}
\label{sec:task_transfer_bounds}
The reference and the target tasks share the same encoder but do not share label sets or data distributions. 
Therefore, to relate the two tasks, we propose a chain of three simple transformations: 1) prior transformation (from $R$ to $R'$), 2) label transformation (from $R'$ to $R''$), and 3) feature transformation (from $R''$ to $R'''$).
The $R', R'', R'''$ are intermediate domain names after each of the transformations are applied.
The corresponding classifier in each domain is denoted by $h_{R'}$, $h_{R''}$, and $h_{R'''}$ as illustrated in Fig.~\ref{fig:overview}. 
The distribution after the transformations ($P_{R'''}$) has the same feature $\Z_{R'''}=\Z_T=\Z$ and label sets $\Y_{R'''}=\Y_{T}$ as the target task $T$, and consequently, the loss of the transformed classifier $h_{R'''}$ %can be directly related to the loss of 
and the target classifier $h_T$ can be related. 

%\subsubsection{Class-prior transformation \texorpdfstring{$(R \rightarrow R')$}{S to R'}}
{\bf Class-prior transformation \texorpdfstring{$(R \rightarrow R')$}{R to R'}:}
Since the reference task has more classes than the target task ($K_R \geq K_T$), many of the reference task classes are likely irrelevant for transfer to the target classes, e.g., while transferring from ImageNet to CIFAR10, only a small portion of ImageNet classes are relevant to CIFAR10 classes.
The prior transformation accounts for the relative importance of the reference classes.
This is illustrated in Fig.~\ref{fig:overview}, where changing the class prior of $R$ reduces the prior of the Bee class and increases the priors of Wolf and Lion classes (shown by the changed size of classes Wolf and Lion in $R'$). 
While transforming the prior of $R$, we keep the conditional distribution and the classifier the same i.e., $P_{R'}(z|y)=P_{R}(z|y)$ and $h_{R'}(z)=h_{R}(z)$.
Lemma~\ref{lemma:prior_transform} in App.~\ref{App:prior_transform} shows that the expected loss of the classifier $h_R$ on $R'$ is a re-weighted version of the loss of $h_R$ on $R$.

% \begin{lemma}
% \label{lemma:prior_transform}
% Let $C:= \left[\frac{P_{R'}(y)}{P_{R}(y)}\right]_{y=1}^{K_R}$ be a vector of probability ratios and the classifier $h_{R'}(z) := h_R(z)$, then %under assumption~\ref{assumption:1}
% %$\left[\frac{C^i h_{R}^i(x)}{\sum_{j \in \Y_R}C^jh_{R}^j(x)}\right]_{i=1}^{K_R}$
% %  and %$P_T(y) \neq P_R(y)$, cross-entropy loss, 
% we have %for all $h \in \mathcal{H}_{R}$ and any loss $l$, 
% \(\E_{P_{R'}(z,y)}[\ell(h_{R'}(z),y)] = \E_{P_{R}(z, y)}[C(y)\ell(h_{R}(z),y)], \;
% %+ \tau\) where \(\tau = \sum_{y\in\Y_R}\left[-P_{R'}\log(\frac{K_RC(y)}{\sum_{j \in \Y_R}C_y})\right].
% \) for any loss function $\ell$.
% \end{lemma}

%\subsubsection{transform of the label set of the reference domain $(R' \rightarrow R'')$}
%\subsubsection{Label transformation \texorpdfstring{$(R' \rightarrow R'')$}{R to R''}}
{\bf Label transformation \texorpdfstring{$(R' \rightarrow R'')$}{R to R''}:}
%In this section, we describe the second step of our generative model which transforms the label set of the distribution $R'$ as well as modified the classifier $h_{R'}$.
Next, we use a label transformation to match the label sets of the new domain $R''$ and that of the target domain. 
%Thus, the feature and label set of the new domain $R''$ become $(\Z_{R''},\Y_{R''})=(\Z_{R'}, \Y_{R''})$.
To this end, 
%change the label set of the domain $R'$, 
we specify the conditional distribution 
$B_{ij}:=P(y_{R''}=i|y_{R'}=j)$ ($B_{ij} \in [0,1], \; \forall \;i,j,\;\;\sum_i B_{ij}=1,\;\forall j$). %\YZ{for all j}
%Note that $P_{y_{R''}}$ is our model prior and may not exactly be the same as $P_{y_T}$.
%by $y_{R''} = \arg \max Be(y_{R'})$ where $e(\cdot)$ denotes the one-hot embedding of the label $y_{R'}$ in $\Delta_R$.
The label $y_{R''}$ of an example from the domain $R''$ is obtained via 
%sampled from the new probability vector 
$BP(y_{R'})$. 
This generative process doesn't require the feature, i.e., $P_{R''}(y_{R''}|y_{R'}, z) = P_{R''}(y_{R''}|y_{R'})$. %(Given y_{R'} the labels y_{R''} is conditionally independent of z)
%\YZ{The formula says $y_{R''}$ and z are conditionally independent of $y_{R'}$, right?}\JH{No. What's in the text is correct}
$B$ with sparse entries (i.e., only one entry of a column is 1) models a deterministic map from $\Y_R$ to $\Y_T$; $B$ with dense entries models a weaker association. 
%The The same idea can be applied to the classifier from the domain $R'$ to construct 
This process is illustrated in Fig.~\ref{fig:overview} which shows the map from $\{$Bee, Wolf, Lion$\}\subset \Y_{R'}$ to $\{$Dog, Cat$\} \subset \Y_T$ after using $B$. 
Under this model, a reasonable choice of classifier for $R''$ is $h_{R''}(z) = Bh_{R'}(z)$. %(Note that $h$ outputs a probability vector).
%We show the conditions under which this classifier is optimal in Corollary~\ref{cor:label_transform} in App.~\ref{App:proofs}.
Lemma~\ref{lemma:label_transform} in App.~\ref{App:label_transform} shows that the expected loss of $h_{R''}$ depends on the loss of $h_{R'}$ and the conditional entropy between the label sets of the tasks $R'$ and $R''$ and Corollary~\ref{cor:label_transform} shows the conditions for optimality of $h_{R''}$.
%%\AM{$K_T\timesK_R$ and not $|\Y_S|\timesK_T$} 
% \begin{lemma}
% \label{lemma:label_transform}
% Let $B$ be a $K_T\times|\Y_S|$ 
% matrix with $B_{ij}=P(y_{R''}=i|y_{R'}=j)$, $h_{R''}(z):=Bh_{R'}(z)$ and $\ell$ be the cross-entropy loss. Then,
% %Under assumption~\ref{assumption:2} and with the 
% %With the cross-entropy loss $\ell$, we have %for all $h_{R \in \mathcal{H}_{R}$, 
% \(\E_{P_{R''}(z,y)}[\ell(h_{R''}(z),y)] 
% \leq \E_{P_{R'}(z,y)}[\ell(h_{R'}(z),y)] + {H}(\mathcal{Y}_{R''}|\mathcal{Y}_{R'})\),
% where ${H}(\mathcal{Y}_{R''}|\mathcal{Y}_{R'})$ is the conditional entropy \((-\sum_{y_{R'}\in\mathcal{Y}_{R'}}\sum_{y_{R''}\in\mathcal{Y}_{R''}} P_{R'}(y_{R'})B_{y_{R''},y_{R'}}\log(B_{y_{R''},y_{R'}}))\). 
% \end{lemma}

%\subsubsection{Feature transformation \texorpdfstring{($R'' \rightarrow S'''$)}{R'' to S'''}}
{\bf Feature transformation \texorpdfstring{($R'' \rightarrow R'''$)}{R'' to R'''}:}
% -\JH{Is it too late to make it general?}
The final step involves changing the feature space of the distribution $R''$. % from the previous transform.
We apply an invertible linear transformation $A$ to the distribution in $R''$ to obtain the new distribution $R'''$.
After the transformation, the classifier associated with the new domain $R'''$ is $h_{R'''}(z) = h_{R''}(A^{-1}(z))$.
This is illustrated in Fig.~\ref{fig:overview} after feature transform using $A$.
Lemma~\ref{lemma:feature_transform} in App.~\ref{App:feature_transform} %below describes the relationship between the loss in $R''$ and $R'''$.%, after applying transform $A$.
shows that a linear transform of the space and classifier does not incur any additional loss and Corollary~\ref{cor:feature_transform} shows that the optimality of $h_{R''}$ implies optimality of $h_{R'''}$.
% \begin{lemma}
% \label{lemma:feature_transform}
% Let $A:\Z \to \Z$ %$$\mathcal{Z}_{R''} \to \mathcal{Z}_{R'''}$
% be an invertible linear map of features and the classifier $h_{R'''}(z_{R'''}) := h_{R''}(A^{-1}(z_{R'''}))$.
% Then %we have for all $h_{R} \in \mathcal{H}_{R}$ and any loss $l$, 
% \(\E_{P_{R'''}(z,y)}[\ell(h_{R'''}(z),y)] 
% = \E_{P_{R''}(z,y)}[\ell(h_{R''}(z),y)]\) for any loss $\ell$.
% \end{lemma}
%\subsubsection{Three transformations combined}
%{\bf Three transformations combined:}
Using these, we get Theorem~\ref{theorem:accuracy_transfer} by 
%Using Lemmas~\ref{lemma:prior_transform}, \ref{lemma:label_transform}, and~\ref{lemma:feature_transform} (in the Appendix), 
%corresponding to the three transformations, 
 defining conditional entropy as follows
 \begin{equation}
 \label{eq:conditional_entropy}
     {H}(\mathcal{Y}_{R''}|\mathcal{Y}_{R'})=-\sum_{y_{R'}\in\mathcal{Y}_{R'}}\sum_{y_{R''}\in\mathcal{Y}_{R''}} P_{R'}(y_{R'})B_{y_{R''},y_{R'}}\log(B_{y_{R''},y_{R'}}).
 \end{equation}
 % $$.
\begin{theorem}
\label{theorem:accuracy_transfer}
Let $C:= \left[\frac{P_{R'}(y)}{P_{R}(y)}\right]_{y=1}^{K_R}$ be a vector of probability ratios , $B$ be a $K_T\times K_R$ matrix with $B_{ij}=P(y_{R''}=i|y_{R'}=j)$,
$A:\Z \to \Z$ be an invertible linear map of features. 
Let the classifiers $h_{R'}(z) := h_R(z)$,
$h_{R''}(z):=Bh_{R'}(z)$, $h_{R'''}(z) := h_{R''}(A^{-1}(z))$.
Assuming $\ell$ is the cross-entropy loss, we have
\[
\E_{P_{R'''}(z,y)}[\ell(h_{R'''}(z),y)]
%\leq \E_{P_{R}(x,y)}[C^yl(h_{R}(x),y)] + H(\mathcal{Y}_{R''}|\mathcal{Y}_S;B)
\leq \underbrace{\E_{P_{R}(z,y)}[C(y)\ell(h_{R}(z),y)]}_{\text{\colorbox{limegreen}{Re-weighted reference loss}}}
+ \underbrace{{H}(\mathcal{Y}_{R''}|\mathcal{Y}_{R'})}_{\text{\colorbox{yelloworange}{Label mismatch}}}.
\]
\end{theorem}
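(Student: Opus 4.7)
The plan is to chain the three intermediate transformation lemmas so that the loss on $P_{R'''}$ is rewritten as the loss on $P_{R''}$, then on $P_{R'}$, then on $P_R$, picking up exactly one extra term at each step that is not loss-preserving. The structure of the bound makes it clear which term should appear at which step: the feature step should contribute nothing, the label step should contribute the conditional entropy $H(\Y_{R''}|\Y_{R'})$, and the class-prior step should convert the reference loss into its $C(y)$-weighted form.

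First, I would peel off the feature transformation using Lemma~\ref{lemma:feature_transform}. Since $A$ is an invertible linear map, $P_{R'''}$ is the push-forward of $P_{R''}$ under $A$, and the classifier is defined so that $h_{R'''}(A z) = h_{R''}(z)$. A change-of-variables argument then gives $\E_{P_{R'''}}[\ell(h_{R'''}(z),y)] = \E_{P_{R''}}[\ell(h_{R''}(z),y)]$, so this step adds nothing to the bound.

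Next, I would apply the label-transformation argument (Lemma~\ref{lemma:label_transform}). Conditional on $(z, y_{R'})$, the label $y_{R''}$ is drawn according to the column $B_{\cdot,y_{R'}}$, and the classifier is $h_{R''}(z) = B h_{R'}(z)$, so its $i$-th coordinate is $\sum_j B_{ij} h_{R'}(z)_j$. Because $B$ is entrywise nonnegative, the componentwise inequality $(B h_{R'}(z))_{y_{R''}} \geq B_{y_{R''},y_{R'}}\, h_{R'}(z)_{y_{R'}}$ holds, and taking the negative logarithm gives $\ell(h_{R''}(z), y_{R''}) \leq -\log B_{y_{R''},y_{R'}} + \ell(h_{R'}(z), y_{R'})$. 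Averaging this over $y_{R''} \sim B_{\cdot,y_{R'}}$ (and using that the columns of $B$ sum to $1$) yields
\[
\E_{y_{R''}\mid y_{R'}}\bigl[\ell(h_{R''}(z), y_{R''})\bigr] \;\leq\; \ell(h_{R'}(z), y_{R'}) \;-\; \sum_{i} B_{i,y_{R'}} \log B_{i,y_{R'}}.
\]
Taking expectations over $(z, y_{R'}) \sim P_{R'}$ and recognizing that the prior-weighted sum of the rightmost term is precisely ${H}(\mathcal{Y}_{R''}|\mathcal{Y}_{R'})$ as defined in Eq.~\eqref{eq:conditional_entropy} delivers the label-mismatch term in the stated form.

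Finally, I would invoke the prior-transformation lemma (Lemma~\ref{lemma:prior_transform}). Because $P_{R'}(z|y) = P_R(z|y)$ and $h_{R'} \equiv h_R$, the only difference between $\E_{P_{R'}}[\ell(h_R(z),y)]$ and $\E_{P_R}[\ell(h_R(z),y)]$ is the marginal over $y$, so a simple importance-reweighting identity gives $\E_{P_{R'}}[\ell(h_R(z),y)] = \E_{P_R}\!\left[\tfrac{P_{R'}(y)}{P_R(y)} \ell(h_R(z),y)\right] = \E_{P_R}[C(y)\,\ell(h_R(z),y)]$. Concatenating the three bounds produces exactly the inequality of Theorem~\ref{theorem:accuracy_transfer}. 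The main obstacle, and the only non-routine step, is the label transformation: one must pick the right pointwise lower bound on $(Bh_{R'}(z))_{y_{R''}}$ and be careful to keep the sum index $y_{R''}$ distinct from $y_{R'}$, so that after averaging the prior factor $P_{R'}(y_{R'})$ lands in the correct place to reproduce the conditional-entropy expression in Eq.~\eqref{eq:conditional_entropy} rather than some KL-type divergence with the wrong weighting.
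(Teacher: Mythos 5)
Your proposal is correct and follows the same route as the paper: an exact change-of-variables for the feature step (Lemma~\ref{lemma:feature_transform}), the pointwise bound $-\log\bigl(\sum_j B_{y'',j}h^j_{R'}(z)\bigr) \leq -\log(B_{y'',y'}) - \log(h^{y'}_{R'}(z))$ averaged against $B_{\cdot,y'}$ to produce the conditional entropy (Lemma~\ref{lemma:label_transform}), and the importance-reweighting identity for the prior step (Lemma~\ref{lemma:prior_transform}). The step you flag as the only non-routine one is indeed exactly how the paper derives the label-mismatch term.
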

Theorem~\ref{theorem:accuracy_transfer} provides an upper bound on the loss of the final transformed classifier/distribution in terms of the loss of the reference classifier/distribution.
%One implication of the first term, 
The \emph{re-weighted reference loss} shows that the performance of the transformed classifier on the new domain is linked to the label-wise re-weighted loss of the reference classifier on $R$. 
This implies that one can use only the relevant reference classes to contribute to the bound. 
The \emph{label mismatch} term shows that the performance of the distribution $R'''$ and $R$ depends on the conditional entropy $H(\mathcal{Y}_{R''}|\mathcal{Y}_{R'}; B)$ between the label distributions of the domain $R''$ and $R'$. 
A high value of $H$ implies that the labels of the reference task are unrelated 
%do not provide much information about the labels of the target 
leading to lower transferability, %, and the performance of the reference $R$ may not transfer to the distributions $R'''$. 
whereas a low $H$ implies higher transferability. %is low, the loss of the domain $R'''$ is close to that of the domain $R$.
Corollary~\ref{cor:accuracy_transfer} in App.~\ref{App:all_transforms} shows when the bound in Theorem~\ref{theorem:accuracy_transfer} becomes equality. 

%In particular, when the number of classes is the same between $R$ and $R'''$ and there is a deterministic mapping of the classes of the two domains, conditional entropy can be minimized to zero making the bound equality. 
% \begin{corollary} 
% \label{cor:accuracy_transfer}
% Let $e$ be one-hot encoding of the labels, $|Y_{R'''}|=|Y_S|$, $B:\Delta_{R'} \rightarrow \Delta_{R''}$ be a permutation matrix and $y_{R''} := \sigma(y_{R'}) := \arg\max_{y\in\Y_{R''}}(B(e(y_{R'})))_y$  then under the assumptions of Lemmas~\ref{lemma:prior_transform},~\ref{lemma:label_transform}, and ~\ref{lemma:feature_transform} we have 
% \(
% \E_{P_{R'''}(z,y)}[\ell(h_{R'''}(z),y)]
% = \E_{P_{R}(z,y)}[C(y)\ell(h_{R}(z),y)]
% \).
% \end{corollary}

\subsection{Distribution mismatch between \texorpdfstring{$P_{R'''}$ and $P_T$}{}}
\label{sec:distribution_mismatch}
%\subsection{Analysis of transferability}% of performance to downstream target tasks ($R \rightarrow R''' \rightarrow T)$}
%Bounding the residual loss between $R'''$ and $T$ through the worst-case loss}
%Here we show how the distribution $R'''$ and its classifier $h_{R'''}$ generated by transforming the reference distribution using the transforms described previously can be used to explain the transferability of performance after linear fine-tuning.
%To relate $R'''$ to the distribution of the downstream task $T$ lying in the domain $(\Z_T, \Y_T)$, we consider a transform of the reference distribution $R$ that generates $R'''$ lying in the same domain $T$ i.e. $(\Z_{R'''}, \Y_{R'''}) = (\Z_T, \Y_T)$.
%Based on this the transforms $A$ and $B$ have the following form $A:\Z_S \rightarrow \Z_T$ and $B:\Delta_S \rightarrow \Delta_T$. 
%The transform $C$ is such that the class prior distribution of $T$ can be obtained using $P_T(y) = B \cdot C$.
%With these appropriately chosen transforms, the problem of explaining the transferability to downstream tasks with different label/feature spaces has been reduced to the problem of domain adaptation.
After the three transformations, the transformed reference $P_{R'''}(z,y)$ can be compared with the target $P_T(z,y)$. 
However, these are only simple transformations and $P_{R'''}$ cannot be made identical to $P_T$ in general. 
This mismatch %between $P_{R'''}$ and $P_T$ 
can be measured by %a distribution divergence, in particular, 
the Wasserstein or Optimal Transport distance \citep{peyre2019computational,villani2009optimal}.
%Many prior works have provided analytical results using Wasserstein distance for the problem of learning under distribution shift (see App.~\ref{app:analysis_domain_adaptation}) %\citep{shen2018wasserstein,sehwag2021robust,alvarez2020geometric,mehra2022do,le2021lamda,sinha2017certifying}.
%\JH{Need to introduce Wasserstein distance, why is it useful, which based distance we are using, and what is Kantorovic-Rubinstein duality, etc !!!}
Since our goal is to match two joint distributions defined on $\Z \times \Y$ we use  
\begin{equation}
\label{eq:base_distance}
d((z, y), (z', y')) := \|z-z'\|_2 + \infty \cdot 1_{y\neq y'},
\end{equation}
with $z,z' \in \Z$ and  $y,y' \in \Y$  as our base distance~\citep{sinha2017certifying}  to define the (type-1) Wasserstein distance 
\begin{equation}
\label{eq:wasserstein_distance}
W_d(P,Q):=\inf_{\pi\in \Pi(P,Q)}\E_{((z,y),(z',y'))\sim \pi}[d((z,y),(z',y'))].
\end{equation}
Using Eq.~\ref{eq:base_distance}, the Wasserstein distance between the joint distributions 
%can be shown to be 
is the weighted sum of the Wasserstein distance between conditional distributions ($P(z|y)$) (Lemma~\ref{lemma:WD_same} in App.~\ref{App:proofs}).
\if0
is non-differentiable, so we use an approximation $d((z, y), (z', y')) := d_{features}(z, z') + \nu \cdot \hat{d}_{labels}(y, y')$ ($\nu=\mathrm{1E8}$ and enforces samples with the same labels to match across the two distributions)
\fi
%\JH{Lemma: Wass dist with this distance is the weighted some of Wass dist of cond dists.}
%\JH{Romethings wrong here?}
%Lemma~\ref{lemma:WD_same} 
Theorem~\ref{theorem:dist_shift} below explains the gap between the losses 
%of the transformed  $R'''$ and the target $T$ distributions 
due to the distribution mismatch. %by the transformed classifier $h_{R'''}$. 

\begin{assumption}
\label{assumption:lipschitz_loss}
1) The composition of the loss function and the classifier $\ell \circ h$ is a $\tau-$Lipschitz function w.r.t to $\|\cdot\|_2$ norm, %over $\Delta_{K}$ where $K$ is the number of classes, 
i.e., $|\ell(h(z), y) - \ell(h(z'), y)| \leq \tau \|z - z'\|_2$ for all $y\in\mathcal{\Y}$, $z, z' \in \Z$ where $h \in \mathcal{H}$. 
%\YZ{How can we assume this is true for all linear classfiers?}
2) $P_T(y) = P_{R'''}(y)$.
\end{assumption}
The assumption 2), can be satisfied since we have full control on the prior $P_{R'''}(y)$ via $B$ and $C$. 

\begin{theorem}
\label{theorem:dist_shift}
Let the distributions $T$ and $R'''$ be defined on the same domain $\Z \times \Y$ and assumption~\ref{assumption:lipschitz_loss} holds, then %for the base distance in Eq.~\ref{eq:base_distance},
\[
\E_{P_T(z,y)}[\ell(h(z), y)] - \E_{P_{R'''}(z,y)}[\ell(h(z), y)] \leq
\underbrace{\tau \;W_d(P_{R'''},P_T)}_{\text{\colorbox{salmon}{Distribution mismatch}}},
\] with $d$ as in Eq.~\ref{eq:base_distance}.
\end{theorem}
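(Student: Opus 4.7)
The plan is to reduce the statement to a standard coupling-based Wasserstein bound, with the key twist that the $+\infty \cdot 1_{y\neq y'}$ component of the base distance $d$ forces every finite-cost coupling to preserve labels almost surely. First I would use Assumption~\ref{assumption:lipschitz_loss}(2), namely $P_T(y) = P_{R'''}(y)$, to guarantee that the set of couplings $\Pi(P_{R'''}, P_T)$ contains at least one with $y = y'$ a.s.\ (for instance, by independently coupling the conditional $z$-marginals within each label class). Hence $W_d(P_{R'''}, P_T)$ is finite, and it suffices to take the infimum in Eq.~\ref{eq:wasserstein_distance} over label-preserving couplings only, because any coupling with $\pi(y \neq y') > 0$ has infinite transport cost.

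Next, I would fix an arbitrary coupling $\pi \in \Pi(P_{R'''}, P_T)$ with finite expected cost under $d$, so that $y = y'$ holds $\pi$-a.s. Writing the difference of expectations as a single expectation under $\pi$ gives
\[
\E_{P_T}[\ell(h(z), y)] - \E_{P_{R'''}}[\ell(h(z), y)] = \E_{\pi}\bigl[\ell(h(z'), y') - \ell(h(z), y)\bigr].
\]
Using $y = y'$ $\pi$-a.s.\ collapses the right-hand side to $\E_\pi[\ell(h(z'), y) - \ell(h(z), y)]$, at which point the $\tau$-Lipschitz assumption on $\ell \circ h$ (Assumption~\ref{assumption:lipschitz_loss}(1)) yields the pointwise bound $\ell(h(z'), y) - \ell(h(z), y) \leq \tau \|z - z'\|_2$. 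Taking expectations under $\pi$ gives $\tau \,\E_\pi \|z - z'\|_2$, and since $y = y'$ a.s., this equals $\tau \,\E_\pi[d((z,y),(z',y'))]$.

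Finally I would take the infimum over all label-preserving couplings. By the argument above this infimum equals $W_d(P_{R'''}, P_T)$, producing the bound $\tau W_d(P_{R'''}, P_T)$ as claimed.

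The main obstacle is essentially bookkeeping rather than a deep technical difficulty: I have to justify cleanly that restricting the Wasserstein infimum to label-preserving couplings loses nothing, and that such a coupling exists to begin with. Both points reduce to the label-marginal matching guaranteed by Assumption~\ref{assumption:lipschitz_loss}(2). Once that reduction is in place, the remaining steps are the standard Lipschitz-plus-coupling proof of the Kantorovich--Rubinstein duality bound, so no additional calculation is required.
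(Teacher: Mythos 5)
Your proposal is correct, and it reaches the bound by a genuinely different (though dual) route from the paper. You argue on the primal side: you observe that the $\infty\cdot 1_{y\neq y'}$ term in $d$ forces every finite-cost coupling in $\Pi(P_{R'''},P_T)$ to be label-preserving, that Assumption~\ref{assumption:lipschitz_loss}(2) guarantees at least one such coupling exists, and then you apply the $\tau$-Lipschitz property pointwise under an arbitrary finite-cost coupling before taking the infimum. The paper instead works on the dual side and class-conditionally: it first splits both expectations over labels (which is where it uses $P_T(y)=P_{R'''}(y)$), bounds each conditional gap by $\sup_{\ell'\circ h'\in\tau\text{-Lip}}$ and invokes Kantorovich--Rubinstein duality to identify that supremum with $\tau\,W_{\|\cdot\|_2}(P_T(z|y),P_{R'''}(z|y))$, and finally recombines via Lemma~\ref{lemma:WD_same}, which shows $W_d$ of the joints equals the prior-weighted sum of the conditional Wasserstein distances. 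Your version is more elementary and self-contained: it needs only the "easy" direction of duality implicit in the coupling definition and makes Lemma~\ref{lemma:WD_same} unnecessary for this theorem. The paper's version buys an explicit class-conditional decomposition of the distribution-mismatch term (and Lemma~\ref{lemma:WD_same} as a reusable structural fact), at the cost of invoking the full duality theorem. Both hinge on exactly the same two ingredients --- matched label priors and the infinite penalty on label mismatch --- so there is no gap in your argument.
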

Theorem~\ref{theorem:dist_shift} shows that when $\ell \circ h$ is $\tau-$Lipschitz
%loss\YZ{not just lost} function, 
then the performance gap between the $R'''$ and $T$ is bounded by the type-1 Wasserstein distance between the two distributions. 
%This result is in line with prior works that provided analytical results on performance transfer for the domain adaptation. %(\citep{ben2010theory,shen2018wasserstein,le2021lamda,courty2017joint}).
%We note that the value of $\tau$ would be small in practice for a well-regularized model and recent works have also demonstrated techniques/training methodologies that can control $\tau$ for models such as neural networks without suffering a loss in accuracy, thereby making our bound useful in practice.
%While many popular loss functions such as the logistic loss, 0-1 loss satisfy the assumption~\ref{assumption:lipschitz_loss}, the popular cross-entropy loss function used during linear fine-tuning does not satisfy it.
The Lipschitz coefficient of the composition can be bounded by $\tau$, by penalizing the gradient norm w.r.t $z$ at training time. Thus, for linear fine-tuning, we train the classifiers $h_R$ and $h_T$ with an additional gradient norm penalty $\max\{0, \|\nabla_z \ell(h(z),y)\|_2-\tau\}$
%\YZ{square?} 
to make them conform to the Lipschitz assumption (see App.~\ref{app:lipschitz}). Note that constraining the Lipschitz constant restricts the hypothesis class. The trade-off between the Lipschitz constant and the performance of $h$ is empirically evaluated in App.~\ref{app:tau_training}. %with results of additional experiments 

%(Sec.~\ref{fig:empirical_vs_predicted_transferability}). %s are trained without regularization.
%Thus, to demonstrate the effectiveness of our bound, we bound the Lipschitz coefficient of the  (see Sec.~\ref{sec:empirical_accuracy_transfer}).
%For such classifiers, the results from 

\subsection{Bounding transferability using task-relatedness}
\label{sec:final_bound}
Here, we combine the results obtained in Theorem~\ref{theorem:accuracy_transfer} and Theorem~\ref{theorem:dist_shift}.
The final bound proposed in Theorem~\ref{theorem:final_bound} is one of our main contributions which explains transferability as a sum of three interpretable and measurable gaps.  %which can be numerically approximated as will be shown in Sec.~\ref{sec:empirical_analysis}. % to downstream target tasks as follows 
%\begin{assumption}
%Assumption 3. 
%The loss $l$ is log loss, $h$ is softmax, and $\Z$ is compact. 
%\end{assumption}
\begin{theorem}
\label{theorem:final_bound}
%\textbf{Theorem 3.}  
%\AM{CE is already Lipschitz so we don't need Compactness}
%Let the space $\Z$ is compact, the loss 
Let $\ell$ be the cross-entropy loss, 
%and $h:\Z \rightarrow \Delta$ produces softmax outputs 
then under assumptions of Theorems~\ref{theorem:accuracy_transfer} and~\ref{theorem:dist_shift}, 
$\E_{P_{T}(z,y)}[\ell(h_{T}(z),y)] \leq 
%\begin{eqnarray*}
%\small
\underbrace{\E_{P_{R}(z,y)}[C(y)\ell(h_{R}(z),y)]}_{\text{\colorbox{limegreen}{Re-weighted reference loss}}} 
+ \underbrace{{H}(\mathcal{Y}_{R''}|\mathcal{Y}_{R'})}_{\text{\colorbox{yelloworange}{Label mismatch}}} 
+ \underbrace{\tau \;W_d(P_{R'''},P_T).}_{\text{\colorbox{salmon}{Distribution mismatch}}}$
%\end{eqnarray*}
\end{theorem}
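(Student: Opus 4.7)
The plan is to chain Theorems~\ref{theorem:accuracy_transfer} and~\ref{theorem:dist_shift} through the composite classifier $h_{R'''}$, using the optimality of $h_T$ on the target task as the bridge. Concretely, I would first invoke the definition $h_T = \arg\min_{h \in \mathcal{H}} \E_{P_T}[\ell(h(z),y)]$: since the transformed classifier $h_{R'''}(z) = B\,h_R(A^{-1}(z))$ is built by composing the linear reference classifier $h_R$ with the invertible linear map $A^{-1}$ and the stochastic matrix $B$, it still lies in the linear hypothesis class $\mathcal{H}$ (and, by construction of the training procedure described in Sec.~\ref{sec:distribution_mismatch}, inherits a $\tau$-Lipschitz composition with $\ell$). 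Hence
\[
\E_{P_T(z,y)}[\ell(h_T(z),y)] \;\leq\; \E_{P_T(z,y)}[\ell(h_{R'''}(z),y)].
\]

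The second step is to apply Theorem~\ref{theorem:dist_shift} to the right-hand side with $h = h_{R'''}$. Because Assumption~\ref{assumption:lipschitz_loss} is in force — part (1) via the gradient-norm penalty used when training $h_R$, and part (2) because we have free control over the marginal $P_{R'''}(y)$ through the prior-transformation vector $C$ and the label-transformation matrix $B$ — we obtain
\[
\E_{P_T(z,y)}[\ell(h_{R'''}(z),y)] - \E_{P_{R'''}(z,y)}[\ell(h_{R'''}(z),y)] \;\leq\; \tau\, W_d(P_{R'''}, P_T),
\]
which rearranges into a bound on $\E_{P_T}[\ell(h_{R'''}(z),y)]$ by $\E_{P_{R'''}}[\ell(h_{R'''}(z),y)]$ plus the distribution-mismatch term.

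The third step is to apply Theorem~\ref{theorem:accuracy_transfer} directly to bound $\E_{P_{R'''}(z,y)}[\ell(h_{R'''}(z),y)]$ by the re-weighted reference loss $\E_{P_R}[C(y)\ell(h_R(z),y)]$ plus the label-mismatch term $H(\mathcal{Y}_{R''}\mid\mathcal{Y}_{R'})$. Substituting this into the inequality from the previous paragraph and then into the optimality inequality from the first step yields the claimed three-term upper bound on the transferability $\E_{P_T}[\ell(h_T(z),y)]$.

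The only real obstacle is a bookkeeping one: ensuring that $h_{R'''}$ is a legitimate element of $\mathcal{H}$ so that the optimality step and the Lipschitz hypothesis of Theorem~\ref{theorem:dist_shift} both apply to it simultaneously. This is handled by the construction — linearity is preserved by $A$ being an invertible linear map and $B$ being a fixed stochastic matrix, and the Lipschitz constraint is enforced at reference-side training via the gradient-norm penalty described in Sec.~\ref{sec:distribution_mismatch} — so no extra argument beyond this verification is needed, and the rest is algebraic chaining.
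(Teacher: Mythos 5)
Your proposal is correct and follows exactly the same three-step chain as the paper's proof: optimality of $h_T$ to replace it with $h_{R'''}$ under $P_T$, then Theorem~\ref{theorem:dist_shift} to move from $P_T$ to $P_{R'''}$, then Theorem~\ref{theorem:accuracy_transfer} to unwind the transformations back to the re-weighted reference loss and label-mismatch terms. Your added remark about verifying that $\ell \circ h_{R'''}$ remains $\tau$-Lipschitz is the same caveat the paper notes immediately after its proof.
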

The theorem shows that transferability can be decomposed into the loss incurred while transforming the class prior distribution, label space, and feature space of the reference distribution (first two terms) and the residual distance between the distribution obtained after transformations and the actual target distribution (last term). Based on the terms in the upper bound we define task-relatedness as follows.
\begin{definition}
(Task-relatedness). The relatedness between a target and a reference task is defined as
$\E_{P_{R}(z,y)}[C(y)\ell(h_{R}(z),y)] + {H}(\mathcal{Y}_{R''}|\mathcal{Y}_{R'}) + \tau \;W_d(P_{R'''},P_T)$.
\end{definition}
% -\JH{Why change the order of terms in the def?} \AM{changed}
%{\bf The sum of these three terms is referred to as task-relatedness}. 
A smaller value of the task-relatedness measure implies higher relatedness of the reference and the target tasks. 
%Here the first inequality follows from the difference between the classifier $h_T$ which is the optimal classifier trained on the target task (linear fine-tuning) and the evaluation of the transformed reference classifier on the target task.
%The second inequality follows from the Theorem~\ref{theorem:dist_shift} where the difference is bounded by the type-1 Wasserstein distance between the distributions $R'''$ and $T$, weighted by the factor $\tau$. The last inequality follows from the Theroem~\ref{theorem:accuracy_transfer}, where the gap between the losses is incurred due to the transform of the reference distribution. This is illustrated in Fig.~\ref{fig:overview_bounds}.
In particular, when the target task is a transformation of the reference task then there exist transformations $A, B,$ and $C$ such that the distribution $R'''$ perfectly matches the distribution of the target task (i.e., $W_d(P_{R'''},P_T)=0$).
%In that case the inequalites~\ref{eq:classifier_diff} and~\ref{eq:dist_shift} become tight.\JH{Need a justification}
Moreover, when labels are deterministically related (Corollary~\ref{cor:accuracy_transfer}) our bound becomes an equality. %, implying that reference task loss perfectly explains the target loss. % and the performance on the target downstream task is the same as the performance on the reference task.
%Thus, our bounds admit intuitive decomposition of the transferability which can be numerically estimated as will be shown in Sec.~.

Lastly, while we presented an analysis for linear fine-tuning here (for simplicity of presentation), our bounds hold for non-linear classifiers and non-linear feature transformations as well %and we discuss these extensions in
(see App.~\ref{app:extenstion_to_non_linear}).

\begin{algorithm}[t] 
\caption{Minimization of the bound in Theorem~\ref{theorem:final_bound}} 
\label{alg:bound_estimation}
{\bf Input}: Reference task samples and labels ($Z_{R}, Y_{R}$), Target task samples ($Z_{T}$), Target task labels ($Y_{T}$) (optional).\\
{\bf Output}: Estimate of task-relatedness using the learned transformations $A,\Bar{A},B,D$. \\
{\bf Init:} $A:=\Bar{A}:=\mathbb{I}$, $D:=P_R(y)$, random $B \in\mathbb{R}^{K_T \times K_R}$\\
%{\bf Init:}$A:=\Bar{A}:=I$,$D:=P_R(y)$, $B:$rand%\AM{single line} %\in\mathbb{R}^{K_T \times K_R}$
\begin{algorithmic}[1]
\STATE{Randomly sample $n_{R}$ points $(z^i_{R},y^i_{R}) \sim (Z_{R}, Y_{R})$ as per the class prior $D$.}
\IF{$Y_T$ is available}
\STATE{Randomly sample $n_T$ points $(z^j_{T},y^j_{T}) \sim (Z_{T},Y_{T})$.}
\ELSE
\STATE{Randomly sample $n_T$ points $(z^j_{T}) \sim (Z_{T})$.}
\STATE{\# Compute pseudo-labels for the target samples $z_T$.}
\STATE{$y_T^j = \arg \max_{y \in \Y_T} Bh_R(A^{-1}z_T) \; \mathrm{for} \; j = 1, \cdots, n_T$.}
\ENDIF
\STATE{Compute $(z^i_{R'''},y^i_{R'''}) = (Az^i_{R},\; \arg \max_{y} Be(y^i_{R}))$, for $i=1,\cdots, n_{R}$.}
\STATE{Assign $\Y_{R'} := \Y_{R}$ and $\Y_{R''} := \Y_{T}$.}
\STATE{Compute the optimal coupling $\pi^\ast$ between the distributions $R'''$ and $T$ by minimizing $W_d(P_{R'''},P_T)$, i.e.,
\begin{eqnarray*}
%\small
    \min_{\pi\in\Pi(P_{R'''},P_T)} && \sum_{i,j}\pi_{ij} \tilde{d}((z^i_{R'''},y^i_{R'''}),(z^j_T,y^j_T)) \\
    \;\mathrm{s.t.} && \sum_j \pi_{ij}=\frac{1}{n_{R}}\;\forall i,\; \sum_i \pi_{ij}=\frac{1}{n_{T}}\;\forall j. 
\end{eqnarray*}
}
\STATE{Using $\pi^\ast$, solve for $A,\bar{A},B,D$ using mini-batch SGD
\begin{eqnarray*}
%\small
\min_{A,\Bar{A},B,D} 
&&\sum_{i,j} \pi^*_{i,j}\big[\tilde{d}((z^i_{R'''},y^i_{R'''}),(z^j_T,y^j_T)) \big] \\
&+& \frac{1}{n_{R}}\sum_{i} \frac{D(y^i)}{P_R(y^i)}\ell(h_{R}(z_R^i),y^i)
+ {H}(\mathcal{Y}_{R''}|\mathcal{Y}_{R'}) \\
&+& \|P_T(y) - BD\|_2^2
+ (\|A\Bar{A}-I\|_F + \|\Bar{A}A-I\|_F).
\end{eqnarray*}
}
\STATE{Repeat 1 - 12 until convergence.}
\end{algorithmic}
\end{algorithm}

\begin{figure*}[tb]
  \centering{
  \includegraphics[width=0.95\textwidth]{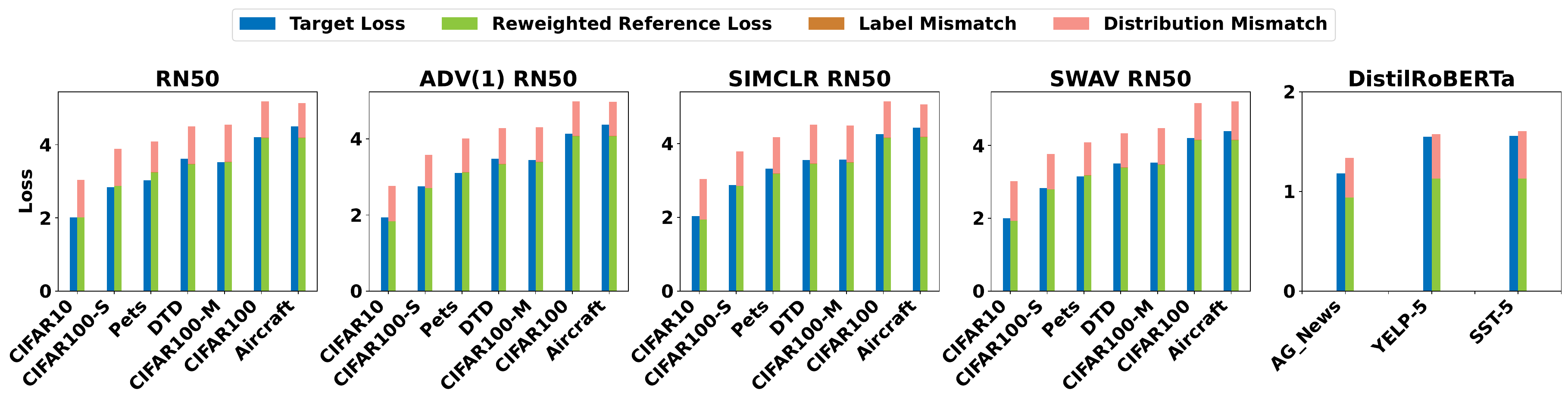}
  }
  \caption{Task-relatedness (decomposed into its components) produces a small gap to transferability (blue bars). % for SOTA pre-trained models. 
  As the task-relatedness between the reference  (ImageNet (for CV), DBPedia (for NLP)), and the target tasks (x-axis) increases, the transferability improves. (Note: the label mismatch term is zero in our figures as $B$ is fixed to a sparse matrix, see Sec.~\ref{sec:algorithms_task_transfer}.) 
    } 
  \label{fig:empirical_vs_predicted_transferability}
\end{figure*}

%%%%%%%%%%%%%%%%%%%%%%%%%%%%%%%%%%%%%%%%%%%%%%%%%%%%%%%%%%%%%%%%%%%%%%%%%%%%%%%%%%%%%%%%%%%%%%%%%%%%%%%%%
\subsection{Estimating task-relatedness}% for task transfer by learning $A, B,$ and $C$}% $A, B$ and $C$}
%\section{Learning the parameters of the transform for explaining accuracy transfer}
\label{sec:algorithms_task_transfer}
%%%%%%%%%%%%%%%%%%%%%%%%%%%%%%%%%%%%%%%%%%%%%%%%%%%%%%%%%%%%%%%%%%%%%%%%%%%%%%%%%%%%%%%%%%%%%%%%%%%%%%%%%
The optimization problem for learning the transformations $A, B$, and $C$ to compute task-relatedness in Theorem~\ref{theorem:final_bound} is presented below. 
We use two new variables: inverse of the transformation $A$, denoted by $\Bar{A}:=A^{-1}$ and a transformed reference prior distribution denoted by $D(y):=C(y)P_R(y)$.
% -\JH{Remove?}
\begin{equation}
\label{Eq:optimization}
\begin{aligned}
%\small
 \min_{A,\Bar{A},B,D} \;  & \E_{P_{R}(z,y)}\left[\frac{D(y)}{P_R(y)}\ell(h_{R}(z),y)\right] + 
 {H}(\mathcal{Y}_{R''}|\mathcal{Y}_{R'};B,D) + \tau W_d(P_{R'''},P_T; A, B) \\
  & \mathrm{s.t.} \;\;\;  A\Bar{A} = \Bar{A}A = I, \;\; P_T(y) = BD, \;\; \sum_iB_{ij} = 1 \;\; \forall j, 
\;\; \sum_{y \in \Y_R}D(y) = 1,  \\
& B_{ij}\in[0, 1] \;\; \forall i, j, \;\; \mathrm{and} \;\;  D_{i}\in[0, 1] \;\; \forall i.
\end{aligned}
\end{equation}

%We solve Eq.~\ref{Eq:optimization} using the training data of the two tasks and use the learned transformations to compute the bound on the test data from the two tasks. % we measure the upper bound on transferability. % using the test data from the source task. 
Alg.~\ref{alg:bound_estimation} shows how we solve Eq.~\ref{Eq:optimization} (see App.~\ref{App:experimental_details} for additional details of the algorithm).
Fig.~\ref{fig:optimization_convergence} in App.~\ref{app:opt_convergence} shows how the upper bound is reduced as the optimization proceeds. %the transformations.
Computationally, a single epoch of Alg.~\ref{alg:bound_estimation} takes a mere 0.17 seconds on our hardware for transfer from ImageNet to Pets for the ResNet-18 model (we ran Alg.~\ref{alg:bound_estimation} for 2000 epochs).
In App.~\ref{sec:optimization_effectiveness}, we show the effectiveness of learning the transformations using Alg.~\ref{alg:bound_estimation} on small-scale transfer tasks. 
%We use different numbers of randomly selected and semantically related classes from ImageNet as reference tasks. 
Our results show that 
%when the reference task contains more classes than the target task, Alg.~\ref{alg:bound_estimation} changes the prior $D$ to keep the same number of classes in the reference as in the target task.
%Moreover, 
when the reference task has classes semantically related to the target task, Alg.~\ref{alg:bound_estimation} learns transformations that achieve the smallest gap to transferability. 
However, since finding data semantically related to the target task may not always be possible
%Based on these insights,
we choose a reference task with the same number of classes as the target and fix that matrix $B$ to a random permutation of identity (making the label mismatch term zero) and $D$ to the prior of the reference task, learning only the transformation $A$, in our experiments. 
%Our results, in the next section show, the effectiveness of our Alg.~\ref{alg:bound_estimation} at estimating task-relatedness in this setting. 

%\AM{Mention learnedA here}.

\if0
\begin{tcolorbox}
\textbf{Algorithm for Task Transfer Analysis}
\vspace{0.1in}
\\
\textbf{Input:} Samples from the reference $(\Z_{R},\Y_{R})$ and the target $(\Z_{T},\Y_{T})$ tasks\\
\textbf{Main:}
\begin{enumerate}[leftmargin=0.5cm]
    \item Sample $n_{R}$ points $(z^i_{R},y^i_{R}) \sim (\Z_{R},\Y_{R})$ from the reference task based on the class prior $D$ along with a random sample of $n_T$ points $(z^i_{T},y^i_{T}) \sim (\Z_{T},\Y_{T})$ from the target.
    \item Compute $(z^i_{R'''},y^i_{R'''}) = (Az^i_{R},\; \arg \max_{y} Be(y^i_{R}))$, for $i=1,\cdots, n_{R}$.
    \item Assign $\Y_{R'} = \Y_{R}$ and $\Y_{R''} = \Y_{T}$.
    \item Compute the optimal coupling $\pi^\ast$ between the distributions $R'''$ and $T$ using type-1 Wasserstein distance via network simplex flow algorithm from POT \citep{flamary2021pot}.
    %Using $d((x_{R'''},y_{S'''}),(x_{T},y_{T})) = d_{features}(x_{S'''},x_T) + d_{labels}(e(y_{S'''}), e(y_T))$ and $D$ as the prior probability for the samples in $P_{S'''}$, compute $W_d(P_{S'''}, P_T)$ and the associated optimal coupling $\pi^*$ i.e.,
    \begin{eqnarray*}
    \min_{\pi\in\Pi(P_{R'''},P_T)} \sum_{i,j}\pi_{ij}d((z^i_{R'''},y^i_{R'''}),(z^j_T,y^j_T))%\\  
    \;\mathrm{s.t.}\;\sum_j \pi_{ij}=\frac{1}{n_{R}}\;\forall i,\; \sum_i \pi_{ij}=\frac{1}{n_{T}}\;\forall j. 
    %&\mathrm{\;s.t.\;}& 
    %\pi1 = [\frac{1}{n_{R'''}}]_{i=1}^{n_{R'''}}
    %\mathrm{\;and\;} 
    %\pi^T1 = [\frac{1}{n_{T}}]_{j=1}^{n_{T}}.
    \end{eqnarray*}
    \item %Using the coupling $\pi^*$ obtained in the previous step, we solve the following optimization problem where the first three terms correspond to the problem in Eq.~\ref{Eq:optimization} 
    Using the coupling $\pi^\ast$, solve for $A,\bar{A},B,D$ using mini-batch SGD
    \begin{eqnarray*}
    \min_{A,\Bar{A},B,D} \frac{1}{n_{R}}\sum_{i} \frac{D(y^i)}{P_R(y^i)}\ell(h_{R}(z_R^i),y^i)
    + {H}(\mathcal{Y}_{R''}|\mathcal{Y}_{R'}) + \sum_{i,j} \pi^*_{i,j}\big[\tilde{d}((z^i_{R'''},y^i_{R'''}),(z^j_T,y^j_T)) \big]
    \end{eqnarray*}
    \begin{eqnarray*}
    + \; \|P_T(y) - BD\|_2^2 
    + (\|A\Bar{A}-I\|_F + \|\Bar{A}A-I\|_F).
    %+ (\|\nabla_{x_{R'''}} l(h_{R'''}(x), y)\| - 1)_2^2
    %+ \sum_{y\in\Y_R}\left[-D(y)\log(\frac{C(y)}{\sum_{j \in \Y_R}C(j)})\right] 
    %\\\\
    %\mathrm{\;s.t.\;}
    \end{eqnarray*}
    %\JH{Introduced $\tilde{d}$ }
    %\item Classify the samples using $h_{R'''}(x)$. %= B\{\frac{C_i h_S^i(A^{-1}(x-b))}{\sum_{j\in\Y_S} C_j h_S^j(A^{-1}(x-b))}\}_{i=1}^{|\Y_S|}$. 
    %where $\mathcal{Z}_{s,t}=\{(i,j)|y_S^i=s\mathrm{\;and\;} y_T^j=t\}
    
    %= \min_{A,b,C} \sum_{i,j}\pi^*_{i,j} \left(d_{features}(x^i_S,Ax^j_T+b) + d_{labels}(y^i_S, y^j_T)\right)\\
    %= \min_{A,b,C} \sum_{i,j}\pi^*_{i,j} \left(d_{features}(x^i_S,Ax^j_T+b) + \sum_{k,l}(\rho^*_{y_S^i,y_T^j})_{kl} d_{features}(x^k_S,Ax^l_T+b)\right) \\
    %= \min_{A,b,C} \sum_{i,j}\pi^*_{i,j} \left(d_{features}(x^i_S,Ax^j_T+b)\right) + \sum_{i,j}\pi^*_{i,j} \left(\sum_{k,l}(\rho^*_{y_S^i,y_T^j})_{kl} d_{features}(x^k_S,Ax^l_T+b)\right)\\
    %= \min_{A,b,C} \sum_{i,j}\pi^*_{i,j} d_{features}(x^i_S,Ax^j_T+b) + \sum_{s\in\mathcal{Y}_S, t\in\mathcal{Y}_T}\pi^*_{\mathcal{Z}_{s,t}}\sum_{k,l}(\rho^*_{s,t})_{kl} d_{features}(x^k_S,Ax^l_T+b)
    %$, where $\mathcal{Z}_{s,t}=\{(i,j)|y_S^i=s \mathrm{\;and\;} y_T^j=t\}.
    %\item \begin{eqnarray*}
    %    &&\sup_{P:W_d(P, S''')\leq \rho := W_d(T, S''')}\E_{(x,y)\sim P} [\ell(h_{R'''}(x), y)] \\
    %&=&\inf_{\gamma\geq0}\left[\gamma\rho^2 + \E_{(x_0,y_0)\sim S'''}[\sup_x \ell(h_{R'''}(x)) - \gamma\|x-x_0\|^2_2]\right]
    %\end{eqnarray*}
    \item Repeat 1 - 4 until convergence.
\end{enumerate}
\textbf{Output:} Bound with the optimal transformations $A,\Bar{A},B,D$, and  $W_d(P_{R'''}, P_T)$.
\end{tcolorbox}
\fi

%%%%%%%%%%%%%%%%%%%%%%%%%%%%%%%%%%%%%%%%%%%%%%%%%%%%%%%%%%%%
\section{Empirical Analysis}
\label{sec:empirical_analysis}
%%%%%%%%%%%%%%%%%%%%%%%%%%%%%%%%%%%%%%%%%%%%%%%%%%%%%%%%%%%%
Here, we empirically demonstrate the effectiveness of task-relatedness in explaining transferability in various settings. 
%the proposed bound in Theorem~\ref{theorem:final_bound} at achieving a small gap to actual transferability after linear fine-tuning followed by demonstrating task-relateness achieves a high correlation with transferability. 
%Then, we show the effectiveness of estimating task-relatedness without using labels of the target data followed by showing the effectiveness of the task-relatedness for the source model selection problem. 
%Then, we show how transferability is affected by the relatedness of the source and target tasks followed by demonstrating the utility of the proposed bound in estimating transferability in practical scenarios such as the absence of source task data/classifier, and the absence of labels of the target task.
%Finally, we show the effectiveness of the bound for the problem of pre-trained model selection problem.
We present additional results in App.~\ref{App:additional_experiments} and dataset/experimental details in App.~\ref{App:experimental_details}.
Our codes can be found at \url{https://github.com/akshaymehra24/TaskTransferAnalysis}.

%\subsection{Task-relatedness and transferability are correlated}
\subsection{Task-relatedness achieves a small gap to actual transferability}
\label{sec:empirical_vs_predicted}
%\JH{``achieves a small gap to'' to ``tightly upperbounds''? But how's it different from ``accurately predicts''?}
\emph{\ul{Task-relatedness tightly upper bounds transferability %and is highly correlated with transferability 
across various architectures, pretraining methods, and datasets.}}
%\JH{\emph{The proposed related measure can predict transferability across various architectures, pretraining methods and datasets.}}\AM{done}
%Here we show the effectiveness of computing task-relatedness using the transformations learned via Alg.~\ref{alg:bound_estimation} at achieving a small gap to transferability (i.e., the target loss computed after linear fine-tuning with gradient norm penalty). 
%and achieving a high correlation with accuracy after linear fine-tuning.
We demonstrate this by using various pre-trained models with architectures such as Vision Transformers (ViT) \citep{dosovitskiy2021image}, ResNet-18/50/101/152 \citep{he2015deep}, DistilRoBERTa \cite{liu2019roberta} trained with various pretraining methods including supervised training, adversarial training \citep{salman2020adversarially}, SimCLR \citep{chen2020simple}, MoCo \citep{he2020momentum}, SwAV \citep{caron2020unsupervised}, and MAE \citep{he2022masked}. 
We also consider a wide range of target datasets including, CIFAR10/100, Aircraft, Pets, DTD, AG-News, Yelp-5, and SST-5 whose details are in 
%We provide the details of these models, training methods, and datasets in 
App.~\ref{App:experimental_details}.
For this experiment, we fix the reference task to be ImageNet \citep{deng2009imagenet} for image classification and to DBPedia for sentence classification tasks and use Alg.~\ref{alg:bound_estimation} to estimate task-relatedness.
The results in Fig.~\ref{fig:empirical_vs_predicted_transferability} and~\ref{fig:empirical_vs_predicted_transferability_all} (in the Appendix) show that %regardless of the pretraining algorithm, the model architecture of the pre-trained classifier, and the target dataset, 
our bound achieves a small gap to actual transferability. 
As the task-relatedness between the reference and the target tasks improves, transferability also improves showing that task-relatedness and transferability are strongly correlated. 
%Our results also show that the gap between task-relatedness and transferability is smaller for architectures with smaller representation space (e.g., ResNet-18/Vit-B16 with 512/768 vs. ResNet-50/101 with 2048 dimensional representation space).
%This is attributed to the difficulty of estimating and consequently minimizing the Wasserstein distance in higher dimensional representation space.
\emph{\ul{Task-relatedness is also strongly correlated with the accuracy of the end-to-end fine-tuned classifiers on the target task.}}
In Fig.~\ref{fig:fft_accuracy_vs_bound} (in the Appendix), we show high Pearson correlation coefficients ($\geq-0.57$) for task-relatedness and accuracy after fully fine-tuning various pre-trained encoders using data from various target tasks.
%We show that bound consistently achieves a high correlation with the accuracy as well showing that our analysis explains TL for SOTA pre-trained models.
%In FiApp.~\ref{app:predicted_transfer_nlp}, we present results on the sentence classification problem. 
%Even there our bound achieves a small gap in transferability. 
%This shows that our analysis can explain TL for SOTA pre-trained classifiers.

\begin{figure}
%\begin{wrapfigure}[15]{r}{0.62\textwidth}
  %\centering{\includegraphics[width=0.21\textwidth]{}}
  \subfigure[]
  {\centering{\includegraphics[width=0.24\textwidth]{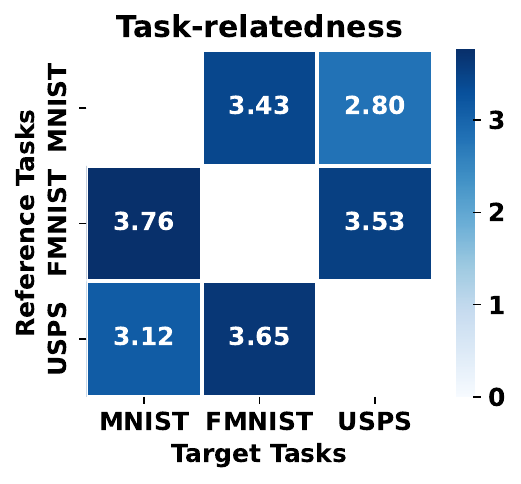}}
  \centering{\includegraphics[width=0.24\textwidth]{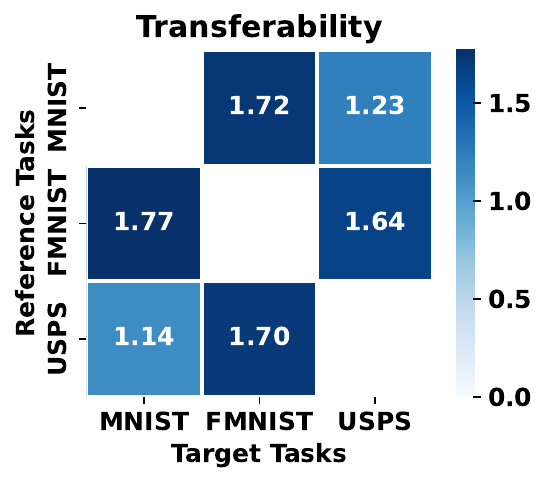}}}
  \subfigure[]
  {\includegraphics[width=0.48\textwidth]{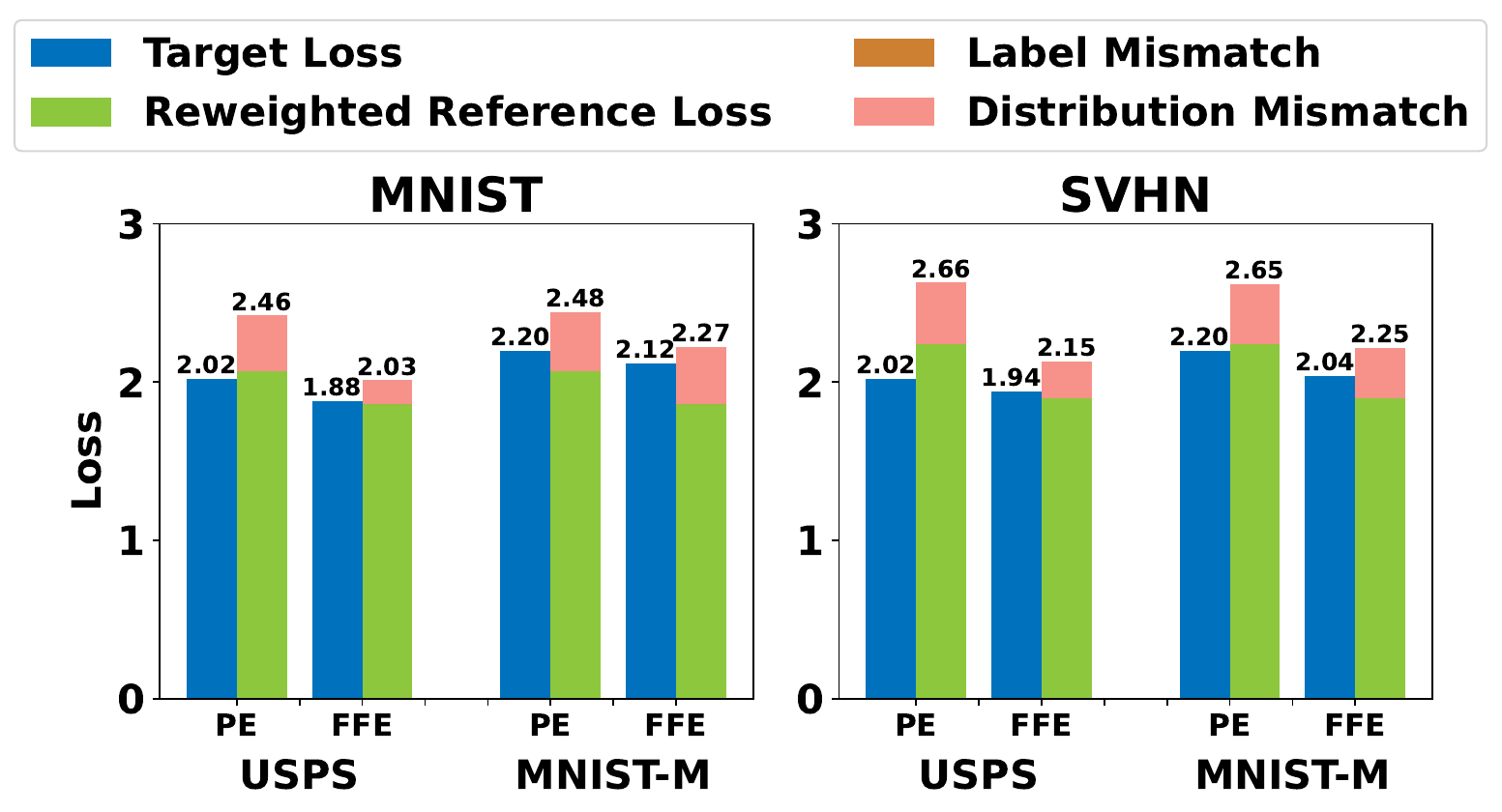}}
  \vspace{-0.3cm}
  \caption{
  %High task-relateness, implies higher transferability. 
  %Task relatedness measured by $W_d(P_{R'''}, P_T)$ correlates strongly with transferability (i.e., target loss). 
  (a) Task-relatedness and transferability are highly correlated across various reference-target pairs.
  %For the reference task of MNIST, tasks such as USPS that are semantically more related, transfer better than unrelated tasks such as FMNIST.
  (b) Improving the transferability of an encoder on a reference task (in the plot title) leads to improved transferability of all related target tasks (x-axis). 
  (e.g., compared to the original pre-trained CLIP encoder (PE), a end-to-end fine-tuned CLIP encoder (FFE) on the reference task achieves higher transferability to all related tasks.)
  }
  \label{fig:task_relatedness}
%\end{wrapfigure}
\end{figure}

\subsection{Effect of the reference task on task-relatedness}
\label{sec:source_relatedness}
\emph{\ul{Highly related reference--target task pairs, based on task-relatedness, achieve higher transferability coinciding with the semantic relatedness between tasks.}}
To understand how a reference task affects task-relatedness and eventually transferability, we consider two experiments using convolutional and CLIP-trained models with various character recognition tasks such as MNIST, Fashion-MNIST (FMNIST), SVHN, MNIST-M, and USPS. 
Of these datasets, SVHN and MNIST-M contain colored images while the rest contain gray-scale images. 
In the first experiment, we train convolutional models on MNIST, FMNIST, and USPS and measure pairwise transferability. 
Here we use the reference task to be the same task as that used for training the models. 
%To explain transferability we also measure task-relatedness using the reference task as the one used for training the models. 
%task-relateness affects transferability we consider a setup with convolutional neural networks trained using various character recognition tasks such as MNIST, Fashion-MNIST (FMNIST) \citep{xiao2017fashion}, and USPS.
%We use these encoders to compute the pairwise transferability and the Wasserstein distances between the transformed source and target tasks.
The results in Fig.~\ref{fig:task_relatedness}(a) show that transferability to those target tasks is higher for which task-relatedness metric's value %the value of the bound from Theorem~\ref{theorem:final_bound}, 
is smaller.
Specifically, USPS achieves the best transferability ($1.23$) and the smallest task-relatedness ($2.80$) when the reference task is MNIST. 
This is attributed to both datasets containing gray-scale images of digits.
On the other hand, when the reference task is unrelated to the target task i.e., the task-relatedness value is high, transferability suffers, e.g., when the reference task is MNIST and the target task is FMNIST.
Results in App.~\ref{app:task_relatedness_nlp} show similar results for the sentence classification task. %s in App.~\ref{app:task_relatedness_nlp}.
\emph{\ul{The gap between task-relatedness and transferability is smaller when a reference task performs well with a given encoder. %achieves task-relatedness with a smaller gap to transferability.
}}
%Next, we consider how a reference task affects transferability for a pre-trained CLIP (Vit B32 \cite{dosovitskiy2021image}) model. 
Here we use MNIST and SVHN as two reference tasks and compute the task-relatedness and transferability with USPS and MNIST-M as target tasks, using CLIP (Vit B32) model. 
A linear classifier trained on top of the embeddings from the CLIP model achieves $\approx$98\% accuracy on MNIST but only $\approx$61\% accuracy for SVHN.
Due to this, transferability (USPS:2.02, MNIST-M:2.20) explained using task-relatedness with MNIST as the reference task (USPS:{\bf2.46}, MNIST-M:{\bf2.48}) is better than that computed using SVHN (USPS:2.66, MNIST-M:2.65) as the reference, even though MNIST-M is intuitively more similar to SVHN (as both contain colored images of digits). 
This is evident from the results of PE (Pre-trained Encoder) in Fig.~\ref{fig:task_relatedness}(b). 
%our bound value for both USPS and MNIST-M target tasks is better when MNIST is used as the reference task.
%Thus, a well-performing reference task explains transferability better by achieving a smaller bound value. 
%This is reasonable since our analysis uses the classifier for the reference task as a proxy for the classifier for the target task. 
%If the reference classifier performs poorly, the bound value will be larger (implying poor task-relatedness), leading to an increased gap between the actual transferability and our bound.

\emph{\ul{Improving the performance of an encoder on a reference task improves transferability to other related (potentially unseen) tasks.}}
%This also suggests that improving the performance of the reference task by updating the encoder, should yield a better bound value for all related tasks. 
%This is indeed the case. 
To show this we fully fine-tune the CLIP encoder on MNIST and SVHN tasks, increasing the accuracy of the classifiers for both MNIST and SVHN to 99\% and 95\%, respectively.
Using the representations from these new encoders, we find that the transferability of both related target tasks improves along with task-relatedness (see FFE results in Fig.~\ref{fig:task_relatedness}(b)).  
Here, we see that task-relatedness for MNIST-M and USPS is the best when the reference task is SVHN and MNIST, respectively, aligning with our intuition of semantic relatedness between these tasks.
This also suggests that transferability on other related tasks can be improved by fully fine-tuning the encoder on these reference tasks.
Thus, in scenarios where target tasks are private (such as proprietary Chest X-rays), an encoder trained to work well on related tasks (such as publicly available Chest X-rays) is bound to achieve good transferability.

\subsection{Task-relatedness for end-to-end transferability estimation}
\label{sec:sbte_evaluation}
In this section, we show an efficient way of computing task-relatedness which enables its use for estimating transferability after end-to-end fine-tuning. 
While Alg.~\ref{alg:bound_estimation}, accurately estimates task-relatedness by minimizing the bound in Eq.~\ref{theorem:final_bound}, it could be inefficient due to the requirement of computing and minimizing the Wasserstein distance between distributions at every epoch.
Thus, to make the computation efficient, we replace the Wasserstein distance computation in step 11 and 12 of Alg.~\ref{alg:bound_estimation}, with mean and covariance matching terms. 
Specifically, we define the distance between two distributions $R'''$ and $T$ as 
\begin{eqnarray}
\label{eq:mean_covar_matching}
   \Gamma(R''', T) := \|\mu_{R'''} - \mu_{T}\|_2^2 + \lambda \;\|\Sigma_{R'''} - \Sigma_{T}\|_2^2,
\end{eqnarray}
where $\mu_{R'''/T} := \frac{1}{n_{R'''/T}}\sum_{z \in P_{R'''/T}}z$, 
$\Sigma_{R'''/T} := \frac{1}{n_{R'''/T}}\sum_{z \in P_{R'''/T}}(z - \mu_{R'''/T})^T(z - \mu_{R'''/T})$, and $\lambda$ is a regularization coefficient. 
Using $\Gamma(R''', T)$ in place of $W_d(R''', T)$, makes the computation of task-relatedness by learning transformations $A, B,$ and $C$ significantly more efficient.

\emph{\ul{Task-relatedness is an effective metric for the pre-trained model selection problem.}}
%\subsubsection{Pre-trained model selection}
% \begin{wraptable}[9]{r}{7.5cm}
% %\begin{table}
% \vspace{-0.6cm}
%   \begin{center}
%     \caption{Task-relatedness achieves high Pearson correlation to the accuracy after end-to-end fine-tuning for various tasks. For NCE, Leep, LogMe, and SFDA {\bf higher} correlation is better whereas for PACTran and task-relatedness (ours) {\bf negative} correlation is better.}
%     \label{Table:sbte_correlation}
%     \resizebox{0.5\textwidth}{!}{
%     \begin{tabular}{|c|ccccc|c|}
%       \hline
%       Target task & LogMe & Leep & NCE & PACTran & SFDA & Ours  \\
%       \hline 
%       Pets & 0.82 & 0.80 & 0.73 &	-0.82 &	0.57 & -0.77  \\
%       DTD  & 0.88 & 0.96 & -0.19 & -0.85 & 0.90 & -0.97  \\
%       Aircraft & -0.60 & 0.92 & 0.97 & 0.11 & 0.72 & -0.72  \\
%       \hline   
%     \end{tabular}
%     }
%   \end{center}
% % \end{table}
% \end{wraptable}
\begin{table}
\vspace{-0.6cm}
  \begin{center}
    \caption{Task-relatedness achieves high (negative) Pearson correlation to the accuracy after end-to-end fine-tuning for various tasks. For NCE \citep{tran2019transferability}, Leep \citep{nguyen2020leep}, LogMe \citep{you2021logme}, SFDA \citep{shao2022not}, OT-NCE, OTCE \citep{tan2021otce}, and H-score \citep{bao2019information} {\bf positive} correlation is better whereas for PACTran \citep{ding2022pactran} and task-relatedness (ours) {\bf negative} correlation is better. 
    %Numbers in \textcolor{red}{red} show cases where SbTE metrics get opposite correlations.
    }
    \label{Table:sbte_correlation}
    \resizebox{0.95\textwidth}{!}{
    \begin{tabular}{|c|ccccc|ccc|c|}
      \hline
      Target task & LogMe & Leep & NCE & PACTran & SFDA & H-Score & OT-NCE & OTCE & Ours  \\
      \hline 
      Pets & 0.82 & 0.80 & 0.73 &	-0.82 &	0.57 & 0.77 & 0.88 & 0.86 & -0.77  \\
      DTD  & 0.88 & 0.96 & -0.19 & -0.85 & 0.90 & 0.89 & 0.84 & 0.82 & -0.97  \\
      Aircraft &-0.60 & 0.92 & 0.97 & 0.11 & 0.72 &-0.80 & 0.56 & 0.60 & -0.72  \\
      \hline
      {\bf Average} & 0.37 & 0.90 & 0.50 & -0.52 & 0.73 & 0.29 & 0.76 & 0.76 & -0.82 \\
      \hline   
    \end{tabular}
    }
  \end{center}
\end{table}
The goal of this problem is to find a pre-trained model from a model zoo that achieves the best accuracy on a given target task after end-to-end fine-tuning of the model using labeled target data.
Since end-to-end fine-tuning is costly (takes almost a day to fully fine-tune a single model on a single target task as shown by \cite{you2021logme}), an effective transferability metric is significantly more efficient to compute and is correlated well with the accuracy after end-to-end finetuning.
Using 5 different pre-trained models (supervised ResNet-50/101/152, adversarially pre-trained \cite{salman2020adversarially} ResNet50 with $\epsilon \in \{0.1, 1\}$) and ImageNet as the reference task, we show in Table~\ref{Table:sbte_correlation}, that task-relatedness achieves a high correlation with the accuracy after end-to-end fine-tuning on the target task.
% However, since task-relatedness requires solving Eq.~\ref{Eq:optimization} it is not as computationally efficient as other SbTE approaches. 
% Nonetheless, the competitive performance of task-relatedness with other SbTE metrics highlights its practical utility. 
Our results also highlight the instability of various popular SbTE metrics, such as LogMe \cite{you2021logme} and NCE \cite{tran2019transferability} which can produce a high negative correlation, and those of PACTran \cite{ding2022pactran} which achieve low correlation values on complex datasets. 
In comparison, task-relatedness consistently achieves a good correlation for various target tasks. % performing competitively or better than popular SbTE methods. 
%Moreover, using the distance defined in Eq.~\ref{eq:mean_covar_matching} allows us to compute task-relatedness efficiently. 
Computationally, it takes a mere 3-4 minutes to learn the transformations to compute task-relatedness, providing a significant computation advantage over end-to-end fine-tuning. % the models on the target task.
We also show that \emph{\ul{task-relatedness remains highly correlated with end-to-end fine-tuning accuracy even with a limited amount of labeled data from the target task}} as shown in Fig.~\ref{fig:fft_accuracy_vs_sbte} unlike other SbTE metrics.
%The result in our Fig.\ref{fig:TR_with_small_data}, shows that task-relatedness achieves a high correlation with end-to-end fine-tuning accuracy even with a limited amount of labeled target data. 
%\begin{figure}[tb]
%This suggests that transferability can estimated via task-relatedness even when the target task has a few labels.
% \begin{wrapfigure}[13]{r}{0.3\textwidth}
% \vspace{-0.5cm}
%   \centering{
%  \includegraphics[width=0.25\textwidth]{}
%   }
%   \caption{Task-relatedness can be estimated with a few labels from the target task and correlates well with accuracy after end-to-end fine-tuning.
%   }
%   \label{fig:TR_with_small_data}
% % \end{figure}
% \end{wrapfigure}

% \begin{figure*}[tb]
%   \centering{
%    \includegraphics[width=0.19\textwidth]{Images/TR_with_small_data_wd.pdf}
%    \includegraphics[width=0.19\textwidth]{}
%    \includegraphics[width=0.19\textwidth]{}
%    \includegraphics[width=0.19\textwidth]{}
%    \includegraphics[width=0.19\textwidth]{}\\
%   \includegraphics[width=0.35\textwidth]{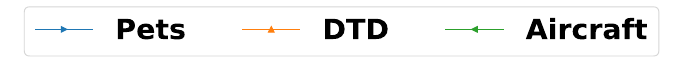}
%   }
%   \caption{Task-relatedness (first plot) remains highly correlated with accuracy after end-to-end fine-tuning on a target task even when using a small percentage of target data unlike other SbTE methods (LogME, Leep, NCE, and PACTran) whose correlation is affected significantly.
%     } 
%   \label{fig:fft_accuracy_vs_sbte}
% \end{figure*}

\begin{figure*}[tb]
  \centering{
   \includegraphics[width=0.24\textwidth]{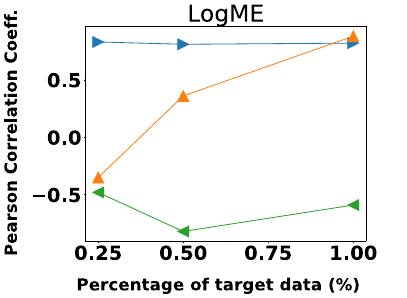}
   \includegraphics[width=0.24\textwidth]{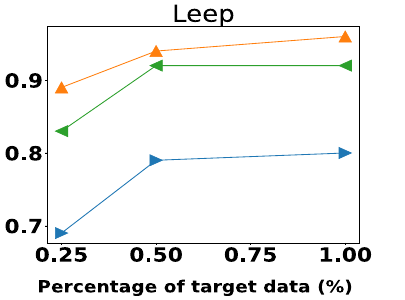}
   \includegraphics[width=0.24\textwidth]{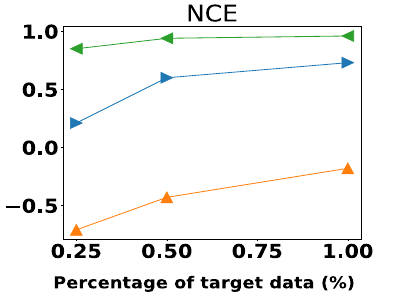}
   \includegraphics[width=0.24\textwidth]{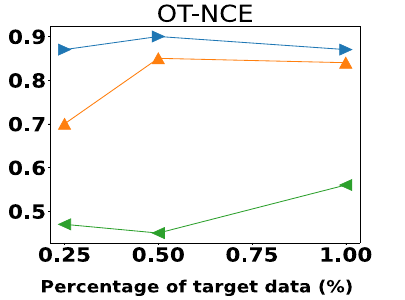}
   \includegraphics[width=0.24\textwidth]{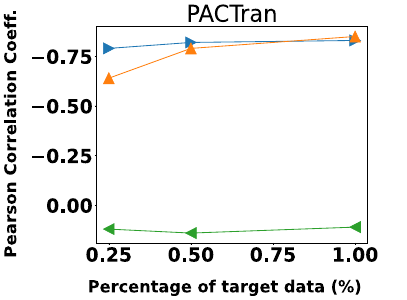}
   \includegraphics[width=0.24\textwidth]{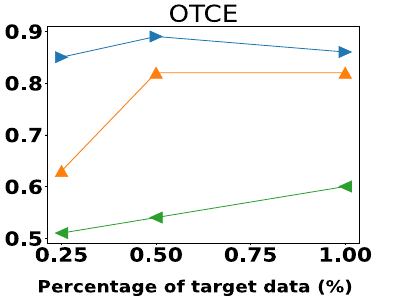}
   \includegraphics[width=0.24\textwidth]{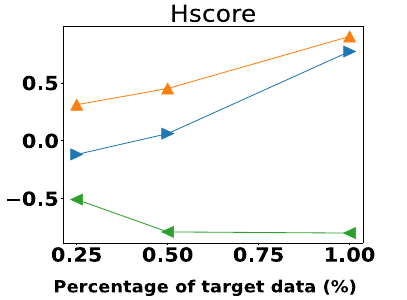}
   \includegraphics[width=0.24\textwidth]{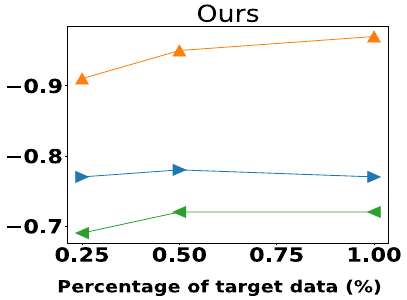}\\
   \includegraphics[width=0.4\textwidth]{Images/TR_small_data_legend.pdf}
  }
  \caption{Task-relatedness (Ours) remains highly correlated with accuracy after end-to-end fine-tuning on a target task even when using a small percentage of target data unlike other SbTE methods (LogME, Leep, NCE, PACTran, OT-NCE, OTCE, and H-Score) whose correlation is affected significantly. For LogMe, Leep, NCE, OT-NCE, OTCE, and H-score {\bf positive} correlation is better whereas for PACTran and task-relatedness (ours) {\bf negative} correlation is better.
    } 
  \label{fig:fft_accuracy_vs_sbte}
\end{figure*}

%without target task labels allowing us to gauge transferability with only the features of the target data.}}
Next, we show that \emph{\ul{task-relatedness can even be estimated without using labels from the target task.}}
\begin{wraptable}[10]{r}{6.5cm}
%\begin{table}
\vspace{-0.5cm}
  \begin{center}
    \caption{Correlation of task-relatedness and end-to-end fine-tuning accuracy computed using true and pseudo labels of the target task.
    }
    \label{Table:bound_pseudo_labels}
    \resizebox{0.3\textwidth}{!}{
    \begin{tabular}{|c|cc|}
      %\hline
      %& \multicolumn{2}{|c|}{Task-Relatedness} \\
      \hline
      Target & \makecell{True \\ labels}  & \makecell{Pseudo \\ labels} \\
      \hline 
      Pets  & -0.77 & -0.76   \\
      DTD   & -0.97 & -0.91    \\
      Aircraft  & -0.72 & -0.16   \\
      \hline   
    \end{tabular}
    }
  \end{center}
%\end{table}
\end{wraptable}
For scenarios, where labeled data from the target task is unavailable, estimating transferability is challenging.  %using SbTE metrics or even by fine-tuning is not possible. 
%where obtaining labeled data is difficult (such as material science or healthcare applications), estimating transferability via fine-tuning (in a supervised way) or using popular SbTE methods is impossible. 
This is because both fine-tuning and most SbTE methods require labels to compute the transferability scores.
%This is due to the requirement of labels for most SbTE methods use target labels to compute their metrics. 
%Additionally, in the TL setting the nature and size of the label sets are often different between the tasks, making it difficult to directly use predictions of the pre-trained classifier (if one is available as is the case with models trained with supervised learning) as labels. %(e.g., directly using the predictions of an ImageNet classifier on Chest X-rays may not be useful for the target task). 
%Moreover, depending on how the encoder was trained (in a supervised or self-supervised way) pre-trained classifier may not even be available. 
Here we show that task-relatedness can still be an effective measure to estimate transferability in this challenging setting. 
Since we use a transformative model and have access to a reference task/classifier, we can use the predictions from the reference task's classifier transformed via $B$ (to obtain labels $\in \Y_T$) and estimate the \emph{pseudo}-labels of the target data. %a by passing the data through the .
%This reference task's classifier allows us to estimate the \emph{pseudo}-labels of the target data by passing the outputs it through the source classifier followed by using the transformation $B$ (which produces labels $\in \Y_T$).
Concretely, pseudo-label for a target sample $x_T$ is obtained as $y_T^{pseudo} = \arg \max_{y\in\Y_T} Bh_R(A^{-1}(z_T))$. 
%\AM{Note that these are different from vanilla pseudo labels}
%Using pseudo-labeled target data, we can use our Alg.~\ref{alg:bound_estimation} to compute task-relatedness.
Results in Table~\ref{Table:bound_pseudo_labels} show that our task-relatedness estimated via pseudo-labeled target data still achieves a high correlation to transferability on most datasets.
For datasets such as Pets and DTD, where transforming the reference task classifier produces high accuracy on the target task, the difference between the pseudo and true labels is small. 
Consequently, the difference in the correlations with pseudo and true labels is also small. 
Thus, when the reference and target tasks are related, transferability can be estimated accurately without requiring labels of the target task, showing that task-relatedness is an effective metric even for unsupervised transferability estimation. 

% gets only slightly worse, 
% %only a small increase 
% primarily due to the distribution mismatch term.
% In particular, for transfer from ImageNet to CIFAR-10, the gap between task-relatedness estimated with true vs. pseudo labels is only $0.07$ on average and at most $0.36$ for Digits tasks.
% Here, we used ResNet-18 for CIFAR-10 and convolutional networks for Digits. 
% The ability to estimate transferability with unlabeled target data gives our approach an edge over other metrics and fine-tuning. 
%enabling estimation of transferability just using features of the target task, making our analysis practically useful. 

\if0
\subsubsection{Estimating transferability  selection}
\emph{\ul{Task-relatedness can be used as a metric for selecting the best pre-trained model from a model zoo for a target task.}}
%Here, we evaluate the effectiveness of task-relatedness computed via Alg.~\ref{alg:bound_estimation}, for the problem of selecting the best pre-trained model from a model zoo to use for transferring to a given target task.
%This problem is at the center of most SbTE works which focus on efficiently estimating transferability given a target task. 
Using 5 different pre-trained models (supervised ResNet-50/101/152, adversarially pre-trained \cite{salman2020adversarially} ResNet50 with $\epsilon \in \{0.1, 1\}$) and ImageNet as the reference task, we show in Table~\ref{Table:sbte_correlation}, that task-relatedness achieves a high correlation with the accuracy after fully fine-tuning the pre-trained model on the target task.
Our results also highlight the instability of various popular SbTE metrics, such as LogMe and NCE which can produce a high negative correlation, and those of PACTran and SFDA which achieve low correlation values on complex datasets. 
In comparison, task-relatedness consistently achieves a good correlation for various target tasks. % performing competitively or better than popular SbTE methods. 
However, since task-relatedness requires solving Eq.~\ref{Eq:optimization} it is not as computationally efficient as other SbTE approaches. 
Nonetheless, the competitive performance of task-relatedness with other SbTE metrics highlights its practical utility. 
%along with being theoretically sound also performs competitively or better than popular SbTE methods on this problem. 

%Thus, if the goal is to estimate transferability on a given target task, it's better to u
\fi

% {\bf Selecting the best pre-trained model for transfer to a target task:}

% {\bf Selecting the best source task for transfer to a target task:}
% In this setting, the task is to find the best source task to fine-tune a model on to achieve the best transfer performance to a given target task.

% We can use the clip experiment from the reference dataset as a source task selection.

\if0
\subsection{Estimating transferability without target labels}
\label{sec:estimation_with_pesudo_labels}
\emph{\ul{Task-relatedness can be estimated using unlabeled target data in certain cases.}}
%without target task labels allowing us to gauge transferability with only the features of the target data.}}
For scenarios where labeled data from the target task is unavailable, estimating transferability is challenging  %using SbTE metrics or even by fine-tuning is not possible. 
%where obtaining labeled data is difficult (such as material science or healthcare applications), estimating transferability via fine-tuning (in a supervised way) or using popular SbTE methods is impossible. 
since neither fine-tuning nor most SbTE methods can estimate transferability with only unlabeled target data.
%This is due to the requirement of labels for most SbTE methods use target labels to compute their metrics. 
Moreover, in the TL setting the nature and size of the label sets are often different between the tasks making it difficult to directly use predictions of the pre-trained classifier (if one is available as is the case with models trained with supervised learning) as labels. %(e.g., directly using the predictions of an ImageNet classifier on Chest X-rays may not be useful for the target task). 
%Moreover, depending on how the encoder was trained (in a supervised or self-supervised way) pre-trained classifier may not even be available. 
Here we show cases in which our method can produce a good estimate of transferability in this challenging setting. 
Since we use a generative approach and have access to a reference task/classifier, we can use task-relatedness to estimate transferability in this setting. 
%as we use a generative modeling approach and a reference task/classifier. 
Using the predictions from the reference task's classifier transformed via $B$ (to obtain labels $\in \Y_T$) we can estimate the \emph{pseudo}-labels of the target data. %a by passing the data through the .
%This reference task's classifier allows us to estimate the \emph{pseudo}-labels of the target data by passing the outputs it through the source classifier followed by using the transformation $B$ (which produces labels $\in \Y_T$).
Concretely, pseudo-label for a target sample $x_T$ is obtained as $y_T^{pseudo} = \arg \max_{y\in\Y_T} Bh_R(A^{-1}(x_T))$. 
%\AM{Note that these are different from vanilla pseudo labels}
Using pseudo-labeled target data, we can use our Alg.~\ref{alg:bound_estimation} to compute task-relatedness.
Results in Table~\ref{Table:bound_pseudo_labels} show that our task-relatedness estimated via pseudo-labeled target data gets only slightly worse, 
%only a small increase 
primarily due to the distribution mismatch term.
In particular, for transfer from ImageNet to CIFAR-10, the gap between task-relatedness estimated with true vs. pseudo labels is only $0.07$ on average and at most $0.36$ for Digits tasks.
Here, we used ResNet-18 for CIFAR-10 and convolutional networks for Digits. 
The ability to estimate transferability with unlabeled target data gives our approach an edge over other metrics and fine-tuning. 
%enabling estimation of transferability just using features of the target task, making our analysis practically useful. 
\fi

% \begin{table}
%   \begin{center}
%     \caption{
%     A small gap in task-relatedness (due to distribution mismatch) computed with true vs. pseudo labels of the target task shows we can estimate transferability without target labels.}
%     \label{Table:bound_pseudo_labels}
%     \resizebox{0.45\textwidth}{!}{
%     \begin{tabular}{|c|c|cc|cc|}
%       \hline
%        &   & \multicolumn{2}{|c|}{Task-Relatedness} & \multicolumn{2}{|c|}{Dist. mismatch} \\
%       \hline
%       Reference &  Target & \makecell{True \\ labels}  & \makecell{Pseudo \\ labels} & \makecell{True \\ labels} & \makecell{Pseudo \\ labels} \\
%       \hline 
%       ImageNet & CIFAR-10  & 2.51 & 2.59 & 0.45 & 0.53   \\
%       MNIST & USPS  & 2.63 & 2.92 & 1.04 & 1.29     \\
%       USPS & MNIST  & 3.07 & 3.42 & 1.36 & 1.72   \\
%       \hline   
%     \end{tabular}
%     }
%   \end{center}

% \end{table}

%%%%%%%%%%%%%%%%%%%%%%%%%%%%%%%%%%%%%%%%%%%%%%%%%%%%%%%%%%%%
\section{Conclusion}
%%%%%%%%%%%%%%%%%%%%%%%%%%%%%%%%%%%%%%%%%%%%%%%%%%%%%%%%%%%%
We analyzed TL in terms of the relatedness between the target and a reference task. 
Our analysis works by transforming the distribution of a reference task to match that of the target. 
Using this we proved an upper bound on transferability, defined as task-relatedness, consisting of three interpretable terms, namely, the re-weighted reference task loss, label mismatch, and distribution mismatch. 
We proposed an algorithm to compute task-relatedness and demonstrated its effectiveness at accurately predicting transferability (even without target labels) 
%and producing a small gap to actual transferability 
with SOTA models. %pre-trained with different architectures/training methods 
%and its effectiveness in the pre-trained model selection problem.
%Our results demonstrated the importance of relatedness between tasks for achieving high transferability. 
Moreover, the high correlation of task-relatedness with accuracy after end-to-end fine-tuning and its efficient computability, makes it an effective metric for transferability estimation. %perform well on the problem of pre-trained model selection.
%makes it perform We also demonstrated the effectiveness of our method on the practical problem of pre-trained model selection. 
%such as transferability estimation without source data/classifier and labels of the target task. 

\textbf{Limitations.}
We studied transferability using the cross-entropy loss and used Wasserstein distance-based distribution shift analysis due to their popularity. 
However, due to accuracy being the primary metric of interest in classification tasks and the difficulty of computing the Wasserstein distance with limited samples in a high dimensional representation space, extending the analysis to 0-1 loss and other divergence measures are important directions which are not addressed here and are left for future works.

%, is an interesting future direction as accuracy is the metric of primary interest for classification tasks.
% Moreover, we used Wasserstein distance for our analysis since it is a popular choice for analyzing performance in various distribution shift scenarios. However, it is difficult to estimate in practice due to its sensitivity to the number of samples and the dimension of the representation space. Thus, extending this analysis to other divergence measures is an interesting future direction. 
%based on a different and easy-to-compute divergence measure might be more amenable for making transferability estimation via task-relatedness easier and faster.

\section{Acknowledgment}
We thank the anonymous reviewers of this work for their insightful comments and suggestions.
This work was supported by the NSF EPSCoR-Louisiana Materials Design Alliance (LAMDA) program \#OIA-1946231.

\bibliographystyle{plain}
\bibliography{neurips_2024}

%%%%%%%%%%%%%%%%%%%%%%%%%%%%%%%%%%%%%%%%%%%%%%%%%%%%%%%%%%%%%%%%%%%%%%%%%%%%%%%
%%%%%%%%%%%%%%%%%%%%%%%%%%%%%%%%%%%%%%%%%%%%%%%%%%%%%%%%%%%%%%%%%%%%%%%%%%%%%%%
% APPENDIX
%%%%%%%%%%%%%%%%%%%%%%%%%%%%%%%%%%%%%%%%%%%%%%%%%%%%%%%%%%%%%%%%%%%%%%%%%%%%%%%
%%%%%%%%%%%%%%%%%%%%%%%%%%%%%%%%%%%%%%%%%%%%%%%%%%%%%%%%%%%%%%%%%%%%%%%%%%%%%%%
\newpage
\appendix
\onecolumn
\begin{center}
{\LARGE \bf Appendix}
\end{center}

We present the missing proofs of the theoretical results from Sec.~\ref{sec:theoretical_analysis} along with justifications for the classifiers ($h_{R'},h_{R''},h_{R'''}$) as Corollaries 
%\JH{results that justify the classifier change from one domain to the other (as Corollaries): Can we just say ``justifications for the classifiers ($h_S',h_S'',h_S'''$) as colloraries''}
in Appendix~\ref{App:proofs} followed by related work on learning in the presence of distribution shift with the same feature and label space in Appendix~\ref{app:analysis_domain_adaptation}. This is followed by additional experimental results including NLP classification tasks with large pre-trained models in Appendix~\ref{App:additional_experiments}.
%(\JH{in particular, NLP claassification with Large Language Models..})
We conclude in Appendix~\ref{App:experimental_details} with details of the experiments and datasets used. % in our work.

%%%%%%%%%%%%%%%%%%%%%%%%%%%%%%%%%%%%%%%%%%%%%%%%%%%%%%%%%%%%%%%%%%%%%%%%%%%%%%
\section{Proofs for Sec.~\ref{sec:theoretical_analysis}}
\label{App:proofs}
%%%%%%%%%%%%%%%%%%%%%%%%%%%%%%%%%%%%%%%%%%%%%%%%%%%%%%%%%%%%%%%%%%%%%%%%%%%%%%
\subsection{Notation}

\begin{table}[htbp]\caption{Table of notations}
\begin{center}
\begin{tabular}{r p{10cm} }
\toprule
\multicolumn{2}{c}{\makecell{Data related}} \\
\midrule
$R/T$ & Reference/Target task. \\
$\mathcal{X}_{R/T}$ & Images of reference/target tasks. \\
$\mathcal{Y}_{R/T}$ & Label set of reference/target tasks. \\
$K_{R/T}$ & Number of classes in reference/target tasks. \\
$P_{R/T}(x,y)$ & Data distribution of reference/target tasks. \\
$(Z_{R/T}, Y_{R/T})$ & Samples (features and labels) of the reference/target tasks. \\
\midrule
\multicolumn{2}{c}{\makecell{Model related}} \\
\midrule
$g$ & Encoder pre-trained on a pre-training dataset. \\
$\mathcal{Z}_{R/T}$ & Representations extracted for reference/target tasks using $g$. \\
$h_{R/T}$ & Classifier learned for reference/target tasks on the representations of $g$.\\ 
$\ell$ & Cross-entropy loss. \\
$\tau$ & Lipschitz constant. \\
\midrule
\multicolumn{2}{c}{\makecell{Task-transformation related}} \\
\midrule
$A$ & Parameters for feature transformation.\\
$B$ & Parameters for label transformation.\\
$C$ & Parameters for class-prior transformation.\\
$P_{R'}(z,y)$ and $h_{R'}$ & Distribution and classifier of $R'$ after applying transformation $C$ on $R$. \\
$P_{R''}(z,y)$ and $h_{R''}$ & Distribution and classifier of $R''$ after applying transformation $B$ on $R'$.\\
$P_{R'''}(z,y)$ and $h_{R'''}$ & Distribution and classifier of $R'''$ after applying transformation $A$ on $R''$.\\
$H$ & Conditional entropy as defined in Eq.~\ref{eq:conditional_entropy}.\\
$d$ & Base distance between two samples as defined in Eq.~\ref{eq:base_distance}.\\
$W_d$ & Wasserstein distance between two distributions defined in Eq.~\ref{eq:wasserstein_distance}. \\
$\Gamma$ & Distance between two distributions based on their mean/variance defined in Eq.~\ref{eq:mean_covar_matching}. \\
\bottomrule
\end{tabular}
\end{center}
\label{table:notations}
\end{table}

\subsection{Our task transformation model (Sec.~\ref{sec:task_transfer_bounds})}
\subsubsection{Class-Prior transformation \texorpdfstring{$(R \rightarrow R')$}{}}
\label{App:prior_transform}
%\begin{lemmarep}[\ref{lemma:prior_transform}]
\begin{lemma}
\label{lemma:prior_transform}
Let $C:= \left[\frac{P_{R'}(y)}{P_{R}(y)}\right]_{y=1}^{K_R}$ be a vector of probability ratios and the classifier $h_{R'}(z) := h_R(z)$, then 
\(\E_{P_{R'}(z,y)}[\ell(h_{R'}(z),y)] = \E_{P_{R}(z, y)}[C(y)\ell(h_{R}(z),y)], \;
\) for any loss function $\ell$.
%\end{lemmarep}
\end{lemma}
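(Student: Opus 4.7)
The plan is to prove this lemma by a straightforward importance-reweighting argument, leveraging the fact (stated in the class-prior transformation paragraph) that $R'$ is defined so that the class-conditional feature distribution is preserved, i.e., $P_{R'}(z\mid y)=P_R(z\mid y)$, and that $h_{R'}\equiv h_R$. Under these two invariants, only the marginal $P(y)$ changes between $R$ and $R'$, and the ratio $C(y)=P_{R'}(y)/P_R(y)$ is precisely the Radon--Nikodym derivative of $P_{R'}(y)$ with respect to $P_R(y)$.

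First I would expand the left-hand side using the law of total expectation, writing
\[
\E_{P_{R'}(z,y)}[\ell(h_{R'}(z),y)] = \sum_{y\in\Y_R} P_{R'}(y)\,\E_{P_{R'}(z\mid y)}[\ell(h_{R'}(z),y)].
\]
Next I would substitute the two invariants $P_{R'}(z\mid y)=P_R(z\mid y)$ and $h_{R'}=h_R$ inside the inner expectation, so that it becomes $\E_{P_R(z\mid y)}[\ell(h_R(z),y)]$. Then I would multiply and divide by $P_R(y)$ in the outer sum to introduce the ratio $C(y)$, obtaining
\[
\sum_{y\in\Y_R} \frac{P_{R'}(y)}{P_R(y)}\,P_R(y)\,\E_{P_R(z\mid y)}[\ell(h_R(z),y)] = \sum_{y\in\Y_R} P_R(y)\,C(y)\,\E_{P_R(z\mid y)}[\ell(h_R(z),y)].
\]
Finally I would re-fold the sum and the inner expectation back into a single joint expectation over $P_R(z,y)$, giving $\E_{P_R(z,y)}[C(y)\,\ell(h_R(z),y)]$, which is the desired right-hand side.

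Since the argument never invokes any property of $\ell$ beyond measurability, the identity holds for any loss, matching the lemma's claim. There is essentially no obstacle here: the only subtlety is to make explicit the two invariants defining the prior transformation (the unchanged conditional $P(z\mid y)$ and the unchanged classifier $h$), because without these the importance-weight identity would not reduce to a simple scalar reweighting by $C(y)$. Care should be taken that $C(y)$ is well-defined, which requires $P_R(y)>0$ on the support of $P_{R'}(y)$; this is implicit in treating $C$ as a vector of ratios in the statement and should be noted in passing.
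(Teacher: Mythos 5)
Your proposal is correct and follows essentially the same route as the paper's proof: condition on $y$, use the invariants $P_{R'}(z\mid y)=P_R(z\mid y)$ and $h_{R'}=h_R$, multiply and divide by $P_R(y)$ to introduce $C(y)$, and re-fold into an expectation over $P_R(z,y)$. The only (immaterial) difference is the order in which the conditional substitution is applied, and your remark about $P_R(y)>0$ on the support of $P_{R'}(y)$ is a sensible extra precaution the paper leaves implicit.
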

\begin{proof}
\begin{eqnarray*}
&& \E_{P_{R'}(z,y)}[\ell(h_{R'}(z),y)]
= \E_{P_{R'}(z,y)}[\ell(h_{R}(z),y)]  
= \sum_{y \in \Y_{R}}P_{R'}(y)\E_{P_{R'}(z|y)}[\ell(h_{R}(z), y)] \\
&=& \sum_{y \in \Y_R}\frac{P_{R}(y)}{P_{R}(y)}P_{R'}(y)\E_{P_{R'}(z|y)}[\ell(h_{R}(z),y)] 
= \sum_{y \in \Y_R}P_{R}(y)\E_{P_{R'}(z|y)}[C(y)\ell(h_{R}(z),y)]  \\
&=& \sum_{y \in \Y_R}P_{R}(y)\E_{P_{R}(z|y)}[C(y)\ell(h_{R}(z),y)] \quad (\mathrm{since} \; P_{R}(z|y) = P_{R'}(z|y) \; \mathrm{by \; construction}) \\
&=&\E_{P_{R}(z,y)}[C(y)\ell(h_{R}(z),y)]. 
\end{eqnarray*}
\end{proof}

\subsubsection{Label transformation \texorpdfstring{$(R' \rightarrow R'')$}{}}
\label{App:label_transform}
\begin{lemma}
\label{lemma:label_transform}
%\begin{lemmarep}[\ref{lemma:label_transform}]
Let $B$ be a $K_T\times K_R$ 
%\AM{$K_T\timesK_R$ and not $K_R\timesK_T$} 
matrix with $B_{ij}=P(y_{R''}=i|y_{R'}=j)$ and $h_{R''}(z):=Bh_{R'}(z)$ and $\ell$ be the cross-entropy loss. Then,
\(\E_{P_{R''}(z,y)}[\ell(h_{R''}(z),y)] 
\leq \E_{P_{R'}(z,y)}[\ell(h_{R'}(z),y)] + {H}(\mathcal{Y}_{R''}|\mathcal{Y}_{R'})\),
where
 ${H}(\mathcal{Y}_{R''}|\mathcal{Y}_{R'})$ is the conditional entropy \((-\sum_{y_{R'}\in\mathcal{Y}_{R'}}\sum_{y_{R''}\in\mathcal{Y}_{R''}} P_{R'}(y_{R'})B_{y_{R''},y_{R'}}\log(B_{y_{R''},y_{R'}}))\).
% Let $B$ be a $|\Y_S|\timesK_T$ matrix such that $B_{ij}=P(y_{R''}=i|y_{R'}=j)$ and $h_{R''}(z):=Bh_{R'}(z)$.
% With the cross-entropy loss $\ell$, we have 
% \(\E_{P_{R''}(z,y)}[\ell(h_{S''}(z),y)] 
% \leq \E_{P_{S'}(z,y)}[\ell(h_{S'}(z),y)] + {H}(\mathcal{Y}_{S''}|\mathcal{Y}_{S'};B)\),
% where
%  \({H}(\mathcal{Y}_{S''}|\mathcal{Y}_{S'};B):=[-\sum_{y_{S'}\in\mathcal{Y}_{S'}}\sum_{y_{S''}\in\mathcal{Y}_{S''}} P_{S'}(y_{S'})B_{y_{S''},y_{S'}}\log(B_{y_{S''},y_{S'}})]\) is the conditional entropy. 
% \AM{Check the highlighted step}
%\end{lemmarep}
\end{lemma}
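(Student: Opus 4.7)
The plan is to unfold the cross-entropy loss on the transformed domain $R''$, use the generative model for $y_{R''}$ (independent of $z$ given $y_{R'}$) to rewrite expectations over $P_{R''}$ as weighted expectations over $P_{R'}$, and then apply the elementary bound $(Bh_{R'}(z))_{y_{R''}} \geq B_{y_{R''},y_{R'}}\,h_{R'}(z)_{y_{R'}}$ coming from non-negativity of the remaining summands. This single inequality is what will split the loss into a ``reference loss'' piece and a ``label entropy'' piece.

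First I would write the left-hand side explicitly as
\[
\E_{P_{R''}(z,y)}[\ell(h_{R''}(z),y)] = -\sum_{y_{R''}} \int P_{R''}(z,y_{R''})\,\log\bigl((Bh_{R'}(z))_{y_{R''}}\bigr)\,dz.
\]
Next I would use the generative process: since $P_{R''}(y_{R''}\mid y_{R'},z)=P_{R''}(y_{R''}\mid y_{R'})=B_{y_{R''},y_{R'}}$ and the features are not retransformed in this step, the joint factors as $P_{R''}(z,y_{R''}) = \sum_{y_{R'}} P_{R'}(z,y_{R'})\,B_{y_{R''},y_{R'}}$. Substituting gives a double sum over $y_{R'},y_{R''}$ weighted by $P_{R'}(z,y_{R'})B_{y_{R''},y_{R'}}$.

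The key step is then to bound $-\log\bigl((Bh_{R'}(z))_{y_{R''}}\bigr)$ from above. Because $h_{R'}(z)$ is a probability vector and $B$ has non-negative entries, $(Bh_{R'}(z))_{y_{R''}} = \sum_{j} B_{y_{R''},j}\,h_{R'}(z)_j \geq B_{y_{R''},y_{R'}}\,h_{R'}(z)_{y_{R'}}$ for every individual $y_{R'}$, and monotonicity of $-\log$ yields
\[
-\log\bigl((Bh_{R'}(z))_{y_{R''}}\bigr) \leq -\log B_{y_{R''},y_{R'}} + \ell(h_{R'}(z),y_{R'}).
\]
Plugging this in and separating the two resulting sums, the $-\log B_{y_{R''},y_{R'}}$ term, after integrating $z$ out, produces exactly $\sum_{y_{R'},y_{R''}} P_{R'}(y_{R'})B_{y_{R''},y_{R'}}(-\log B_{y_{R''},y_{R'}}) = H(\mathcal{Y}_{R''}\mid\mathcal{Y}_{R'})$ by Eq.~\eqref{eq:conditional_entropy}. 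The $\ell(h_{R'}(z),y_{R'})$ term collapses after using the column-stochasticity $\sum_{y_{R''}} B_{y_{R''},y_{R'}}=1$, leaving $\E_{P_{R'}(z,y)}[\ell(h_{R'}(z),y)]$. Adding the two pieces gives the claimed inequality.

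The main obstacle is really only the choice of the right pointwise bound on $-\log(Bh_{R'}(z))_{y_{R''}}$: one is tempted to use Jensen/convexity directly on $-\log$, but that produces an expectation inside the log and leads the wrong way. The clean route is the trivial ``drop all but one summand'' bound combined with monotonicity of $-\log$, which is tight exactly when $B$ is column-sparse (i.e., a deterministic label map), matching the intuition given in the paper. Everything else is bookkeeping, relying only on Fubini, the column-stochasticity of $B$, and the definition of conditional entropy.
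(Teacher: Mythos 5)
Your proposal is correct and follows essentially the same route as the paper's proof: both rewrite the expectation over $P_{R''}$ as a double sum weighted by $P_{R'}(z,y_{R'})B_{y_{R''},y_{R'}}$ using $P(y''\mid y',z)=P(y''\mid y')$, then apply the same key pointwise bound $-\log\bigl(\sum_j B_{y'',j}h^j_{R'}(z)\bigr)\leq -\log B_{y'',y'}-\log h^{y'}_{R'}(z)$ obtained by dropping all but one non-negative summand, and finish with column-stochasticity of $B$. Your remark that the bound is tight exactly for a column-sparse (deterministic) $B$ also matches the paper's Corollary~\ref{cor:label_transform}.
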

\begin{proof}
Note that $P(z) := P_{R'}(z) = P_{R''}(z)$ by construction. 
% \YZ{Where is $P(z,y'')$ from}
\begin{eqnarray*}
&& \E_{P_{R''}(z,y)}[\ell(h_{R''}(z),y)] = \E_{P(z,y'')}[\ell(h_{R''}(z),y'')] \\
&=& \E_{P(z)}\E_{P(y''|z)}[\ell(h_{R''}(z),y'')] =  \E_{P(z)}\sum_{y''}\sum_{y'}P(y'', y'|z)(\ell(h_{R''}(z),y'')) \;\;\;(\mathrm{since\;} \; y' \in \Y_{R'}) \\
&=& \E_{P(z)}\E_{P(y'',y'|z)}[\ell(h_{R''}(z),y'')]\\ %\;\;\;(\mathrm{since\;} \; y' \in \Y_{R'}) \\ %, \;\E_{P(y')}[\ell(h_{R''}(z),y'')]=\ell(h_{R''}(z),y''))\\
&=& \E_{P(z)} \E_{P(y'|z)} \E_{P(y''|y')}[\ell(h_{R''}(z),y'')] \;\;\;(\mathrm{since\;} P(y''|y',z) = P(y''|y'))\\
&=& \E_{P(z)} \E_{P(y'|z)}[\sum_{y''\in\mathcal{Y}_{R''}} \ell(h_{R''}(z), y'') B_{y'',y'}] \;\;\;(\mathrm{since\;} B_{y'',y'} = P(y''|y'))\\
%&=& \E_{P_{R''}(x_{R''})} \E_{P_{R''}(y_{R}|x_{R''})} [\sum_{y_{{S''}}\in\ma{S''}hcal{Y}_{R''}} \ell(h_{R''}(A^{-1}(x_{R}-d)),y_{R''}) B_{y_{{S''}},y_{R}}] \\
%&=& \E_{P(z)} \E_{P(y_{R'}|z)} [\sum_{y_{R''}\in\mathcal{Y}_{R''}} \ell(Bh_{R'}(z), y_{R''}) B_{y_{{S''}},y_{R'}}] \; \\
%&=& \E_{P_{R'}(x)} \E_{P_{R'}(y_{R'}|x)} [\sum_{y_{R''}\in\mathcal{Y}_{R''}} \ell(Bh_{R'}(x), y) B_{y_{R''},y_{R'}}] \\
&=& \E_{P(z,y')} [\sum_{y''\in\mathcal{Y}_{R''}} \ell(Bh_{R'}(z), y'') B_{y'',y'}].\\
\end{eqnarray*}

Since the loss $\ell$ is the cross-entropy loss, we have 
\begin{eqnarray*}
\ell(Bh_{R'}(z), y'') 
%&=& \ell(\sum_{j\in\mathcal{Y_{R'}}}B_{y'',j}h^j_{R'}(z)) 
&=& -\log(\sum_{j\in\mathcal{Y_{R'}}}B_{y'',j}h^j_{R'}(z))
\leq  -\log(B_{y'',y'}h^{y'}_{R'}(z)) 
= -\log(B_{y'',y'}) - \log(h^{y'}_{R'}(z)).
\end{eqnarray*}

Therefore, we have 
\begin{eqnarray*}
&& \E_{P_{R''}(z,y)}[\ell(h_{R''}(z),y)] \\
&=& \E_{P(z,y')} [\sum_{y''\in\mathcal{Y}_{R''}} \ell(Bh_{R'}(z), y'') B_{y'',y'}] \\
&\leq& -\E_{P(z,y')} [\sum_{y''\in\mathcal{Y}_{R''}} B_{y'',y'}\left(\log(B_{y'',y'}) + \log(h^{y'}_{R'}(z))\right)]\\
&=& -\E_{P(z,y')} [\sum_{y''\in\mathcal{Y}_{R''}} B_{y'',y'}\log(B_{y'',y'})] -\E_{P(z,y')} [\sum_{y''\in\mathcal{Y}_{R''}} B_{y'',y'}\log(h^{y'}_{R'}(z))] \\
&=& -\E_{P(z,y')} [\sum_{y''\in\mathcal{Y}_{R''}} B_{y'',y'}\log(B_{y'',y'})] +\E_{P(z,y')} [-\log(h^{y'}_{R'}(z))\sum_{y''\in\mathcal{Y}_{R''}} B_{y'',y'}]\\
&=& -\E_{P(z,y')} [\sum_{y''\in\mathcal{Y}_{R''}} B_{y'',y'}\log(B_{y'',y'})] +\E_{P(z,y')} [-\log(h^{y'}_{R'}(z))]\\
&=& \E_{P(z,y')} [-\sum_{y''\in\mathcal{Y}_{R''}} B_{y'',y'}\log(B_{y'',y'})] +\E_{P(z,y')} [\ell(h_{R'}(z),y')]\\
&=& \E_{P(y')}\E_{P_{R'}(z|y')} [-\sum_{y''\in\mathcal{Y}_{R''}} B_{y'',y'}\log(B_{y'',y'})] + \E_{P(z,y')} [\ell(h_{R'}(z),y')]\\
&=& [-\sum_{y'\in\mathcal{Y}_{R'}}\sum_{y''\in\mathcal{Y}_{R''}} P(y')B_{y'',y'}\log(B_{y'',y'})] +\E_{P(z,y')} [\ell(h_{R'}(z),y')] \\
&=& H(\Y_{R''}|\Y_{R'}) + \E_{P_{R'}(z,y)} [\ell(h_{R'}(z),y)].
%&=& H(\Y_{R''}|\Y_{S};B) + \E_{P_{S'}(z,y)} [\ell(h_{S'}(z),y)] \; (\mathrm{since} \Y_{S'} = \Y_S).
\end{eqnarray*}
\end{proof}

Corollary~\ref{cor:label_transform} below, shows the conditions under which the optimal softmax classifier for the domain $R'$ remains optimal for the domain $R''$, justifying our choice of classifier change from $R'$ to $R''$.
\begin{corollary} 
\label{cor:label_transform}
Let $e$ be one-hot encoding of the labels, $|\Y_{R''}|=|\Y_{R'}|$, $B$ be a $K_T \times K_R$ permutation matrix and $h_{R'}$ be the optimal softmax classifier for $R'$ and $y_{R''} := \sigma(y_{R'}) := \arg\max_{y\in\Y_{R''}}(Be(y_{R'}))_y$ then under the assumptions of Lemma~\ref{lemma:label_transform}, $h_{R''}(z) := Bh_{R'}(z)$ is the optimal softmax classifier for $R''$. %\AM{Might need to add an assumption}
%$\E_{R''}[\ls(h_{S''}(x),y_T] = \E_{S'}[\ls(h_{S'}(x),y_S)]$.% where $h_T^* = \arg \min_{h_T \in \Hy_T}\E_T[\ls(h_T(x),y]$
\end{corollary}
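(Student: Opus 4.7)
The plan is to exploit the structure of a permutation matrix to turn the inequality inside Lemma~\ref{lemma:label_transform} into an equality, and then to identify the optimal softmax classifier on $R''$ with the pushforward via $B$ of the optimal softmax classifier on $R'$.

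First, I would observe that since $B$ is a $K_T\times K_R$ permutation matrix with $K_T=K_R$, it induces a bijection $\sigma:\Y_{R'}\to\Y_{R''}$ via $B_{ij}=1$ iff $i=\sigma(j)$. Because each entry of $B$ lies in $\{0,1\}$ and $B_{y'',y'}\log B_{y'',y'}=0$ under the usual convention $0\log 0=0$, this immediately yields $H(\Y_{R''}\mid\Y_{R'})=0$. Moreover, the key step in the proof of Lemma~\ref{lemma:label_transform}, namely $\log(\sum_j B_{y'',j} h_{R'}^{j}(z)) \ge \log(B_{y'',y'}\,h_{R'}^{y'}(z))$, collapses to an equality because for each fixed $y''$ the sum has only one nonzero term, equal to $h_{R'}^{\sigma^{-1}(y'')}(z)$.

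Next, I would invoke the standard fact that the cross-entropy loss is minimized pointwise in $z$ by the true conditional distribution, so the optimal softmax classifier for $R'$ satisfies $h_{R'}(z) = P_{R'}(\cdot \mid z)$. By the generative model for $R''$ (namely $P_{R''}(z)=P_{R'}(z)$ and $P_{R''}(y''\mid y',z)=P_{R''}(y''\mid y')=B_{y'',y'}$), marginalizing over $y'$ gives
\[
P_{R''}(y''\mid z) \;=\; \sum_{y'\in\Y_{R'}} B_{y'',y'}\,P_{R'}(y'\mid z) \;=\; \bigl(B\,P_{R'}(\cdot\mid z)\bigr)_{y''},
\]
so the candidate $h_{R''}(z):=Bh_{R'}(z)$ coincides with $P_{R''}(\cdot\mid z)$ and is therefore optimal for $R''$. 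A quick sanity check confirms that $Bh_{R'}(z)$ lies in the $K_T$-simplex, since a permutation matrix is doubly stochastic and $h_{R'}(z)$ is already a probability vector.

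The step I expect to require the most care is justifying that ``optimal softmax classifier'' can be identified with the true conditional distribution on both sides. In particular, I would need to argue that $Bh_{R'}$ remains inside whatever admissible hypothesis class one is optimizing over on $R''$. For the linear softmax family treated in Sec.~\ref{sec:task_transfer_bounds} this is immediate, since left-multiplication by a permutation just reorders the class-weight rows; for more restrictive classes, one should either specialize to the unconstrained softmax family or state this closure assumption explicitly to avoid a gap. The role of the definition $y_{R''}:=\arg\max_y (Be(y_{R'}))_y$ in the hypothesis is simply to formalize that the label generation in $R''$ is the deterministic map $\sigma$, so no additional argument is needed beyond what is described above.
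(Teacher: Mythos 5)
Your route is genuinely different from the paper's, and it is only valid under a stronger assumption than the paper needs. The paper never identifies the optimal classifier with the Bayes posterior. Instead it proves, for an \emph{arbitrary} $h_{R'}$, the pointwise identity $\ell(Bh_{R'}(z),\sigma(y')) = -\log (Bh_{R'}(z))_{\sigma(y')} = -\log h_{R'}^{y'}(z) = \ell(h_{R'}(z),y')$, sums it against $P(z,y')$ via the change of variables $y''=\sigma(y')$ to get $\E_{P_{R''}}[\ell(Bh_{R'}(z),y'')]=\E_{P_{R'}}[\ell(h_{R'}(z),y')]$, and then observes that $h\mapsto Bh$ is a loss-preserving bijection of the hypothesis class onto itself when $B$ is a permutation, so the two minimization problems coincide and optimality of $h_{R'}$ transfers to $Bh_{R'}$. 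That argument works for any hypothesis class closed under relabeling --- in particular the Lipschitz-constrained linear softmax family the paper actually optimizes over in Sec.~\ref{sec:task_transfer_bounds}.

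Your argument instead sets $h_{R'}(z)=P_{R'}(\cdot\mid z)$, which holds only when the class is rich enough to contain the Bayes posterior. For the restricted (linear, gradient-norm-penalized) classifiers of the paper this identification fails, and --- this is the concrete gap --- the repair you propose (closure of the class under left-multiplication by $B$) does not rescue your proof: closure guarantees $Bh_{R'}\in\mathcal{H}$, but your derivation still needs $h_{R'}$ to \emph{equal} the true posterior, which it does not. The correct repair is exactly the paper's mechanism: replace Bayes optimality with the loss-preserving bijection. Your preliminary observations --- that $H(\mathcal{Y}_{R''}\mid\mathcal{Y}_{R'})=0$ for a permutation $B$ and that the log-sum inequality inside Lemma~\ref{lemma:label_transform} collapses to an equality because each row of $B$ has a single nonzero entry --- are correct and consistent with the paper, but they establish only that the bound of Lemma~\ref{lemma:label_transform} is tight, not that $Bh_{R'}$ is the minimizer over the admissible class. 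In the special case of the unconstrained softmax family your posterior-marginalization argument is a clean and somewhat more illuminating alternative, since it exhibits $Bh_{R'}$ explicitly as the Bayes posterior of $R''$; it just does not cover the setting the corollary is used in.
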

\begin{proof}
Since $y_{R''} := \sigma(y_{R'}) := \arg\max_{y\in\Y_T}(Be(y_{R'}))_y$ we have
\begin{eqnarray*}
    \E_{P_{R''}}[\ell(h_{R''}(z), y_{R''})] &=&
    \E_{P(z, y'')}[\ell(Bh_{R'}(z), y'')]
    = \sum_{y'' \in \Y_{R''}}P(y'')\E_{P(z|y'')}[\ell(Bh_{R'}(z), y'')] \\
    &=& \sum_{y' \in \Y_{R'}}P(\sigma(y'))\E_{P(z|\sigma(y'))}[\ell(Bh_{R'}(z), \sigma(y'))] \\
    &=& \sum_{y' \in \Y_{R'}}P(y')\E_{P(z|y')}[\ell(h_{R'}(z), y')] 
    = \E_{P_{R'}}[\ell(h_{R'}(z), y_{R'})].
\end{eqnarray*}
The second last equality follows due to the symmetry of cross-entropy loss, i.e., $\ell(h, y) = -\log h_y = -\log B h_{\sigma(y)} = \ell(Bh,\sigma(y))$.

%With $h_{S''}(z) = Bh_{S'}(z)$, 
Since $\min_{h_{R''}} \E_{P_{R''}}[\ell(h_{R''}(z), y_{R''})] = \min_{h_{R'}}\E_{P_{R'}}[\ell(h_{R'}(z), y_{R'})]$ and $h_{R'}$ is optimal for $R'$ we have $h_{R''}(z) := Bh_{R'}(z)$ is the optimal softmax classifier for $R''$. %, if $h_{R'}$ is optimal for $R'$. 
%Thus since $\E_T[\ls(h_T^*(x),y] \leq \E_T[\ell(h_T(x), y_T)] \; \forall h_T\in\Hy_T$, we have $\E_T[\ls(h_T^*(x),y] \leq \E_T[\ell(Bh_S(x), y_T)] = \E_S[\ell(h_S(x), y_S)]$.
\end{proof}

\subsubsection{Feature transformation \texorpdfstring{$(R'' \rightarrow R''')$}{}}
\label{App:feature_transform}
%\begin{lemmarep}[\ref{lemma:feature_transform}]
\begin{lemma}
\label{lemma:feature_transform}
Let $A:\Z \to \Z$ %$$\mathcal{Z}_{S''} \to \mathcal{Z}_{S'''}$
be an invertible linear map of features and the classifier $h_{R'''}(z) := h_{R''}(A^{-1}(z))$.
Then %we have for all $h_{R} \in \mathcal{H}_{R}$ and any loss $l$, 
\(\E_{P_{R'''}(z,y)}[\ell(h_{R'''}(z),y)] 
= \E_{P_{R''}(z,y)}[\ell(h_{R''}(z),y)]\) for any loss $\ell$.
%Let $A: \mathcal{X}_{S''} \to \mathcal{X}_{S'''}$ be an invertible map and the classifier $h_{S'''}(x_{S'''}) := h_{S''}(A^{-1}(x_{S'''}))$.
%Then %we have for all $h_{S} \in \mathcal{H}_{S}$ and any loss $l$, 
%\(\E_{P_{S'''}(x,y)}[l(h_{S'''}(x),y)] 
%= \E_{P_{S''}(x,y)}[l(h_{S''}(x),y)]\).
%\end{lemmarep}
\end{lemma}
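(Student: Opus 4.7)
The plan is to exploit the fact that the domain $R'''$ is constructed, by definition, as the pushforward of $R''$ under the feature map $A$. That is, a sample $(z,y)\sim P_{R'''}$ has the form $(Az'',y)$ where $(z'',y)\sim P_{R''}$, with the label component left untouched. Since $A$ is invertible, this pushforward is unambiguously defined and the usual change-of-variables formula (law of the unconscious statistician) applies to any measurable function of $(z,y)$.

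First I would write the expectation on the left-hand side explicitly using this pushforward relation: for any integrable $f$,
\[
\E_{P_{R'''}(z,y)}[f(z,y)] \;=\; \E_{P_{R''}(z'',y)}[f(Az'',y)].
\]
Then I would plug in $f(z,y) := \ell(h_{R'''}(z),y)$. Using the definition of the transformed classifier $h_{R'''}(z):=h_{R''}(A^{-1}(z))$, the composition inside the expectation becomes $\ell\bigl(h_{R''}(A^{-1}(Az'')),y\bigr)$. Invertibility of $A$ gives $A^{-1}(Az'')=z''$, so this reduces immediately to $\ell(h_{R''}(z''),y)$, which yields $\E_{P_{R''}(z,y)}[\ell(h_{R''}(z),y)]$ after renaming the dummy variable.

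The argument does not require any assumption on $\ell$ beyond measurability, which is why the lemma can be stated for any loss. It also does not actually use linearity of $A$; bijectivity (so that $A^{-1}$ exists on the relevant feature range) is all that is needed, which hints at why the non-linear extension discussed in Appendix~\ref{app:extenstion_to_non_linear} can go through without changes to this step.

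I do not expect any genuine obstacle here: the lemma is essentially a tautology once the pushforward construction of $R'''$ is made explicit, and the role of $A$ is only to define the pairing between $P_{R'''}$ and $h_{R'''}$ so that the two transformations cancel. The only place where care is needed is to state the pushforward identity cleanly, since the excerpt describes the feature transformation informally; once this identity is written down, the rest is a one-line substitution.
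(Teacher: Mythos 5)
Your proposal is correct and matches the paper's proof, which is exactly the same one-line change-of-variables computation: $\E_{P_{R'''}}[\ell(h_{R'''}(z),y)] = \E_{P_{R'''}}[\ell(h_{R''}(A^{-1}(z)),y)] = \E_{P_{R''}}[\ell(h_{R''}(z),y)]$. Your added observations -- that only invertibility (not linearity) of $A$ is used, and that $\ell$ needs only measurability -- are consistent with the paper's discussion of the non-linear extension in its appendix.
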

\begin{proof}
%\begin{eqnarray*}
$\E_{P_{R'''}(z,y)}[\ell(h_{R'''}(z),y)] 
= \E_{P_{R'''}(z,y)}[\ell(h_{R''}(A^{-1}(z)),y)]
=  \E_{P_{R''}(z,y)}[\ell(h_{R''}(z),y)].$
%\end{eqnarray*}
\end{proof}

Our Corollary~\ref{cor:feature_transform} below shows that the optimal softmax classifier for domain $R''$ remains optimal for domain $R'''$ too.
%{\bf transform of the classifier:}\\
\begin{corollary}
\label{cor:feature_transform}
Let $h_{R''}$ be the optimal softmax classifier in domain $R''$ then under the assumptions of Lemma~\ref{lemma:feature_transform}, $h_{R'''}(z) = h_{R''}(A^{-1}(z))$ is the optimal softmax classifier in domain $R'''$.
\end{corollary}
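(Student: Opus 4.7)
\medskip
\noindent\textbf{Proof proposal for Corollary~\ref{cor:feature_transform}.} The plan is to invoke Lemma~\ref{lemma:feature_transform} not just for the particular pair $(h_{R''}, h_{R'''})$ named in its statement, but for \emph{every} candidate classifier in the hypothesis class, and then argue that invertibility of $A$ makes the induced map on classifiers a bijection. Concretely, I would fix a generic $\tilde h \in \mathcal{H}$ for domain $R'''$ and consider its pullback $\tilde h \circ A$, which lives in $\mathcal{H}$ for domain $R''$ (for the linear softmax class this is explicit: if $\tilde h(z) = \mathrm{softmax}(Wz+b)$, then $\tilde h(Az) = \mathrm{softmax}((WA)z+b)$, still a linear softmax classifier, and the map $\tilde h \mapsto \tilde h \circ A$ is a bijection of $\mathcal{H}$ by invertibility of $A$).

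Next I would reuse the one-line computation inside the proof of Lemma~\ref{lemma:feature_transform} with $\tilde h$ in place of $h_{R''}$, namely
\[
\E_{P_{R'''}(z,y)}[\ell(\tilde h(z),y)] \;=\; \E_{P_{R''}(z,y)}[\ell(\tilde h(Az),y)],
\]
which follows from the fact that $P_{R'''}$ is the pushforward of $P_{R''}$ under $(z,y)\mapsto (Az,y)$. Taking the infimum of both sides over $\tilde h \in \mathcal{H}$, and using the bijection established above, gives
\[
\inf_{\tilde h\in\mathcal{H}}\E_{P_{R'''}}[\ell(\tilde h(z),y)] \;=\; \inf_{h\in\mathcal{H}}\E_{P_{R''}}[\ell(h(z),y)].
\]
Since by hypothesis $h_{R''}$ attains the right-hand infimum, its image under the bijection, namely $h_{R'''}(z) := h_{R''}(A^{-1}z)$, must attain the left-hand infimum, which is precisely the assertion.

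The only subtle point — and hence what I expect to be the main obstacle to write cleanly — is the bijection-of-hypothesis-classes step: one needs $\mathcal{H}$ to be stable under precomposition with $A$ and $A^{-1}$. For the linear softmax class this is immediate from $A$ being linear and invertible, and for the non-linear extension discussed in App.~\ref{app:extenstion_to_non_linear} one would argue the same way provided the non-linear hypothesis class is assumed closed under composition with invertible linear maps (which is standard for, e.g., neural-network classes whose first layer is dense). Everything else is a routine change-of-variable argument that is already carried out in Lemma~\ref{lemma:feature_transform}.
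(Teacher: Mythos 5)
Your proposal is correct and follows essentially the same route as the paper's own (much terser) proof: the paper likewise invokes Lemma~\ref{lemma:feature_transform} to equate $\min_{h_{R''}}\E_{P_{R''}}[\ell(h_{R''}(z),y)]$ with $\min_{h_{R'''}}\E_{P_{R'''}}[\ell(h_{R'''}(z),y)]$ under the correspondence $h \mapsto h\circ A^{-1}$ and concludes that optimality transfers. Your write-up is simply more explicit about the one point the paper leaves implicit — that $\mathcal{H}$ must be closed under precomposition with $A$ and $A^{-1}$ so that this correspondence is a bijection of the hypothesis class — which is exactly the assumption the paper states separately in App.~\ref{app:extenstion_to_non_linear}.
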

% \begin{corollaryrep}[\ref{cor:feature_transform}]
% Let $h_{R''}$ be the optimal logistic regression classifier in domain $R''$ then under the assumptions of Lemma~\ref{lemma:feature_transform}, $h_{S'''}(z_{S'''}) = h_{S''}(A^{-1}(z_{S'''}))$ is the optimal logistic regression classifier in domain $R'''$.
% \end{corollaryrep}
\begin{proof}
When $h_{R'''}(z) = h_{R''}(A^{-1}(z))$, $\min_{h_{R''}}\E_{P_{R''}(z,y)}[\ell(h_{R''}(z),y)] = \min_{h_{R'''}}\E_{P_{R'''}(z,y)}[\ell(h_{R'''}(z),y)]$ by Lemma~\ref{lemma:feature_transform}, hence if $h_{R''}$ is optimal for $R''$ then so is  $h_{R'''}$ for the domain $R'''$.
\end{proof}

\subsubsection{Three transformations combined \texorpdfstring{$(R \rightarrow R''')$}{}}
\label{App:all_transforms}
\begin{theoremrep}[\ref{theorem:accuracy_transfer}]
%Under the assumptions of Lemmas~\ref{lemma:prior_transform},~\ref{lemma:label_transform}, and ~\ref{lemma:feature_transform} we have
Let $C:= \left[\frac{P_{R'}(y)}{P_{R}(y)}\right]_{y=1}^{K_R}$ be a vector of probability ratios , $B$ be a $K_T\times K_R$ matrix with $B_{ij}=P(y_{R''}=i|y_{R'}=j)$,
$A:\Z \to \Z$ be an invertible linear map of features. 
Let the classifiers $h_{R'}(z) := h_R(z)$,
$h_{R''}(z):=Bh_{R'}(z)$, $h_{R'''}(z) := h_{R''}(A^{-1}(z))$.
Assuming $\ell$ is the cross-entropy loss, we have
\[
\E_{P_{R'''}(z,y)}[\ell(h_{R'''}(z),y)]
%\leq \E_{P_{S}(x,y)}[C^yl(h_{S}(x),y)] + H(\mathcal{Y}_{S''}|\mathcal{Y}_S;B)
\leq \underbrace{\E_{P_{R}(z,y)}[C(y)\ell(h_{R}(z),y)]}_{\text{\colorbox{limegreen}{Re-weighted reference task loss}}}
+ \underbrace{{H}(\mathcal{Y}_{R''}|\mathcal{Y}_{R'})}_{\text{\colorbox{yelloworange}{Label mismatch}}}.
\]
\end{theoremrep}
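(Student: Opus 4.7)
The plan is to chain together the three lemmas from Appendices~\ref{App:prior_transform}, \ref{App:label_transform}, and \ref{App:feature_transform} by peeling back the transformations from $R'''$ down to $R$ in reverse order. All the analytical content has already been absorbed into the individual lemmas, so the proof of Theorem~\ref{theorem:accuracy_transfer} should essentially reduce to careful bookkeeping.

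First, I would apply Lemma~\ref{lemma:feature_transform}: since $h_{R'''}(z) := h_{R''}(A^{-1}(z))$ and $P_{R'''}$ is the pushforward of $P_{R''}$ under the invertible linear map $A$, the change of variables gives the equality
\[
\E_{P_{R'''}(z,y)}[\ell(h_{R'''}(z),y)] = \E_{P_{R''}(z,y)}[\ell(h_{R''}(z),y)],
\]
so the feature-transformation step contributes nothing to the bound. Next, I would apply Lemma~\ref{lemma:label_transform} with the prescribed classifier $h_{R''}(z) := B h_{R'}(z)$, which yields
\[
\E_{P_{R''}(z,y)}[\ell(h_{R''}(z),y)] \leq \E_{P_{R'}(z,y)}[\ell(h_{R'}(z),y)] + H(\mathcal{Y}_{R''}|\mathcal{Y}_{R'}).
\]
This is where the label-mismatch conditional-entropy term appears, and it is also the only step that introduces an inequality rather than an equality, via the cross-entropy bound $-\log\sum_j B_{y'',j} h^j_{R'}(z) \leq -\log(B_{y'',y'} h^{y'}_{R'}(z))$ established inside Lemma~\ref{lemma:label_transform}. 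Finally, I would apply Lemma~\ref{lemma:prior_transform} with $h_{R'}(z) := h_R(z)$ to convert the expectation under $P_{R'}$ into a $C(y)$-reweighted expectation under $P_R$, producing the re-weighted reference-task loss. Composing the three displayed relations yields the claimed bound.

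Since each individual transformation has already been analyzed, there is no real analytical obstacle; the only thing to be careful about is that the definitions of $h_{R'}, h_{R''}, h_{R'''}$ in the theorem statement coincide exactly with the hypotheses of Lemmas~\ref{lemma:prior_transform}--\ref{lemma:feature_transform}, that $P_{R'}(z|y)=P_R(z|y)$ under the class-prior transformation (so that the reweighting factor $C(y)=P_{R'}(y)/P_R(y)$ is the correct Radon--Nikodym-type ratio used in Lemma~\ref{lemma:prior_transform}), and that the cross-entropy hypothesis needed by Lemma~\ref{lemma:label_transform} is inherited from the theorem's assumption on $\ell$. Given this bookkeeping, the theorem follows as a direct three-line composition of the lemmas.
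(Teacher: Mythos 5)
Your proposal is correct and matches the paper's own proof exactly: the paper chains Lemma~\ref{lemma:feature_transform} (equality), Lemma~\ref{lemma:label_transform} (the sole inequality, contributing $H(\mathcal{Y}_{R''}|\mathcal{Y}_{R'})$), and Lemma~\ref{lemma:prior_transform} (equality, contributing the $C(y)$-reweighting) in precisely the same order. The bookkeeping points you flag are the same ones the paper relies on implicitly.
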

\begin{proof}
%Let $h_{S'''} = \min_{h \in \mathcal{H}_S}\E_{P_{S'''}(x,y)}[l(h(x),y)]$
\if0
\begin{eqnarray*}
    \E_{P_T(x,y)}[l(h^*(x),y)] &\leq& 
    \E_{P_{R'''}(x,y)}[l(h_{R'''}(x),y)] =    
    \E_{P_{R'''}(x,y)}[l(Bh_{R}(A^{-1}x),y)] \\
    &=& \E_{P_{R''}(x,y)}[l(Bh_{R}(A^{-1}(A(x))),y)] 
    =  \E_{P_{R''}(x,y)}[l(Bh_{R}(x),y)]
    \\
    &\leq& \E_{P_{R'}(x,y)}[l(Bh_{R}(x),By)] + {H}(Y_T|Y_{R};B)\\
    &=& \E_{P_{R}(x,y)}[C(y)l(Bh_{R}(x),By)] + {H}(Y_T|Y_{R};B).\\
    %&=& \min_{h \in \mathcal{H}_S}\E_{P_{R''}(x,y)}[l(h_{R''}(A(x)),y)] \\
    %&=& \E_{P_{R''}(x,y)}[l(h_{R''}(A(x)),y)] \\
    %&\leq& \E_{P_{R'}(x,y)}[l(Bh_{R'}(A(x)),By)] + {H}(Y_T|Y_{R};B) \\
    %&=&\E_{P_{R}(x,y)}[C(y)l(Bh_{R}(A(x)), By)] + {H}(Y_T|Y_{R};B) \\
    %&\leq& 
    %\E_{P_T(x,y)}[l(h_T(x),y)] =
    %\E_{P_T(x,y)}[l(Bh_S(Ax),y)] 
    %\\
    %&=& \E_{P_{T'}(x,y)}[l(h_{T'}(x),y)] \mathrm{\;(by\; Lemma~\ref{lemma:feature_transform})}\\
    %&\leq& \E_{P_{T''}(x,y_{S})} [l(h_{T''}(x),y_{S})] +{H}(Y_T|Y_{S};B)\mathrm{\;(by\; Lemma~\ref{lemma:label_transform})}\\
    %&=& \E_{P_{T'''}(x,y_{S})} [C(y_S)l(h_{T'''}(x),y_{S})]+{H}(Y_T|Y_{S};B) \mathrm{\;(by\; Lemma~\ref{lemma:prior_transform})}\\
\end{eqnarray*}

\begin{eqnarray*}
    \E_{P_R(z,y)}[l(h_R(z),y)] 
    &=& \E_{P_{R'}(z,y)}[C(y)l(h_R(z),y)] \\
    &\leq& \E_{P_{R''}(z,y)}[C(By)l(Bh_R(z), By)] + {H}(Y_T|Y_{R};B)\\
    &=& \E_{P_{R'''}(z,y)}[C(By)l(Bh_R(A^{-1}A(z)), By)] + {H}(Y_T|Y_{R};B)\\
    &=& \E_{P_{R'''}(z,y)}[C(y)l(Bh_R(A^{-1}(z)), y)] + {H}(Y_T|Y_{R};B)\\
    %&\leq& \sup_{R_{wc}:W_d(S_{wc},R''')\leq \rho} \E_{P_{R_{wc}}(x,y)}[C(y)l(Bh_R(A^{-1}x),y)] + {H}(Y_T|Y_{R};B)
\end{eqnarray*}
where $\rho:=W_d(T,R''')$, $d((x,y),(x',y'))=\|x-x'\|_2^2 + \infty \cdot 1_{y \neq y'}$.

\begin{eqnarray*}
E_{P_T}[l(h_T^*(x),y)] 
\leq \E_{P_{T}}[l(h(x),y)] \leq \sup_{W(Q,R''') \leq \rho}\E_{P_{Q}(x,y)}[l(h(x),y)], 
\end{eqnarray*}
where $\rho = W(R''',T)$.

We want to explain $E_{P_T}[l(h_T^*(x),y)]$ via $\E_{P_{R'''}(x,y)}[l(h_{R'''}(x), y)]$
\fi

%\begin{eqnarray*}
% - \sup_{W(Q,R''') \leq W(R''',T)}\E_{P_{Q}(x,y)}[l(h_{R'''}(x),y)]
%\end{eqnarray*}

\begin{eqnarray*}
%\E_{P_{T}(x,y)}[l(h_{T}^*(x),y)] 
%&\leq& 
\E_{P_{R'''}(z,y)}[\ell(h_{R'''}(z),y)]
&=& \E_{P_{R''}(z,y)}[\ell(h_{R''}(z),y)] \; \mathrm{(Lemma~\ref{lemma:feature_transform})}\\
&\leq& \E_{P_{R'}(z,y)}[\ell(h_{R'}(z),y)] + {H}(\mathcal{Y}_{R''}|\mathcal{Y}_{R'}) \; \mathrm{(Lemma~\ref{lemma:label_transform})} \\ %, \; since} \; \Y_{R'''}=\Y_T)\\
&=& \E_{P_{R}(z,y)}[C(y)\ell(h_{R}(z),y)] + {H}(\mathcal{Y}_{R''}|\mathcal{Y}_{R'}) \; \mathrm{(Lemma~\ref{lemma:prior_transform})}. %\E_{P_{R}(y)}[C(y)\log(\frac{\sum_{j \in \Y_T}C(j)}{C(y)})].
%\E_{P_{R''}(x,y)}[l(h_{R'''}(Ax),y)]
%= \E_{P_{R''}(x,y)}[l(h_{R''}(x),y)]\\ 
%&\leq& \E_{P_{R'}(x,y)}[l(Bh_{R'}(x), By)] + {H}(Y_T|Y_{S};B)\\
%&=& \E_{P_{S}(x,y)}[C(y)l(Bh_S(x), By)] + {H}(Y_T|Y_{S};B)\\
    %&=& \E_{P_{R'''}(x,y)}[C(y)l(Bh_S(A^{-1}(x)), y)] + {H}(Y_T|Y_{S};B)\\
    %&\leq& \sup_{S_{wc}:W_d(S_{wc},R''')\leq \rho} \E_{P_{S_{wc}}(x,y)}[C(y)l(Bh_S(A^{-1}x),y)] + {H}(Y_T|Y_{S};B)
\end{eqnarray*}
%where $\rho:=W_d(T,R''')$, $d((x,y),(x',y'))=\|x-x'\|_2^2 + \infty \cdot 1_{y \neq y'}$.
\end{proof}

%\JH{So is this a special case?} \AM{Yes.}

% \sigma is an onto map defined by argmax and y''=\sigma(y')
\begin{corollary} 
\label{cor:accuracy_transfer}
%\begin{corollaryrep}[\ref{cor:accuracy_transfer}]
Let $e$ be one-hot encoding of the labels, $|Y_{R'''}|=|Y_R|$, $B:\Delta_{R'} \rightarrow \Delta_{R''}$ be a permutation matrix and $y_{R''} := \sigma(y_{R'}) := \arg\max_{y\in\Y_{R''}}(Be(y_{R'}))_y$  then under the assumptions of Lemmas~\ref{lemma:prior_transform},~\ref{lemma:label_transform}, and ~\ref{lemma:feature_transform} we have 
\(
\E_{P_{R'''}(z,y)}[\ell(h_{R'''}(z),y)]
= \E_{P_{R}(z,y)}[C(y)\ell(h_{R}(z),y)].
\)
%\end{corollaryrep}
\end{corollary}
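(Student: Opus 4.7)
The plan is to revisit each of the three transformation lemmas that combine to give Theorem~\ref{theorem:accuracy_transfer}, and identify exactly where the inequality appears; then show that under the permutation-matrix assumption on $B$, that inequality collapses to an equality while the label-mismatch term vanishes. Recall that the only inequality in the chain is contributed by Lemma~\ref{lemma:label_transform}, whereas Lemma~\ref{lemma:prior_transform} (class-prior transform) and Lemma~\ref{lemma:feature_transform} (feature transform) are already equalities for any loss. So the whole argument reduces to proving that the $R' \to R''$ step is lossless when $B$ is a permutation.

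First I would observe that, since $B$ is a permutation matrix, every entry is $0$ or $1$ and each column has exactly one nonzero entry. With the convention $0\log 0 = 0$, the conditional entropy in Eq.~\ref{eq:conditional_entropy} satisfies
\[
H(\mathcal{Y}_{R''}\mid\mathcal{Y}_{R'}) = -\sum_{y_{R'}}\sum_{y_{R''}} P_{R'}(y_{R'})\,B_{y_{R''},y_{R'}}\log B_{y_{R''},y_{R'}} = 0,
\]
so the label-mismatch term disappears. Next I would verify directly (bypassing the bounding step inside the proof of Lemma~\ref{lemma:label_transform}) that the loss itself is preserved. Because $B$ is a permutation, $Bh_{R'}(z)$ is simply a reordering of the entries of $h_{R'}(z)$, and for the relabeled target $y_{R''} = \sigma(y_{R'})$ we have $(Bh_{R'}(z))_{\sigma(y_{R'})} = h_{R'}^{y_{R'}}(z)$. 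Hence, since the deterministic map makes expectations over $P_{R''}$ equivalent to expectations over $P_{R'}$ with the relabeling $\sigma$,
\[
\E_{P_{R''}(z,y)}[\ell(h_{R''}(z),y)] = \E_{P_{R'}(z,y')}[-\log h_{R'}^{y'}(z)] = \E_{P_{R'}(z,y)}[\ell(h_{R'}(z),y)].
\]

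Finally I would chain these with the untouched equalities from Lemma~\ref{lemma:feature_transform} (applied first) and Lemma~\ref{lemma:prior_transform} (applied last) to obtain
\[
\E_{P_{R'''}(z,y)}[\ell(h_{R'''}(z),y)] = \E_{P_{R''}(z,y)}[\ell(h_{R''}(z),y)] = \E_{P_{R'}(z,y)}[\ell(h_{R'}(z),y)] = \E_{P_{R}(z,y)}[C(y)\ell(h_{R}(z),y)],
\]
which is the claim.

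The main obstacle is the loss-preservation step inside Lemma~\ref{lemma:label_transform}: in the original proof the inequality came from bounding $-\log\sum_j B_{y'',j}h^j_{R'}(z)$ by $-\log(B_{y'',y'} h^{y'}_{R'}(z))$, and in general this is strict. The key observation that makes it tight here is that for a permutation $B$ the sum in the log has only one nonzero summand, so the inequality becomes an identity; one must be careful to handle the $0\log 0$ convention consistently so that the conditional entropy also vanishes cleanly. Once that is dispatched, the rest is immediate from the already-established equalities.
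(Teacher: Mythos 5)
Your proposal is correct and follows essentially the same route as the paper: the paper also bypasses the inequality in Lemma~\ref{lemma:label_transform} by showing directly that the $R'\to R''$ step preserves the loss, using the identity $(Bh_{R'}(z))_{\sigma(y_{R'})}=h^{y_{R'}}_{R'}(z)$ for a permutation $B$ together with the deterministic relabeling $\sigma$, and then chains this with the equalities of Lemmas~\ref{lemma:prior_transform} and~\ref{lemma:feature_transform}. Your additional remark that $H(\mathcal{Y}_{R''}|\mathcal{Y}_{R'})=0$ under the $0\log 0$ convention is consistent but not needed once the loss equality is established directly, which is also how the paper proceeds.
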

\begin{proof}
% \begin{eqnarray*}
%     \E_{P_{R''}}[\ell(h_{R''}(z), y)] &=&
%     \E_{P_{R''}}[\ell(Bh_{R'}(z), y)]
%     = \sum_{y \in \Y_{R''}}P_{R''}(y)\E_{P_{R''}(z|y)}[\ell(Bh_{R'}(z), y)] \\
%     &=& \sum_{y \in \Y_{R'}}P_{R''}(\sigma(y_{R'}))\E_{P_{R''}(z|\sigma(y_{R'}))}[\ell(Bh_{R'}(z), \sigma(y_{R'}))] \\
%     &=& \sum_{y \in \Y_S}P_{R'}(y)\E_{P_{R'}(z|y)}[\ell(h_{R'}(z), y)] 
%     = \E_{R'}[\ell(h_{R'}(z), y)].
% \end{eqnarray*}
Since $y_{R''} := \sigma(y_{R'}) := \arg\max_{y\in\Y_T}(Be(y_{R'}))_y$ we have
\begin{eqnarray*}
    \E_{P_{R''}}[\ell(h_{R''}(z), y_{R''})] &=&
    \E_{P(z, y'')}[\ell(Bh_{R'}(z), y'')]
    = \sum_{y'' \in \Y_{R''}}P(y'')\E_{P(z|y'')}[\ell(Bh_{R'}(z), y'')] \\
    &=& \sum_{y' \in \Y_{R'}}P(\sigma(y'))\E_{P(z|\sigma(y'))}[\ell(Bh_{R'}(z), \sigma(y'))] \\
    &=& \sum_{y' \in \Y_{R'}}P(y')\E_{P(z|y')}[\ell(h_{R'}(z), y')] 
    = \E_{P_{R'}(z,y)}[\ell(h_{R'}(z), y)].
\end{eqnarray*}
The second last equality follows due to the symmetry of cross-entropy loss, i.e., $\ell(h, y) = -\log h_y = -\log B h_{\sigma(y)} = \ell(Bh,\sigma(y))$.

Therefore, we have
\begin{eqnarray*}
\E_{P_{R'''}(z,y)}[\ell(h_{R'''}(z),y)]
&=& \E_{P_{R''}(z,y)}[\ell(h_{R''}(z),y)] \; \mathrm{(Lemma~\ref{lemma:feature_transform})}\\
&=& \E_{P_{R'}(z,y)}[\ell(h_{R'}(z),y)] \; \mathrm{(from\; above)}\\
&=& \E_{P_{R}(z,y)}[C(y)\ell(h_{R}(z),y)] \; \mathrm{(Lemma~\ref{lemma:prior_transform})}.
\end{eqnarray*}
\end{proof}

\subsection{Distribution mismatch between \texorpdfstring{$R'''$ and $T$}{} (Sec.~\ref{sec:distribution_mismatch})}

\begin{lemma}
\label{lemma:WD_same}
Let $U$ and $Q$ be two distributions on $\Z \times \Y$ with the same prior $P_U(y=i) = P_Q(y=i) = P(y=i)$.
With the base distance $d$ defined as in Eq.~\ref{eq:base_distance}, 
%((z,y),(z',y')):= \|z -z'\|_2 + \infty \cdot 1_{y \neq y'}$, where $z,z' \in \Z$ and $y,y' \in \Y$, 
we have $W_d(P_U,P_Q) = \sum_y P(y) W_{\|\cdot\|_2}(P_U(z|y), P_Q(z|y))$.
%Furthermore, if $\gamma_y^\ast$ is the optimal coupling for $(P(x|y),Q(x|y))$ then, the optimal coupling $\gamma^\ast$ for $(P,Q)$ is $\gamma^\ast((x,y),(x',y')) = \gamma_y^\ast (x,x') P(y)$. 
%Check:

\begin{proof}
Let $\omega_y^\ast$ denote the optimal coupling for the conditional distributions $(P_U(z|y), P_Q(z|y))$ for $y \in \Y$ and $\pi^\ast$ denote the the optimal coupling for the joint distributions $(P_U(z,y), P_Q(z,y))$. 
Then, under the definition of our base distance $d$, $\pi^\ast((z,y),(z',y')) = 0$ when $y \neq y'$ i.e. no mass from the distribution $U$ belonging to class $y$ can be moved to the classes $y' \neq y$ of the distribution $Q$ when the class priors of $U$ and $Q$ are the same.
Moreover, since $\sum_{ij}(\omega^\ast_y)_{ij} = 1$ and $\sum_{\{i,j: y_i = y'_j = k\}}\pi^\ast_{ij} = P(y=k)$ for $k \in \Y$ we have $\pi^*((z,y), (z',y')) = \omega^*_y(z,z')P(y)1_{y=y'}$ for every $y,y' \in \Y$.
%\JH{Show that $\omega^\ast_y(z,z')=\pi^\ast((z,y),(z',y)) / P(y)$???}
%Note that $\pi^\ast((z,y),(z',y')=0$ if $y\neq y'$.
%So $\sum_{y'} \pi^\ast((z,y),(z',y')= \pi^\ast((z,y),(z',y))$.

Then, we can show that the total Wasserstein distance between the joint distributions can be expressed as the sum of conditional Wasserstein distances, as follows
% \begin{eqnarray*}e
% &&\sum_y P(y) W_{\|\cdot\|_2}(P_R(z|y), P_Q(z|y)) \\ %&=& \sum_y P(y)\int \omega^\ast_y \in \Omega(R,Q)}[ \|z - z'\|_2]\\
% &=& \sum_y P(y)\int [\omega^\ast_y(z,z')\|z - z'\|_2] dz dz'\\
% &=& \int [\sum_{y,y'} P(y)\omega^\ast(z,z')1_{y=y'}\|z - z'\|_2] dz dz'\\
% &=& \int [\pi^\ast((z,y),(z',y'))1_{y=y'}\|z - z'\|_2] dz dz'\\
% &=& \int [\pi^\ast((z,y),(z',y'))(\|z -z'\|_2 + \infty \cdot 1_{y \neq y'})] dz dz'\\
% &=& \int [\pi^\ast((z,y),(z',y')) d((z,y),(z',y'))] dz dz' \\
% &=& W_d(P_R(z,y), P_Q(z,y)).
% %W(R,Q) &=& \sum_y \int_x \gamma^\ast((x,y),(x',y')) d((x,y),(x',y'))\\
% %&=& \sum_y  \int_x P(y) \gamma_y^\ast(x,x') d(x,x') = \sum_y P(y) W(P(x|y),Q(x|y)).
% \end{eqnarray*}
\begin{eqnarray*}
&& W_d(P_U(z,y), P_Q(z,y)) \\
&=& \sum_{y, y'} \int \pi^\ast((z,y),(z',y')) d((z,y),(z',y')) dz dz' \\
&=& \sum_{y, y'} \int \pi^\ast((z,y),(z',y'))(\|z -z'\|_2 + \infty \cdot 1_{y \neq y'}) dz dz'\\
&=& \sum_{y, y'} \int \omega^*_y(z,z')P(y)1_{y=y'}(\|z -z'\|_2 + \infty \cdot 1_{y \neq y'}) dz dz'\\
&=& \sum_{y, y'} \int \omega^*_y(z,z')P(y)1_{y=y'}\|z -z'\|_2 dz dz' \;\; (\mathrm{since} \; 1_{y=y'}\cdot 1_{y \neq y'} = 0)\\
&=& \sum_{y, y'} P(y)1_{y=y'}\int \omega^*_y(z,z')\|z -z'\|_2 dz dz' \\
&=& \sum_y P(y)\int \omega^*_y(z,z')\|z -z'\|_2 dz dz' \\
&=& \sum_y P(y) W_{\|\cdot\|_2}(P_U(z|y), P_Q(z|y)).
%&=& \E_y[\int [\omega^*_y(z,z')(\|z -z'\|_2)] dz dz'] \\
%&=&\E_y[W_{\|\cdot\|}(P_R(z|y)]
% &&\sum_y P(y) W_{\|\cdot\|_2}(P_R(z|y), P_Q(z|y)) \\ %&=& \sum_y P(y)\int \omega^\ast_y \in \Omega(R,Q)}[ \|z - z'\|_2]\\
% &=& \sum_y P(y)\int [\omega^\ast_y(z,z')\|z - z'\|_2] dz dz'\\
% &=& \int [\sum_{y,y'} P(y)\omega^\ast(z,z')1_{y=y'}\|z - z'\|_2] dz dz'\\
% &=& \int [\pi^\ast((z,y),(z',y'))1_{y=y'}\|z - z'\|_2] dz dz'\\
% &=& \int [\pi^\ast((z,y),(z',y'))(\|z -z'\|_2 + \infty \cdot 1_{y \neq y'})] dz dz'\\
% &=& \int [\pi^\ast((z,y),(z',y')) d((z,y),(z',y'))] dz dz' \\
% &=& W_d(P_R(z,y), P_Q(z,y)).
\end{eqnarray*}

\end{proof}
\end{lemma}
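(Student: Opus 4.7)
The plan is to exploit the infinite penalty that the base distance $d$ imposes on cross-label transport. First I would argue that any coupling $\pi \in \Pi(P_U, P_Q)$ with finite expected cost must place zero mass on pairs with $y \neq y'$, since otherwise the $\infty \cdot 1_{y \neq y'}$ term forces the expectation to be infinite. Because $P_U(y) = P_Q(y) = P(y)$ for every $y$, class-preserving couplings do exist (e.g.\ the product coupling within each class), so the infimum defining $W_d(P_U, P_Q)$ can be restricted to this set without loss.

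Next I would establish a bijection between class-preserving couplings of the joints and families $\{\omega_y\}_{y \in \Y}$ of couplings of the conditionals. Given a class-preserving $\pi$, define $\omega_y(z,z') := \pi((z,y),(z',y))/P(y)$ for each $y$ with $P(y) > 0$; the marginal constraints on $\pi$ translate, upon dividing by $P(y)$, into the statement that $\omega_y \in \Pi(P_U(\cdot|y), P_Q(\cdot|y))$. Conversely, any family of conditional couplings yields a class-preserving joint coupling via $\pi((z,y),(z',y')) := P(y)\, \omega_y(z,z')\, 1_{y=y'}$. Under this correspondence the cost decomposes:
\begin{equation*}
\E_{((z,y),(z',y')) \sim \pi}[d((z,y),(z',y'))] = \sum_y P(y) \int \omega_y(z,z')\, \|z-z'\|_2 \, dz\, dz',
\end{equation*}
because the indicator $1_{y \neq y'}$ vanishes on the support of every finite-cost $\pi$.

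Finally, minimizing the right-hand side separates into independent infima over each $\omega_y$, each of which is by definition $W_{\|\cdot\|_2}(P_U(\cdot|y), P_Q(\cdot|y))$, giving the desired identity. The main obstacle I anticipate is the careful bookkeeping needed to confirm that the rescaling $\omega_y = \pi(\cdot,y;\cdot,y)/P(y)$ and its inverse construction really do preserve the marginal constraints in both directions, and handling the measure-zero classes (those with $P(y) = 0$) cleanly; once this bijection is in place the argument is essentially linearity of expectation and requires no additional machinery such as Kantorovich duality.
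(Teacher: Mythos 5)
Your proposal is correct and follows essentially the same route as the paper: use the infinite cross-label penalty to restrict to class-preserving couplings, decompose such couplings into per-class conditional couplings weighted by the common prior, and observe that the cost (and hence the infimum) separates over classes. If anything, your version is slightly more careful than the paper's, which directly asserts the factorized form of the optimal coupling rather than spelling out the bijection between class-preserving joint couplings and families of conditional couplings.
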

%Note that the same prior is required for $W(P,Q)$ to be finite under our distance $d((x,y),(x',y'):= d(x,x') + \infty I[y=y']$.

\begin{theoremrep}[\ref{theorem:dist_shift}]
Let the distributions $T$ and $R'''$ be defined on the same domain $\Z \times \Y$ and assumption~\ref{assumption:lipschitz_loss} holds, 
then
\(
\E_{P_T(z,y)}[\ell(h(z), y)] - \E_{P_{R'''}(z,y)}[\ell(h(z), y)] \leq
\underbrace{\tau \;W_d(P_{R'''},P_T)}_{\text{\colorbox{salmon}{Distribution mismatch}}},
\) with $d$ as in Eq.~\ref{eq:base_distance}.
% then for the base distance $d((z,y),(z',y')) = \|z - z'\|_2 + \infty \cdot 1_{y \neq y'}$ we have,
% \[
% \E_{P_T(z,y)}[\ell(z, y)] - \E_{P_{R'''}(z,y)}[\ell(z, y)] \leq
% \tau \;W_d(P_{R'''},P_T;A,B).
% \]
\end{theoremrep}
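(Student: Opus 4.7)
The plan is to prove the bound by introducing an optimal coupling between $P_{R'''}$ and $P_T$, then using the Lipschitz assumption to control the pointwise difference of the losses. Because the base distance $d$ assigns $+\infty$ to label mismatches, the coupling is effectively forced to pair samples with the same label, which is exactly what Lemma~\ref{lemma:WD_same} formalizes when the two distributions share the same class prior (assumption~\ref{assumption:lipschitz_loss}, part 2).

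Concretely, I would first note that since $P_T(y)=P_{R'''}(y)$, the Wasserstein distance $W_d(P_{R'''},P_T)$ is finite, and let $\pi^\ast$ denote an optimal coupling in $\Pi(P_{R'''},P_T)$. By the argument used in Lemma~\ref{lemma:WD_same}, $\pi^\ast((z,y),(z',y'))$ is supported on $\{y=y'\}$. Then I would rewrite the LHS as a single expectation under $\pi^\ast$:
\[
\E_{P_T}[\ell(h(z),y)] - \E_{P_{R'''}}[\ell(h(z),y)] = \E_{((z,y),(z',y'))\sim\pi^\ast}\bigl[\ell(h(z'),y') - \ell(h(z),y)\bigr].
\]
On the support of $\pi^\ast$ we have $y=y'$, so the integrand simplifies to $\ell(h(z'),y)-\ell(h(z),y)$, to which I can directly apply the $\tau$-Lipschitz assumption on $\ell\circ h$ in the feature argument.

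This gives the pointwise bound $\ell(h(z'),y)-\ell(h(z),y) \leq \tau\|z-z'\|_2$, and taking expectations yields
\[
\E_{\pi^\ast}\bigl[\ell(h(z'),y')-\ell(h(z),y)\bigr] \leq \tau\, \E_{\pi^\ast}[\|z-z'\|_2] = \tau\, \E_{\pi^\ast}[d((z,y),(z',y'))] = \tau\, W_d(P_{R'''},P_T),
\]
where in the second equality I use that the $\infty\cdot 1_{y\neq y'}$ term in $d$ vanishes on the support of $\pi^\ast$, and in the final equality I use optimality of $\pi^\ast$.

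The only subtle step is the first one: justifying that the optimal coupling is concentrated on $\{y=y'\}$ and that the $\infty$ penalty does not produce an indeterminate expression. This is precisely the content of (the proof of) Lemma~\ref{lemma:WD_same}, which relies on $P_T(y)=P_{R'''}(y)$; without equal priors, $W_d$ would be $+\infty$ and the bound would be vacuous. Beyond that, the argument is a standard Kantorovich-type estimate, and I do not expect any further obstacle.
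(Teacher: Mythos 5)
Your proof is correct, but it takes a different route from the paper's. You work with the primal (coupling) formulation of $W_d$: pick an optimal coupling $\pi^\ast$, observe that the $\infty\cdot 1_{y\neq y'}$ term forces $\pi^\ast$ to be label-preserving (which requires the equal-priors assumption for a finite-cost coupling to exist at all), rewrite the difference of expectations under $\pi^\ast$, and apply the pointwise Lipschitz bound. The paper instead argues through the dual: it conditions on $y$, uses $P_T(y)=P_{R'''}(y)$ to write the difference as $\E_{P_T(y)}\bigl[\E_{P_T(z|y)}[\ell(h(z),y)]-\E_{P_{R'''}(z|y)}[\ell(h(z),y)]\bigr]$, bounds each inner difference by a supremum over $\tau$-Lipschitz functions, invokes Kantorovich--Rubinstein duality to identify that supremum with $\tau\,W_{\|\cdot\|_2}(P_T(z|y),P_{R'''}(z|y))$, and then uses the full statement of Lemma~\ref{lemma:WD_same} to reassemble the prior-weighted class-conditional distances into $W_d(P_{R'''},P_T)$. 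Your argument is more elementary in that it needs neither the duality theorem nor the decomposition identity of Lemma~\ref{lemma:WD_same} --- only the observation from its proof that finite-cost couplings concentrate on $\{y=y'\}$; the paper's version has the advantage of making the class-conditional structure explicit. The only loose end on your side is the tacit attainment of the infimum defining $W_d$; if you prefer not to argue existence of $\pi^\ast$ (the cost is lower semicontinuous, so it does exist here), you can run the same computation with an $\epsilon$-optimal coupling and let $\epsilon\to 0$.
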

%\JH{To prove it, we need a lemma for inf-label dist Wass-distance being the same as sum of cond Wass-dist.}
%\JH{I checked the proof. We can move it to Appendix.}
\begin{proof}
\begin{eqnarray*}
&& \E_{P_T(z,y)}[\ell(h(z), y)] - \E_{P_{R'''}(z, y)}[\ell(h(z), y)] %\AM{\mathrm{This\; difference\; may\; be \;negative}} 
\\
&=& \E_{P_T(y)}\E_{P_T(z|y)}[\ell(h(z), y)] - \E_{P_{R'''}(y)}\E_{P_{R'''}(z|y)}[\ell(h(z), y)] \\
&=& \E_{P_T(y)}[\E_{P_T(z|y)}[\ell(h(z), y)] - \E_{P_{R'''}(z|y)}[\ell(h(z), y)]] \; (\mathrm{since\;} P_T(y) = P_{R'''}(y)) \\
&\leq& \E_{P_T(y)}[\sup_{\ell' \circ h'\in \tau-Lipschitz} \E_{P_T(z|y)}[\ell'(h'(z), y)] - \E_{P_{R'''}(z|y)}[\ell'(h'(z), y)]] \\
&=& \E_{P_T(y)}[\tau \; W_{\|\cdot\|_2}(P_T(z|y), P_{R'''}(z|y))] \;(\mathrm{Kantorovich-Rubinstein \;duality})\\
&=& \tau \; W_d(P_{R'''},P_T) \;(\mathrm{Lemma~\ref{lemma:WD_same}}).
\end{eqnarray*}
\end{proof}

\subsection{Final bound (Sec.~\ref{sec:final_bound})}

\begin{theoremrep}[~\ref{theorem:final_bound}]
%Let the space $\Z$ be compact, the loss 
Let $\ell$ be the cross entropy loss 
then under the assumptions of Theorems~\ref{theorem:accuracy_transfer} and~\ref{theorem:dist_shift} we have, 
\begin{eqnarray*}
\E_{P_{T}(z,y)}[\ell(h_{T}(z),y)] 
\leq 
\underbrace{\E_{P_{R}(z,y)}[C(y)\ell(h_{R}(z),y)]}_{\text{\colorbox{limegreen}{Re-weighted reference task loss}}} + 
\underbrace{{H}(\mathcal{Y}_{R''}|\mathcal{Y}_{R'})}_{\text{\colorbox{yelloworange}{Label mismatch}}} 
+ \underbrace{\tau \;W_d(P_{R'''},P_T)}_{\text{\colorbox{salmon}{Distribution mismatch}}}.
\end{eqnarray*}
\begin{proof}
%\; (\mathrm{Theorem~\ref{theorem:accuracy_transfer}})\label{eq:transform}
Let  $\ell \circ h_{T}$,  $\ell \circ h_{R}$, and $\ell \circ h_{R'''}$ be $\tau-$Lipschitz (from Assumption~\ref{assumption:lipschitz_loss}).
\begin{eqnarray*}
\E_{P_{T}(z,y)}[\ell(h_{T}(z),y)] 
%\leq \E_{P_{R'''}(x,y)}[l(h^*_{R'''}(x),y)] + \tau W_d(R''',T)\\
&\leq& \E_{P_{T}(z,y)}[\ell(h_{R'''}(z),y)] \; (\mathrm{Optimality \; difference })\label{eq:classifier_diff}\\
&\leq& \E_{P_{R'''}(z,y)}[\ell(h_{R'''}(z),y)] + \tau \;W_d(P_{R'''},P_T) \; (\mathrm{Theorem~\ref{theorem:dist_shift}})\label{eq:dist_shift}\\
&\leq& \E_{P_{R}(z,y)}[C(y)\ell(h_{R}(z),y)] + {H}(\mathcal{Y}_{R''}|\mathcal{Y}_{R'}) + \tau \;W_d(P_{R'''},P_T) \; (\mathrm{Theorem~\ref{theorem:accuracy_transfer}})\label{eq:transform}.
%&\leq& \underbrace{\E_{P_{S}(z,y)}[C(y)l(h_{S}(z),y)]}_{\text{\colorbox{limegreen}{re-weighted source loss}}} + \underbrace{{H}(\mathcal{Y}_T|\mathcal{Y}_S; B)}_{\text{\colorbox{yelloworange}{label mismatch}}} + \underbrace{\tau \;W_d(R''',T)}_{\text{\colorbox{salmon}{distribution mismatch}}} \; (\mathrm{Theorem~\ref{theorem:accuracy_transfer}})\label{eq:transform}.
\end{eqnarray*}
\end{proof}
\end{theoremrep}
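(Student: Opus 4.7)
The plan is to chain the two previously established theorems through a standard optimality argument. Since $h_T$ is defined as the minimizer of the expected loss under $P_T$ over the hypothesis class $\mathcal{H}$, and since the transformed classifier $h_{R'''}$ lies in $\mathcal{H}$ (it is obtained from $h_R\in\mathcal{H}$ by a linear label transformation $B$ and an invertible linear feature transformation $A^{-1}$, both of which preserve the linear classifier structure), the first step will be to write
\[
\E_{P_{T}(z,y)}[\ell(h_{T}(z),y)] \;\leq\; \E_{P_{T}(z,y)}[\ell(h_{R'''}(z),y)].
\]

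Next, I would invoke Theorem~\ref{theorem:dist_shift} with the specific choice $h=h_{R'''}$. Assumption~\ref{assumption:lipschitz_loss} guarantees that $\ell\circ h_{R'''}$ is $\tau$-Lipschitz (this is enforced via the gradient-norm penalty during linear fine-tuning, as discussed in Section~\ref{sec:distribution_mismatch}), and the prior-matching condition $P_T(y)=P_{R'''}(y)$ is exactly assumption~2) of the same Assumption, which can always be arranged by our control over $B$ and $C$. This yields
\[
\E_{P_{T}(z,y)}[\ell(h_{R'''}(z),y)] \;\leq\; \E_{P_{R'''}(z,y)}[\ell(h_{R'''}(z),y)] + \tau\, W_d(P_{R'''},P_T).
\]

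The final step is to apply Theorem~\ref{theorem:accuracy_transfer} directly to bound the expected loss of $h_{R'''}$ under $P_{R'''}$ by the re-weighted reference loss plus the label mismatch term $H(\mathcal{Y}_{R''}|\mathcal{Y}_{R'})$. Concatenating the three inequalities produces the claimed bound.

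The main subtlety to be careful about, rather than a genuine obstacle, is ensuring all preconditions align simultaneously: the hypothesis class must be closed under the transformations so the optimality step is valid; the Lipschitz constant $\tau$ must apply to $\ell\circ h_{R'''}$ (not only to $\ell\circ h_R$), which follows because composing with an invertible linear $A^{-1}$ rescales the Lipschitz constant and one can absorb this into $\tau$ or regularize $A$ accordingly; and $P_T(y)=P_{R'''}(y)$ must hold, which is enforced as a constraint in the optimization problem (Eq.~\ref{Eq:optimization}) via $P_T(y)=BD$. Once these are in place, the proof is essentially a three-line chaining and no further computation is required.
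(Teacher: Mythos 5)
Your proposal is correct and follows exactly the same three-step chain as the paper's proof: optimality of $h_T$ to compare against $h_{R'''}$, then Theorem~\ref{theorem:dist_shift} to handle the distribution mismatch, then Theorem~\ref{theorem:accuracy_transfer} to reach the re-weighted reference loss and label mismatch terms. Your added remarks on verifying the preconditions (hypothesis-class closure, the Lipschitz property of $\ell\circ h_{R'''}$, and the prior-matching constraint) match the paper's own discussion of these points.
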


In our experiments, we enforce the $\tau-$Lipschitz constraint for $\ell \circ h_R$ and $\ell \circ h_T$ and verify that the Lipschitz constant of $\ell \circ h_{R'''}$ remains close to $\tau$ within tolerance.

\subsection{Extension to non-linear classifiers and non-linear transformations}
\label{app:extenstion_to_non_linear}

To extend our analysis to non-linear classifiers/transformations, we allow $A:\mathcal{Z} \rightarrow \mathcal{Z}$ to be a non-linear map and the classifiers $h \in \mathcal{H}_{\mathrm{non\_linear}}$ to be also non-linear (such as multi-layer perceptron). 
In addition to the Assumption~\ref{assumption:lipschitz_loss} (1), which requires $\ell \circ h_{R'''}$ to be $\tau-$Lipschitz, we also require that $h_{R'''}$ belongs to the same class $\mathcal{H}_{\mathrm{non\_linear}}$ as $h_T$ and $h_R$ for any $A$.
This holds for example when $h$ is linear and $A$ is also linear or when $h$ is a multilayer perceptron and $A$ is %\AM{non-}
linear.
With this additional assumption, all of our proofs work for any linear/non-linear transformation of the feature space, without any change. % of Lemma~\ref{lemma:feature_transform}
%We can also modify the set of linear classifiers in  Assumption~\ref{assumption:lipschitz_loss}
%Similarly, the Assumption~\ref{assumption:lipschitz_loss} can be modified to use the set of non-linear classifiers ($\mathcal{H}_{\mathrm{non\_linear}}$).
%Additionally, we require that $h_T, h_{R'''} \in \mathcal{H}_{\mathrm{non\_linear}}$.
%Based on these assumptions we can ensure that $\ell \circ h_T$ and $\ell \circ h_{R'''}$ are $\tau-$Lipschitz and both $h_T$ and $h_{R'''}$ are hypotheses from the same Hypothesis set.
Thus, Theorem~\ref{theorem:accuracy_transfer}, Theorem~\ref{theorem:dist_shift} and Theorem~\ref{theorem:final_bound} hold for non-linear classifiers as well.
With these extensions, our bounds can be used to explain transferability even with non-linear classifiers.

\section{Additional related work}
\label{app:additional_rw}
{\bf Distributional divergence-based analyses of learning with distribution shifts (under same feature and label sets):}
\label{app:analysis_domain_adaptation}
%\JH{Move to the end of the doc? Oh, you can't, because the main text has to change then.} \AM{Yes.}
Here we review some of the previous works that analyzed the problem of learning under distribution shifts in terms of distributional divergences such as the Wasserstein distance. These analyses apply when the feature and label spaces remain the same between the original and the shifted distribution.
%problem which have demonstrated that generalization to an unseen domain can be studied in terms of the distributional distance between the source and the unseen distributions. Please refer to the original works for detailed descriptions of the assumptions and results. 

Early works {\citep{ben2007analysis,shen2018wasserstein,mansour2009domain} 
showed that the performance on a shifted distribution (target domain) can be estimated in terms of the performance of the source domain and the distance between the two domains' marginal distributions and labeling functions. 
Specifically, \citep{ben2007analysis} showed that 
that 
\[
\mathcal{E}_T(h, f_T) \leq \mathcal{E}_S(h, f_S) + d_1(P_S, P_T) + \min\{\mathbb{E}_{P_S}[|f_S(z) - f_T(z)|], \mathbb{E}_{\mathcal{D}_T}[|f_S(z) - f_T(z)|]\},
\]
where $d_1$ denotes the total variation distance, $f:\mathcal{Z}\rightarrow[0,1]$ denotes the labeling function, $h:\mathcal{Z}\rightarrow\{0,1\}$ denotes the hypothesis and $\mathcal{E}_P(h,f) := \mathbb{E}_{z \sim P}[|h(z) - f(z)|]$ denote the risk of the hypothesis $h$.
%Let a domain be $(P, f)$, where $P$ denotes the distribution on inputs $\mathcal{X}$ and $f:\mathcal{X}\rightarrow[0,1]$ denotes the labeling function. 
%Let $h:\mathcal{X}\rightarrow\{0,1\}$ be a hypothesis and $\mathcal{E}_P(h,f) := \mathbb{E}_{x \sim P}[|h(x) - f(x)|]$ denote the risk of the hypothesis $h$. 
%Then it was shown in Theorem 1 of \citep{ben2007analysis} that 
%\[
%\mathcal{E}_T(h, f_T) \leq \mathcal{E}_S(h, f_S) + d_1(P_S, P_T) + \min\{\mathbb{E}_{P_S}[|f_S(x) - f_T(x)|], \mathbb{E}_{\mathcal{D}_T}[|f_S(x) - f_T(x)|]\},
%\]
%where $d_1$ denotes the total variation distance.
A follow up work \citep{shen2018wasserstein}, showed a similar result using type-1 Wasserstein distance for all $K-$Lipschitz continuous hypotheses i.e., 
%{\bf ``Wasserstein Distance Guided Representation Learning''}
%Theorem 1. Under assumptions of Lemma 1, for every $h$ the following holds:
\[
    \mathcal{E}_T(h, f_T) \leq \mathcal{E}_S(h, f_S) + 2 K\cdot W_1(P_S, P_T) + \lambda,
\]
where $\lambda$ is the combined error of the ideal hypothesis $h^\ast$ that minimizes the combined error $\mathcal{E}_S(h, f_S) + \mathcal{E}_T(h, f_T)$.
Another recent work \citep{le2021lamda} used a target transformation-based approach and Wasserstein distance to quantify learning in the presence of data and label shifts. 
Other works \citep{kumar2022certifying,sehwag2021robust} also presented an analysis based on Wasserstein distance to understand how the accuracy of smoothed classifies and robustness change in the presence of distribution shifts. 
Compared to these works the bound proposed in Theorem~\ref{theorem:dist_shift} considers cross-entropy loss (which is 
a popular choice of the loss function in the classification setting) and uses a joint feature and labels Wasserstein distance rather than only marginal Wasserstein distance. 
These differences make the bound proposed in Theorem~\ref{theorem:dist_shift} useful in the analysis of transfer learning than those proposed in previous works when we have access to labeled target domain data.

{\bf Comparison with \cite{tran2019transferability}:} 
The closest work to ours is that of \citep{tran2019transferability}, which showed that transferability to a target task can be related to transferability to another task (source task in their work) and the label mismatch between the two tasks. 
However, the bound is proposed in a restrictive setting when both the tasks have the same features but different labels (i.e. same images labeled differently between the two tasks). 
In this setting, \cite{tran2019transferability}, showed that transferability is upper bounded by loss of the source classifier on the source task and the conditional entropy (CE) of the label sets of the two tasks. 
We significantly extend this analysis to general tasks which is the most commonly used setting in practice (e.g., our analysis allows us to study transfer learning from  ImageNet with 1000 classes to CIFAR-100 with 100 unrelated classes, where both tasks have different images). 
In this setting, our main result in Theorem~\ref{theorem:final_bound}, shows that transferability involves additional terms such as the distribution mismatch term (in the form of Wasserstein distance), the prior mismatch term (in the form of re-weighted source loss) and the conditional entropy between the label sets. 
Moreover, the bound proposed by \cite{tran2019transferability} is a special case of our bound with $C$ being the vector of all ones (no prior change) and $A$ being Identity (data distributions of two tasks are the same).

{\bf Comparison with other transferability estimation and model selection methods:} 
\label{app:comparision_with_sbte}
The problem of transferability estimation has gained a lot of attention recently, especially due to the availability of a larger number of pre-trained models. 
Earlier works used models end-to-end fine-tuned on target tasks to evaluate transferability via task-relatedness estimated using the actual target loss \cite{zamir2018taskonomy} and the Fisher information matrix \cite{achille2019task2vec}. 
However, the requirement of models trained on target tasks limits their practical utility. 
Other works \cite{dwivedi2019representation,dwivedi2020duality,song2020depara} used the representation of the data from the tasks from these end-to-end fine-tuned models and developed similarity scores that achieved high correlation with transferability. 
However, these approaches are not practical since they require models end-to-end fine-tuned on the target task. 
Our work does not depend on models trained on target tasks as shown in our analysis (Theorem~\ref{theorem:final_bound}) making it both theoretically and practically sound. 

Another line of work \citep{8803726,nguyen2020leep,huang2022frustratingly,you2021logme,tan2021otce,shao2022not,ding2022pactran} focuses on the problem of pre-trained model selection where the goal is to find the best pre-trained classifier from a model zoo that will achieve the highest transferability to a particular downstream task.
Thus, the main challenge of this problem is to be able to estimate transferability in a way that is more efficient than fine-tuning the pre-trained models on the downstream tasks. 
To this end, recent works have proposed several scores that are correlated with the accuracy of the models after fine-tuning them on the target task, which we refer to as score-based transferability estimation (SbTE) methods, in our work.

However, unlike our work, the goal of SbTE works is not to propose a universal bound or identify terms that provably govern transferability. 
Moreover, while the scores proposed in SbTE works correlate well with transferability, they are only meaningful in a relative sense. 
Concretely, a score of 1 (e.g., for LogMe [46]) on a CIFAR-100 task for a particular model does not indicate whether transferability is good or bad and requires comparison with scores of other pre-trained models on the same target task. 
On the other hand, our upper bound directly approximates transferability, e.g., an upper bound of 1 on the CIFAR-100 task for a model implies that transfer learning will incur an average cross-entropy loss of less than 1 implying a highly accurate transfer. 
Our results in Figs.~\ref{fig:empirical_vs_predicted_transferability}, and~\ref{fig:empirical_vs_predicted_transferability_all} attest that our upper bound is indeed a good estimate of the transferability.

Another disadvantage of the scores proposed in these works is that they cannot be compared across target tasks, unlike our upper bound. 
As observed from Fig. 4 of LogMe [46], scores for CIFAR-10 are lower than scores for CIFAR-100 on the same pre-trained model, but, the transferability to CIFAR-10 is better than that to CIFAR-100. On the other hand, from our Fig.~\ref{fig:empirical_vs_predicted_transferability_all}, the upper bounds on CIFAR-10 are lower than those of CIFAR-100 implying better transferability of classifiers pre-trained on ImageNet to CIFAR-10. 
Thus, our work is more suitable for estimating the absolute performance on various target tasks given a particular pre-trained classifier.

Thus, while the goal of our work differs fundamentally from SbTE approaches the problem addressed in this work is of significant practical importance.
In Sec.~\ref{sec:sbte_evaluation}, we show that task-relatedness estimated via Alg.~\ref{alg:bound_estimation} achieves competitive performance compared to popular SbTE methods on this problem. 
% Specifically, if the goal is pre-trained model selection then SbTE approaches are more suitable but if the goal is to analyze the transferability of models to downstream tasks, a rigorous analysis as proposed in our work is necessary.

{\bf Comparison with task transfer learning approaches:} This approach focuses on 
explaining the transfer performance based on the relationship between tasks. 
For instance, works such as 
\citep{hu2024understanding, tripuraneni2020theory, guan2022task} study the problem of few-shot learning (FSL) where a model is trained on data from related training tasks and is adapted to an unseen task using only a few samples. 
Different from these works, we focus on the setting where a pre-trained encoder trained on some pre-training dataset is adapted with enough samples/gradient steps to a downstream task. This downstream target task may or may not have any relation to the pre-training task unlike \citep{hu2024understanding, tripuraneni2020theory, guan2022task}.

Concretely, \citep{hu2024understanding} proposed a model-agnostic metric called Task Attribute Distance to gauge the success of transfer. Our work, on the other hand, defines task-relatedness based on the similarity of the representations of reference and target tasks in the representations of the pre-trained model (and is model dependent) rather than relying on the attribute information, which may not be available in TL setting. 
\citep{tripuraneni2020theory} analyzes the sample complexity for learning a model shared across tasks and adapting it to a new target task and shows task diversity to be a crucial component for the success of transfer in the FSL setting. 
Our work on the other hand does not assume access to shared tasks or restrict the number of samples required for fine-tuning on the target task. 
Moreover, their notion of task diversity requires access to a set of training tasks that may not be available in the TL setting, making our notion of task-relatedness more practical for TL.
Lastly, \citep{guan2022task} proposes a notion of task-relatedness for the FSL setting, allowing to utilize all the data from available training tasks to help learn a model on a new task with a few gradient steps. This notion is model-agnostic and defined over the sample space ($\X \times \Y$) unlike our measure which is defined in the representation space of the model whose transferability needs to be evaluated.

Thus while task-relatedness is at the core TL, the notions proposed by \citep{hu2024understanding, tripuraneni2020theory, guan2022task} are suitable for task transfer setting where as our notion is more suitable for the inductive TL setting.

%%%%%%%%%%%%%%%%%%%%%%%%%%%%%%%%%%%%%%%%%%%%%%%%%%%%%%%%%%%%
\section{Additional experiments}
\label{App:additional_experiments}
%%%%%%%%%%%%%%%%%%%%%%%%%%%%%%%%%%%%%%%%%%%%%%%%%%%%%%%%%%%%

\subsection{Small-scale experiment to evaluate Alg.~\ref{alg:bound_estimation}}
\label{sec:optimization_effectiveness}

\emph{\ul{ 1) Our algorithm produces transformations that achieve a better value for the bound compared to naive transformations.}} 
%Yes}, learning transformation especially $A$ achieves a significantly better value of the bound}. 
%\emph{2) Do the number of classes present in the reference task matter?} 
\emph{\ul{2) Having the same number of classes in the reference task as in the target task leads to the best value of the bound.}}
\emph{\ul{3) It is only marginally better to have semantically related classes in the reference task.}}
%\emph{{\bf Yes}, but the gain in terms of the bound over using randomly chosen classes is small}.

%We test the effectiveness of the transformations learned via 
We evaluate Alg.~\ref{alg:bound_estimation} for computing task-relatedness and predicting the transferability of a ResNet-18 model to CIFAR-10 in two settings.
%We present an evaluation of two settings.
In the first setting, we consider a reference task that comprises data from 10 classes chosen at random and 10 classes that are semantically related to the labels of CIFAR10 from ImageNet \citep{deng2009imagenet} (see App.~\ref{App:experimental_details}) whereas in the setting we select reference task with 20 classes.
%In the second setting, we consider a reference task that comprises data from 20 randomly chosen and sclasses that are chosen randomly from the 1000 ImageNet classes, and the second case where the reference task uses 20 classes semantically related to CIFAR-10 classes. %, for training the ResNet-18 model.
We compare the cross-entropy loss on CIFAR10 obtained after linear fine-tuning (transferability) with our bound, by using various transformations including those learned via Alg.~\ref{alg:bound_estimation}. 
\begin{wrapfigure}[26]{r}{0.4\textwidth}
\centering
\includegraphics[width=0.3\textwidth]{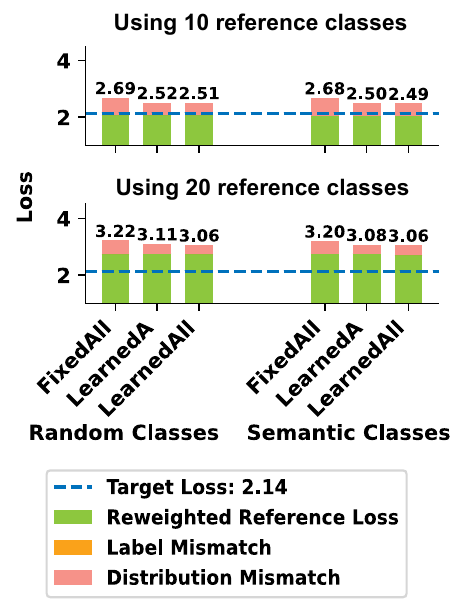}
\vspace{-0.3cm}
\caption{
%\AM{change plot title to say reference classes} 
Transformations optimized using Eq.~\ref{Eq:optimization} produce a better task-relatedness than obtained by naively chosen transformations, primarily due to decreased distribution mismatch.
The presence of the same number of classes in the reference as those in the target achieves the smallest task-relatedness value. Semantically related classes in the reference task help but only marginally. %only has a limited impact on the bound compared to the presence of random classes. 
%} 
}
\label{fig:optimization_effectiveness}
\end{wrapfigure}
%\end{figure}
We test 3 different cases. 
The first is {\bf FixedAll:} where all transformations are fixed ($A$ is fixed to the Identity matrix, $B$ is fixed to a random permutation matrix in the setting with 10 classes, and to a matrix such that two reference classes match a single target class in the setting with 20 classes, $D$ is fixed to the reference prior). 
%\begin{figure}
The second is {\bf LearnedA:} where we use Alg.~\ref{alg:bound_estimation} to learn only $A$ while $B, D$ are fixed as in FixedAll, and lastly {\bf LearnedAll:} where all the transformations are learned via Alg.~\ref{alg:bound_estimation}.
Our results in Fig.~\ref{fig:optimization_effectiveness}, show that in both settings using FixedAll, our bound is marginally better when the reference task has classes semantically related to the target task.
In all cases, the bound becomes significantly better in the LearnedA setting due to learning the transformation $A$ that transforms the reference distribution to better match the target and decrease the distribution mismatch term of the bound. 
Lastly, by learning all the transformations, in the LearnedAll setting, we achieve the best value for the bound, regardless of whether there are randomly chosen or semantically related classes in the reference task.
Moreover, we see that the value of the bound estimated when the reference task has 10 source classes is better in all cases. This is due to smaller re-weighted reference loss since learning a 20-way classifier is more challenging than learning a 10-way classifier, especially with the Lipschitz constraint. 
In the setting with 20 classes, Alg.~\ref{alg:bound_estimation}, prefers to retain data from 10 of the 20 reference task classes, Fig.~\ref{fig:tSNE_plots} (right), reducing the re-weighted reference loss, makes $B$ sparse, reducing the label mismatch term, and aligns the reference and target distributions, via $A$, reducing the distribution mismatch term.

Overall, our results show that 1) learning $D$ prefers to retain the data from the same number of reference classes as those present in the target, 2) the $B$ matrix eventually becomes sparse, making the label mismatch term zero, and 3) there is only a small difference in task-relatedness between LearnedA and LearnedAll settings.
Based on these insights, we use Alg.~\ref{alg:bound_estimation} in the \emph{LearnedA} setting, with the reference task containing the same number of randomly sampled classes from ImageNet as the same number of classes in the target task (since it may not always possible to find semantically similar classes for all target tasks) and fix $B$ to be a random permutation matrix, for all other experiments in the paper.
We select classes for the reference task from ImageNet due to its diversity and the fact that it has a large number of classes.
However, any dataset can be used to define the reference task (e.g., we used Digits datasets for the reference task in Sec.~\ref{sec:source_relatedness}).
%Additional experiments evaluating the effectiveness of Alg.~\ref{alg:bound_estimation} are in App.~\ref{app:optimization_effectiveness_pets}.

Next, we evaluate the effect of knowing the exact matching between the labels of the reference and the target tasks in comparison to a random permutation.  
We use MNIST as the reference task and USPS as the target task (and vice-versa). 
We compare our results in a setting where only $A$ is learned and $B$ is set to an identity matrix and when $B$ is set to a random permutation matrix. 
Note that the identity matrix corresponds to the correct mapping between the classes of MNIST and USPS tasks (both contain digits from 0 to 9). 

We find that the upper bound obtained when $B$ is fixed to identity is only marginally better than the case when $B$ is a random permutation. 
Specifically, the difference between the bound when $B$ is fixed to a random permutation and when $B$ is an identity matrix is 0.10 for the MNIST$\rightarrow$USPS task and 0.17 for the USPS$\rightarrow$MNIST task. The primary reason for the decrease in the upper bound comes from the reduced distribution mismatch term. 
While the upper bound improves slightly when the ideal matching between the labels is known, such a mapping may not be known when the labels of the tasks are not related such as for FMNIST and MNIST. 
Thus, fixing $B$ to a random permutation matrix yields a reliable estimate of transferability in most cases.

\subsubsection{Visualization of the transformed data via t-SNE for various settings in Sec.~\ref{sec:optimization_effectiveness}}
Here we use the setting considered in App.~\ref{sec:optimization_effectiveness} where we consider 20 randomly selected classes from ImageNet as the reference task and consider the transfer to CIFAR-10. % under various settings.
We plot the results of using different transformations using t-SNE to show how various transformations affect the upper bound in Theorem~\ref{theorem:final_bound}.
Our results in Fig.~\ref{fig:tSNE_plots}(left) show that when no transformations are learned (FixedAll), the 20 random reference task classes do not overlap with the 10 target classes leading to an increased Wasserstein distance which in turn leads to a larger upper bound.
By learning the transformation $A$ (LearnedA), Fig.~\ref{fig:tSNE_plots}(center) shows a significantly better alignment between the classes of the reference and target which leads to a decreased Wasserstein distance and hence a tighter upper bound. 
Moreover, by learning all the transformations (LearnedAll), Fig.~\ref{fig:tSNE_plots}(right) shows that not only do the distributions align well but also the prior of the reference is changed to only keep 10 reference classes to match the prior of the target distribution providing a further improvement in the upper bound.
This clearly shows the effectiveness of our proposed optimization algorithm in learning various transformations to minimize the upper bound.

\begin{figure}[t]
\centering{
\includegraphics[width=0.9\textwidth]{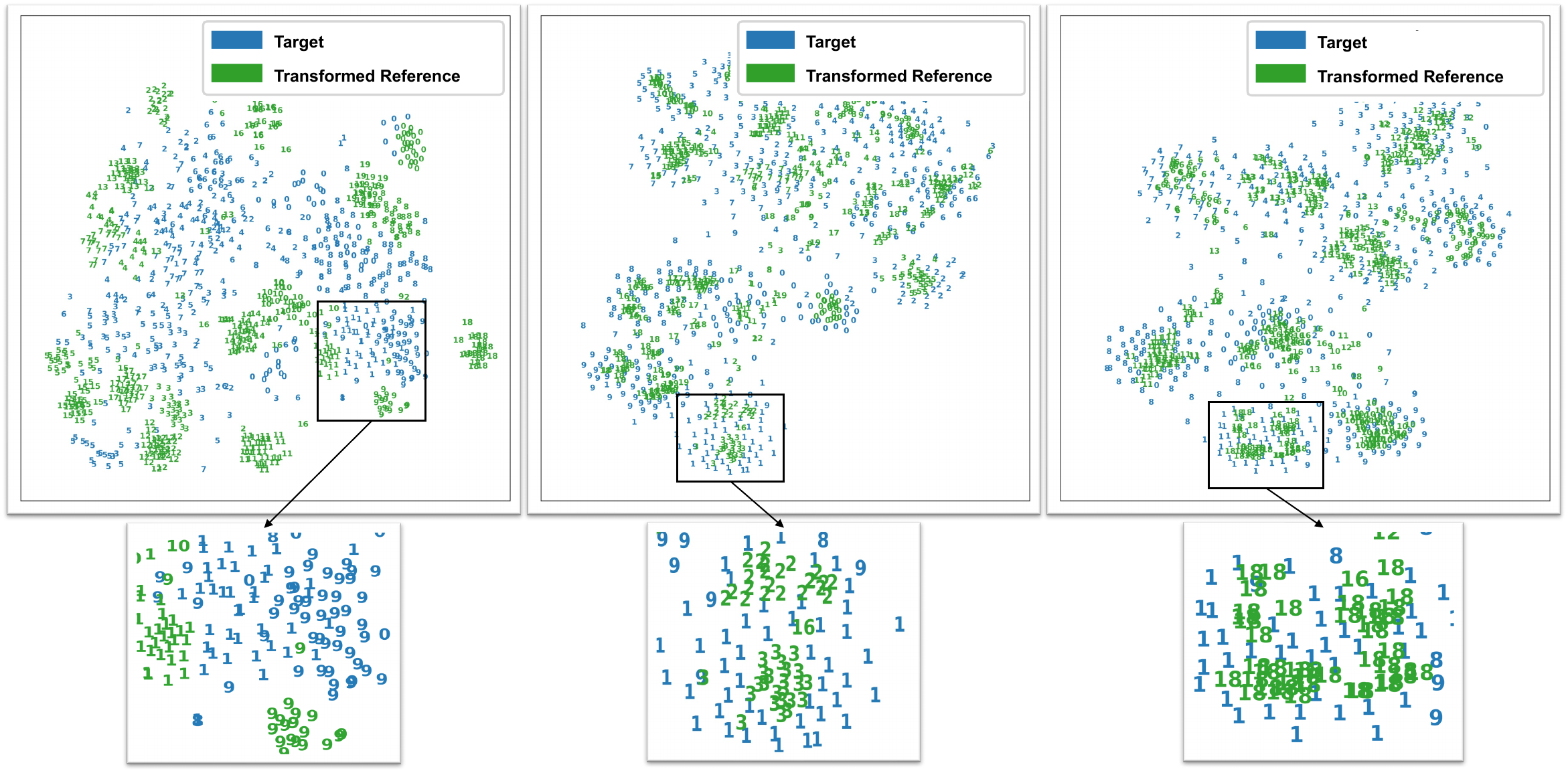}
}
\caption{(Best viewed in color.) t-SNE visualizations of the effect of various transformations on the bound in Theorem~\ref{theorem:final_bound} when data from 20 randomly selected classes from ImageNet are used to transfer to CIFAR-10. When all transformations are fixed (FixedAll, left) the distance between the distribution $R'''$ (transformed reference) and $T$ is high explaining the large upper bound. Learning just the transformation $A$ using the algorithm proposed in Sec.~\ref{sec:algorithms_task_transfer} significantly reduces the distance between $R'''$ and $T$ leading to a tighter upper bound (center). Learning all the transformations further improves the matching (right). Especially, learning $B$ and $D$ change the class priors of the reference so that the same number of classes from the reference are used for matching as those present in the target. This is evident from the right plot where only 10 unique reference task clusters are visible compared to 20 in the center plot, with fixed D. 
Moreover, the zoomed-in portion shows that for the center figure two classes from the reference task (green 2 and 3) match with class 1 (blue) of the target whereas a single class from the reference task (green 18) matches class 1 (blue) of the target in the right figure. %Moreover, learning $D$ in the right plot changed the prior of the reference task to retain only 10 classes from the source.  
%in the right plot, we only see 10 unique classes in the transformed source compared to 20 in the center plot.
%\JH{1. Can you change ``target data'' to ``target'' and ``transformed data'' to ``transformed source''? 2. Can we increase the size of the zoomed-in images? 3. I didn't understand the last sentence of the caption. }\AM{Updated}
}
\label{fig:tSNE_plots}
\end{figure}

\subsubsection{Effectiveness of minimizing the upper bound in Theorem~\ref{theorem:final_bound} via solving Eq.~\ref{Eq:optimization}}
\label{app:opt_convergence}
In Fig.~\ref{fig:optimization_convergence}, we show how the upper bound changes as the optimization progresses to transform the reference task (ImageNet) into four target tasks with the ResNet-18 model. 
Similar to experiments in Sec.~\ref{sec:empirical_vs_predicted} of the paper we optimize over the transformation $A$ while $B$ and $D$ are fixed to a random permutation matrix and the reference prior. 
After about 600 epochs the optimization problem converges to a local minima.

\begin{figure*}[tb]
  \centering{
  \includegraphics[width=\textwidth]{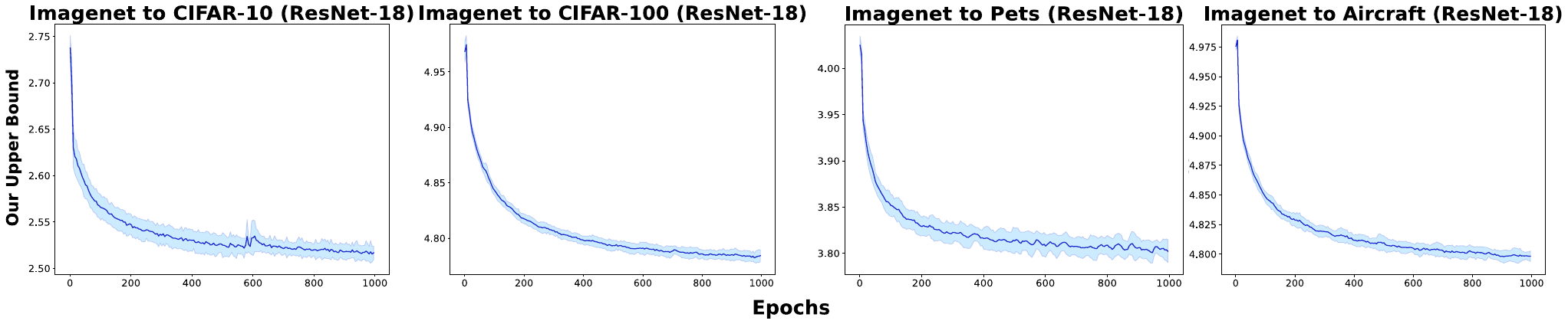}
  }
  \caption{Reduction of the proposed upper bound is shown as transformations are learned by solving the optimization problem in Eq. 3. After 600 epochs, the upper bound stabilizes showing the convergence of the optimization problem. Each subplot shows the effect of learning the transformation parameters for the transfer learning task with ImageNet as the reference task and ResNet-18 (trained in a supervised way) for different target tasks. The solid line is the average after 5 random restarts and the shaded portion shows their standard deviation. 
    } 
  \label{fig:optimization_convergence}
\end{figure*}

\subsection{Additional results for the impact of the reference task on task-relatedness (Sec.~\ref{sec:source_relatedness})}
\label{app:task_relatedness}
%\JH{Section organization of C.3 and C.4 is odd.} \AM{Modified}
\subsubsection{Additional results for image classification}%for the experiment in Sec.~\ref{sec:source_relatedness}}
Here we provide details of the experiment presented in Sec.~\ref{sec:source_relatedness} about the effect of the reference task on task-relatedness and transferability. % on the transferability after linear fine-tuning.
Similar to the results presented in Fig.~\ref{fig:task_relatedness} of the main paper, the results in Fig.~\ref{fig:task_relatedness_breakdown} show that when the reference and the target tasks are related then task-relatedness is good as in the case when the reference task is MNIST and target is USPS or vice versa achieving a smaller gap to transferability. 
When the target tasks are unrelated to the reference task data then both the transferability and task-relatedness are low. E.g., when the reference task is MNIST and the target is FMNIST or vice-versa.
Task-relatedness is also correlated to transferability, i.e., a model trained on MNIST achieves better transferability to USPS than FMNIST.
% Lastly, for any particular reference task, Fig.~\ref{fig:task_relatedness_breakdown} shows that the bound only differs because of the differences in the distribution mismatch term, $W_d(R''', T)$. 
% This shows that when the distribution mismatch can be minimized, the gap between actual and predicted transferability improves, demonstrating transfer between related reference task and target tasks is both easier and explainable. 
\begin{figure*}[tb]
  \centering{
  \subfigure[Image classification tasks]{\includegraphics[width=0.9\textwidth]{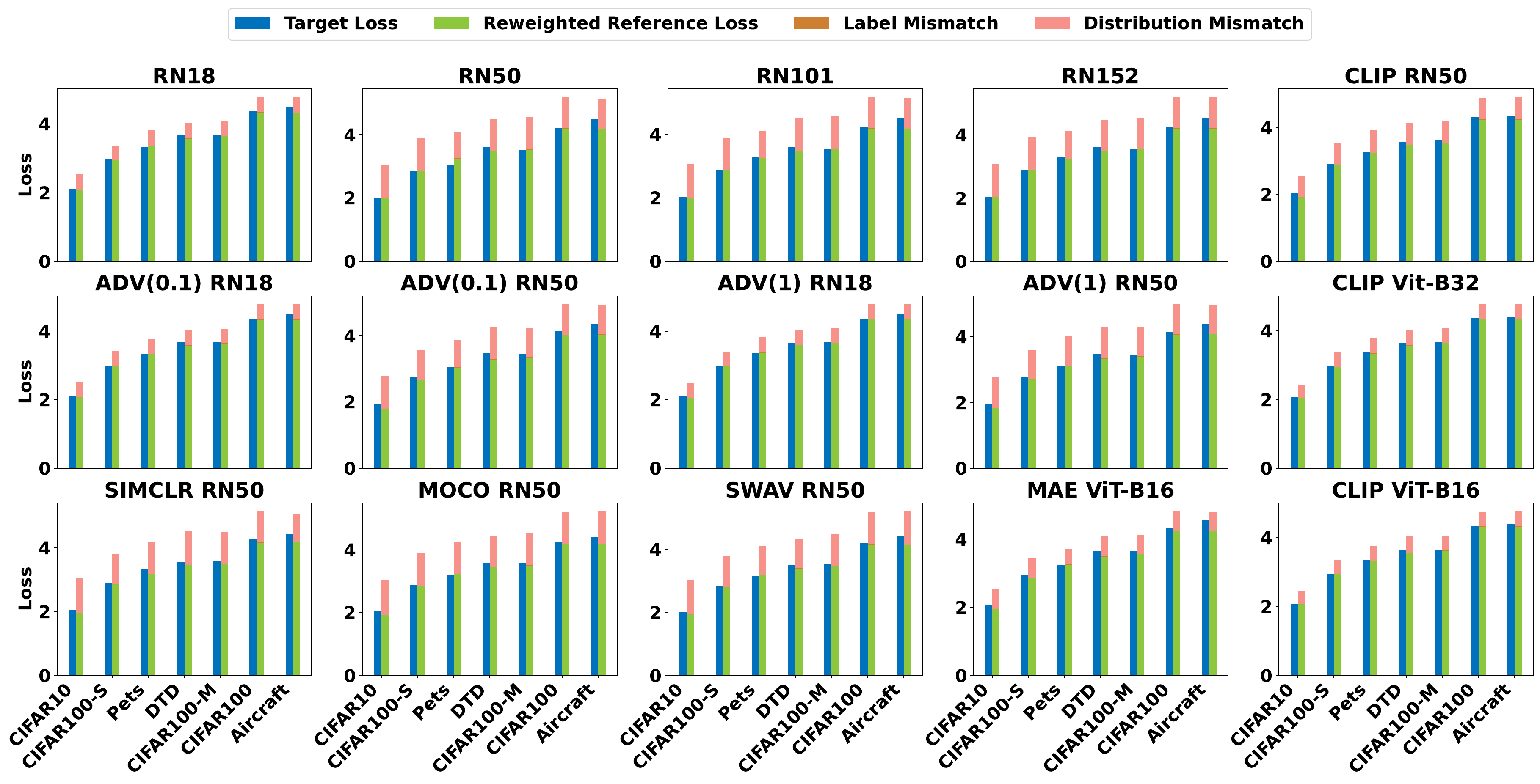}}
  \subfigure[Sentence classification tasks]{\includegraphics[width=0.4\textwidth]{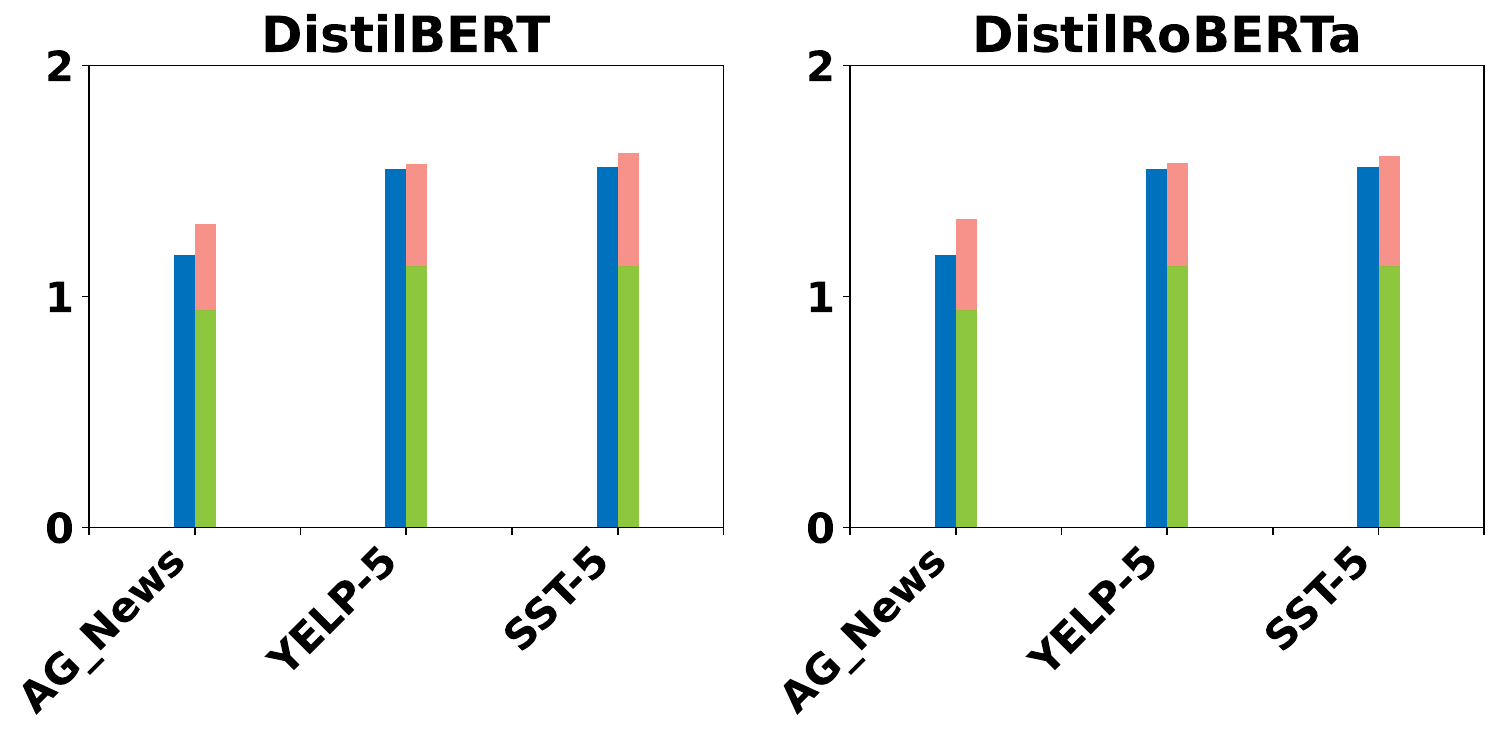}}
  }
  \caption{Full results for comparison of transferability vs. task-relatedness for large pre-trained models on image and sentence classification tasks. Task-relatedness consistently achieves a small gap to transferability. We denote cross-entropy loss on the y-axis. Plot-title denotes the pre-trained model and the x-axis denotes the target tasks.  
    } 
  \label{fig:empirical_vs_predicted_transferability_all}
\end{figure*}

% \begin{figure*}[tb]
%   \centering{
%    \includegraphics[width=0.9\textwidth]{}
%   \includegraphics[width=0.75\textwidth]{}
%   }
%   \caption{The proposed upper bound on transferability is highly (negatively) correlated with the accuracy of the models after linear fine-tuning. This shows that tasks highly related to the reference task (ImageNet here) achieve better transferability. Each subplot shows transfer learning with various target tasks for a specific model architecture and training method. The Pearson correlation coefficient between our upper bound (task-relatedness) and accuracy is reported in the title of each subplot. 
%   %Note that this is an alternate depiction of the results shown in Fig.~\ref{fig:empirical_vs_predicted_transferability} of the main paper. 
%     } 
%   \label{fig:accuracy_vs_bound}
% \end{figure*}

\begin{figure*}[t]
  \centering{\includegraphics[width=0.9\textwidth]{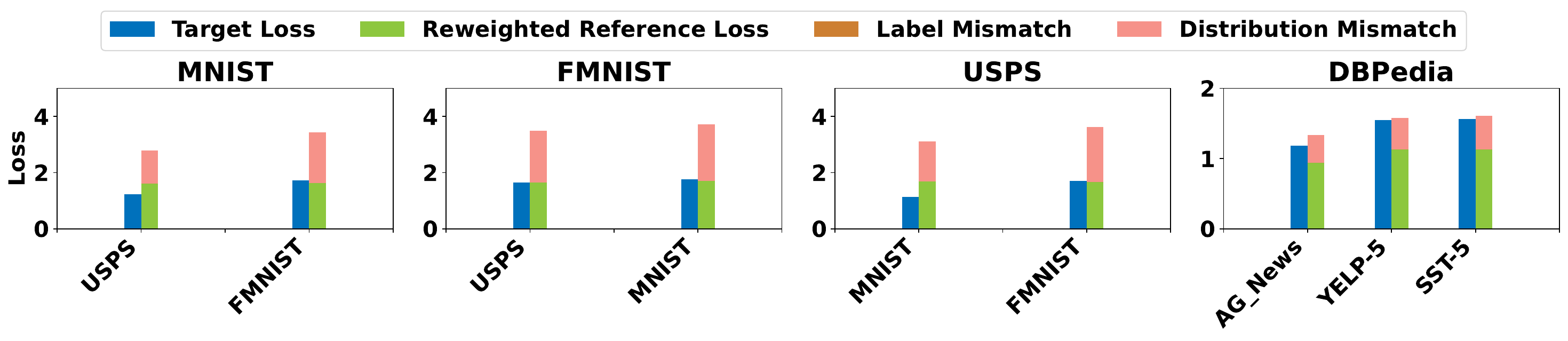}}
  %\centering{\includegraphics[width=0.235\textwidth]{}}
  %\centering{\includegraphics[width=0.750\textwidth]{}}
  \caption{Decomposition of task-relatedness into its three components illustrates that our task-relatedness (specifically the distribution mismatch term) explains the difference in transferability. Similar tasks such as USPS and MNIST have the highest transferability and also have the highest task-relatedness.
  }
  \label{fig:task_relatedness_breakdown}
\end{figure*}

\begin{figure*}[tb]
  \centering{
   \includegraphics[width=0.19\textwidth]{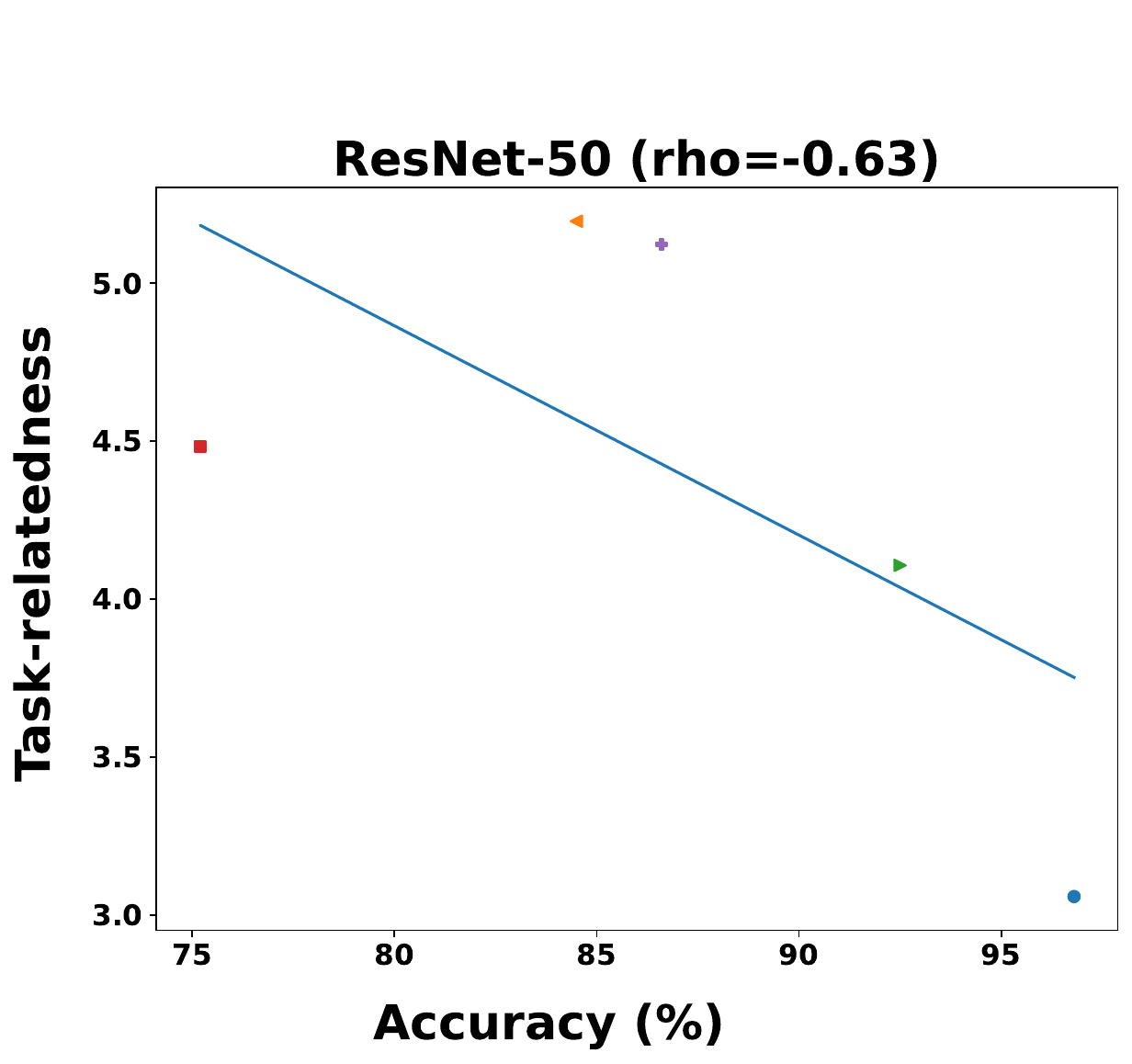}
   \includegraphics[width=0.19\textwidth]{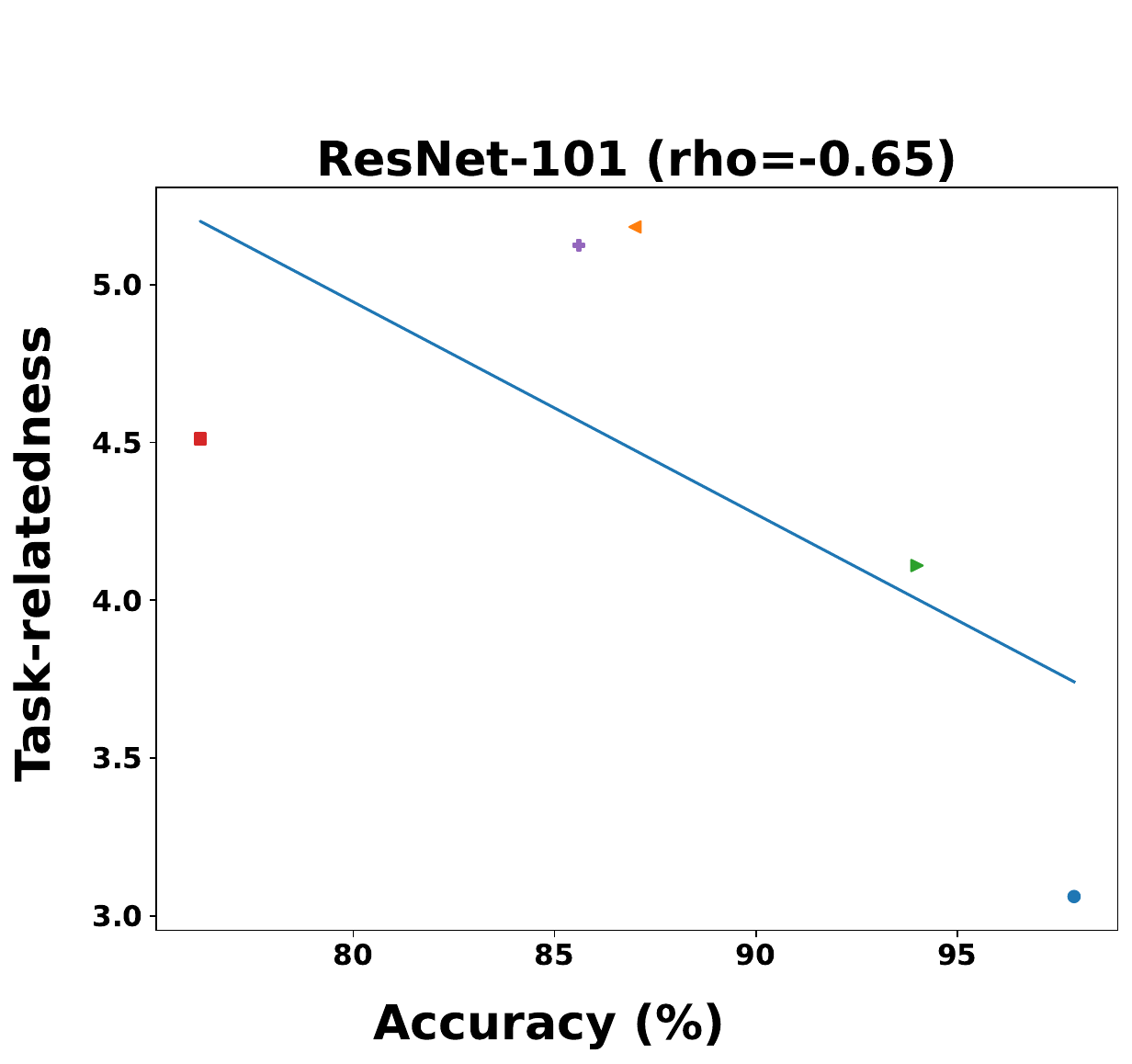}
   \includegraphics[width=0.19\textwidth]{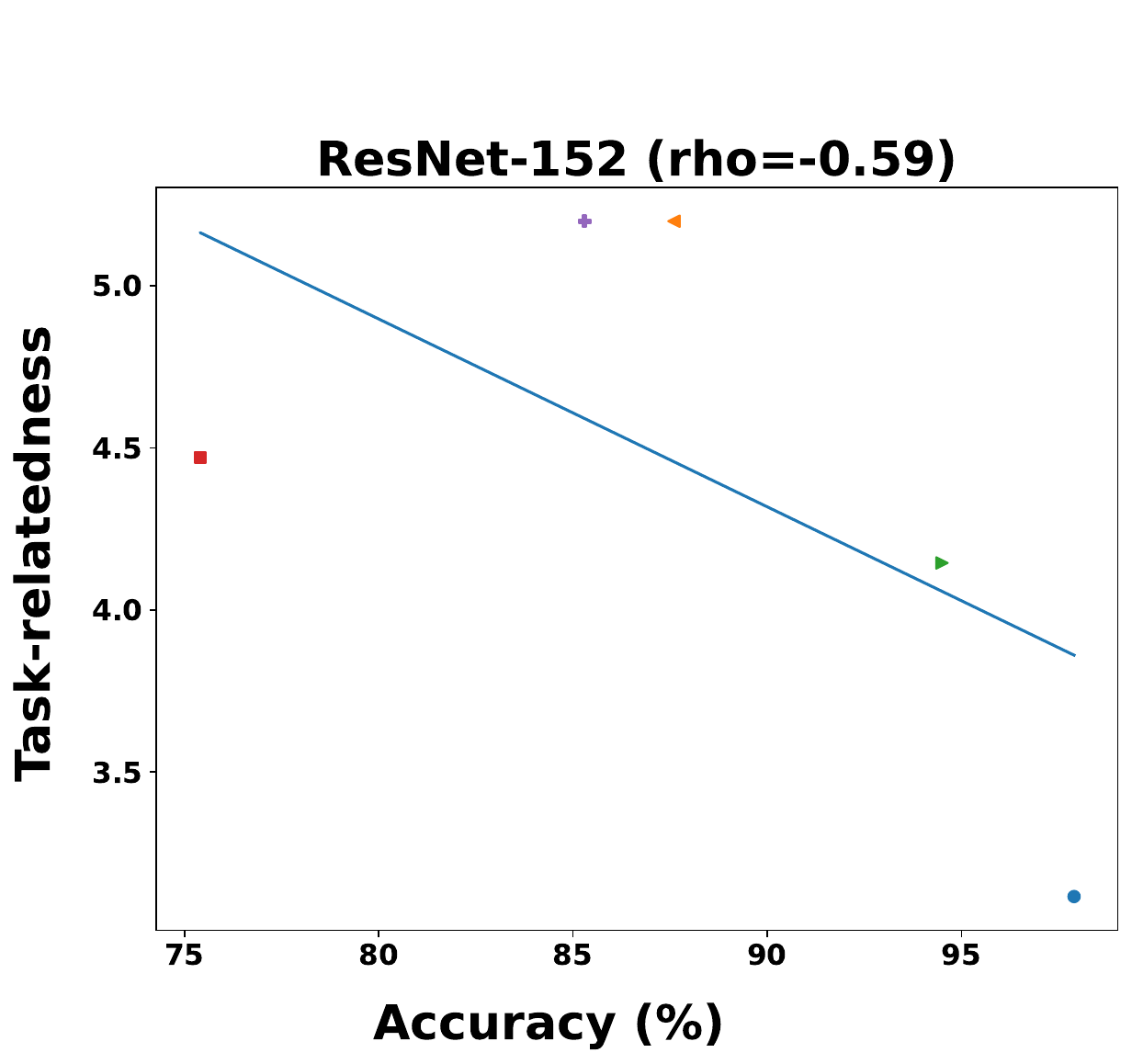}
   \includegraphics[width=0.19\textwidth]{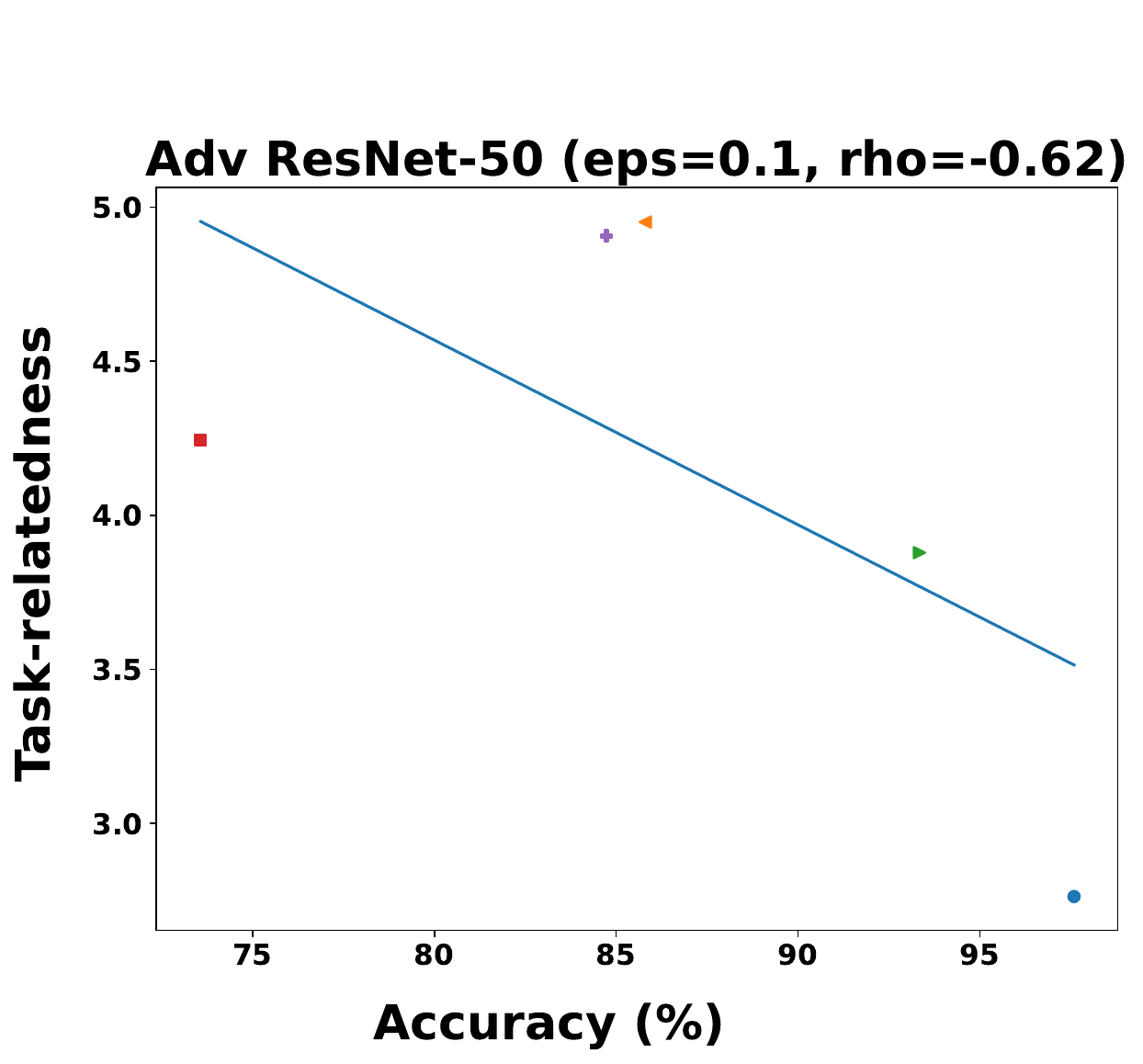}
   \includegraphics[width=0.19\textwidth]{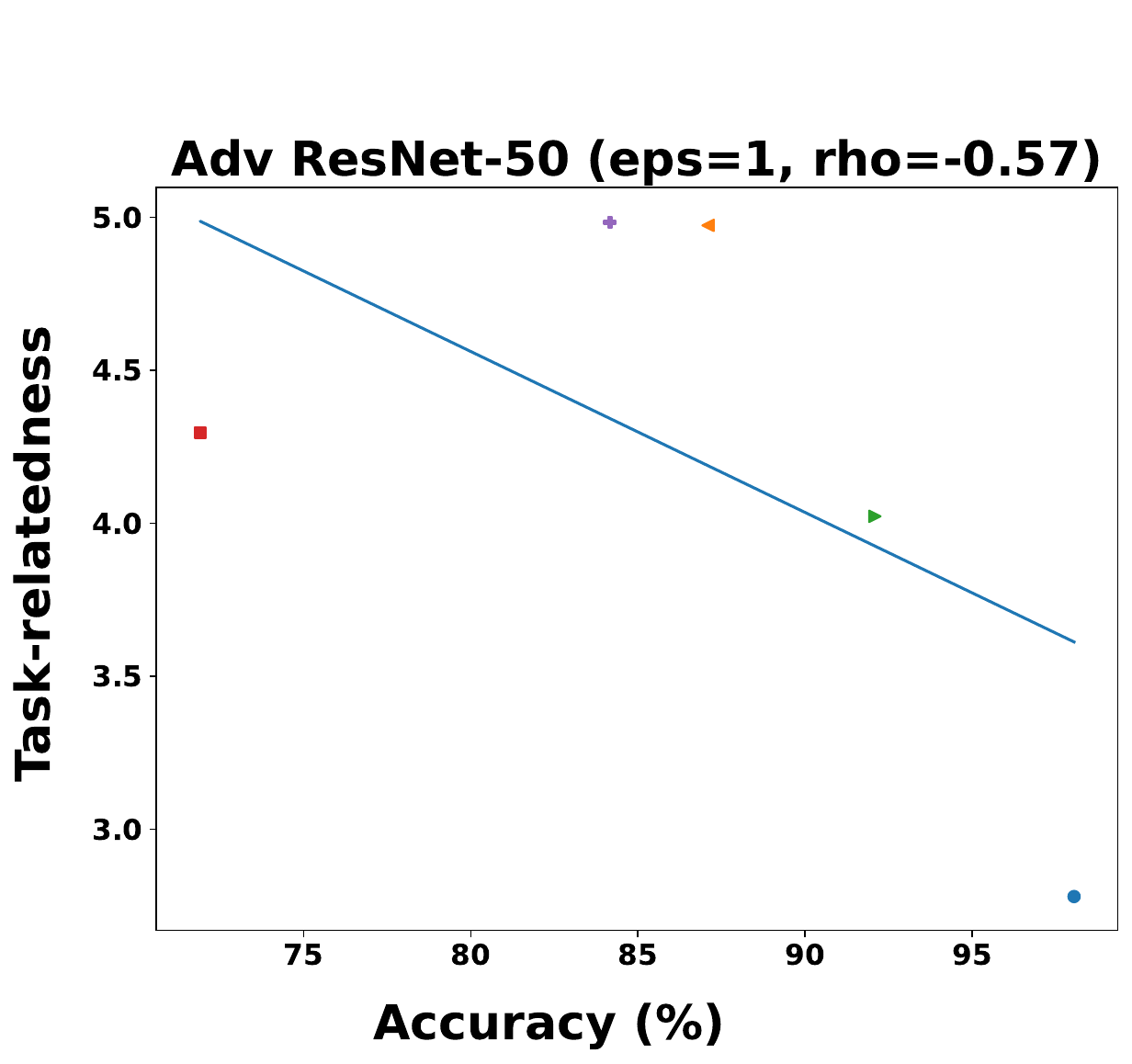}
  \includegraphics[width=0.75\textwidth]{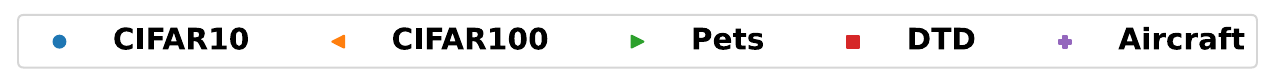}
  }
  \caption{Task-relatedness is highly (negatively) correlated (Pearson correlation coefficient in the subplot title) with the accuracy of the models after end-to-end fine-tuning. Each subplot shows transfer learning with various target tasks for a specific model architecture and training method.  
    } 
  \label{fig:fft_accuracy_vs_bound}
\end{figure*}

\begin{figure}[t]
\centering{
\includegraphics[width=0.3\textwidth]{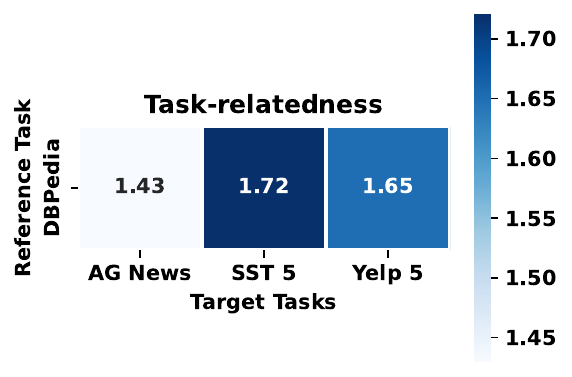}
\includegraphics[width=0.3\textwidth]{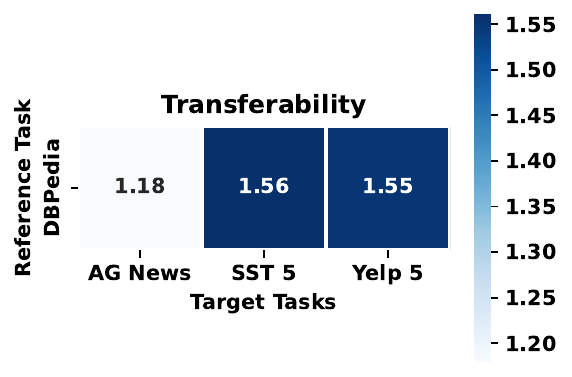}
}
\caption{%\AM{Change to bound value}
Task-relatedness (Left) and transferability (Right) are highly correlated across various reference-target pairs. 
%Task relatedness measured by $W_d(P_{R'''},P_T)$ correlates with transferability (i.e., target loss). %Transferability to related tasks such as AG News is higher than to SST-5/Yelp.% \AM{increase font}
%Comparison of the loss after linear fine-tuning and Wasserstein distance for sentence classification tasks with DBPedia as the source using DistilBERT~\citep{sanh2020distilbert} and DistilRoBERTa~\citep{liu2019roberta} models. 
A target task related to the reference task (DBPedia) such as AG News achieves better transferability (with DistillRoBERTa model) and task-relatedness compared to less-related tasks such as SST-5 and Yelp5. % demonstrating that our analysis explains the effect of task-relatedness on transferability.
}
%Task relatedness measured by $W_d(P_{R'''},P_T)$ correlates strongly with transferability (i.e., target loss). Related tasks such as (MNIST, USPS) transfer better than (MNIST, KMNIST).% \AM{increase font}

\label{fig:task_relatedness_nlp_dpedia}
\end{figure} 

\subsubsection{Results for NLP sentence classification task}
\label{app:task_relatedness_nlp}
In this section, we use sentence classification NLP task to demonstrate further the effect of the reference task on task-relatedness. 
For this experiment, we first fine-tune the entire DistilRoBERTa \citep{liu2019roberta} model distilled on OpenAI's WebText dataset, using a subsample of 10,000 points from the DBPedia dataset. 
%Using the encoder of this new model, we linearly fine-tune using the data from the target tasks as well as use our optimization to learn the transformations to minimize the upper bound. 
We then use these fine-tuned models to evaluate the transferability to AG news, SST-5, and Yelp datasets.
The results in Fig.~\ref{fig:task_relatedness_nlp_dpedia} show that transferability on AG News is the smallest among the three datasets. 
This coincides with the task-relatedness value obtained after learning the transformations which explains why transfer from DBPedia to AG News is more successful compared to other target tasks. 
This observation is reasonable, especially considering that both DBPedia and AG News have structured information. 
Moreover, since DBPedia is related to Wikipedia, the terms and entities appearing in AG News are more related to those appearing in DBPedia in comparison to terms/entities appearing in SST-5 and Yelp which consist of movie reviews and reviews collected from Yelp.
%which do not have a specific structure. 

For our experiments, in this section, we follow a similar setting of fixing $B$ to be a random permutation matrix, $C$ to the prior of the reference task, and only learn the transformation $A$. 
We sample 10,000 points from DBPedia belonging to the same number of classes as those present in the target task (e.g., for AG News we sample data from 4 randomly selected classes of DBPedia) and use this data as the reference task data to train $h_R$ with gradient norm penalty ($\tau = $ 0.02). All experiments are run for 3 random seeds and average results are reported in Fig.~\ref{fig:task_relatedness_nlp_dpedia}.

% \begin{figure}[t]
% \centering{
% %\includegraphics[width=0.5\textwidth]{}
% \includegraphics[width=0.65\textwidth]{}}
% \caption{ \AM{Remove DistilBert and combine with Fig.~\ref{fig:task_relatedness_breakdown}}
%   Transferability predicted by our analysis consistently achieves a small gap to empirical transferability even for NLP classification tasks (AG News, SST-5, Yelp5) using DistilBERT and DistilRoBERTa as pre-trained models which are end-to-end fine-tuned with DBPedia. 
% %s\AM{Add DistilRoBERTa results.} \YZ{done}
% }
% \label{fig:nlp_dpedia_decomposition}
% \end{figure} 

\subsection{Lipschitz constrained linear fine-tuning}
\label{app:lipschitz}

\subsubsection{Implementing softmax classification with \texorpdfstring{$\tau-$}{}Lipschitz loss}
\label{app:tau_training}
%\JH{Move this section and merged with C.1.} \AM{Added}
To use the bound Theorem.~\ref{theorem:final_bound}, it is required that the loss be $\tau-$Lipschitz continuous w.r.t. $z$ in the input domain $\Z$. 
To enforce this, while learning the weights of the softmax classifier (linear fine-tuning) for the reference task or the target, we add the gradient norm penalty as used in  previous works \citep{shen2018wasserstein, arjovsky2017wasserstein} and solve the following optimization problem 
\[
\min_h \frac{1}{N} \sum_i  \left[ \ell(h(z_i),y_i) + \rho \max_y \max\{0, \|\nabla_z \ell(h(z_i), y)\|_2-\tau \}^2\right]\;\;\;(\rho\approx 10^4)
\]
where $\ell(h(z),y)= -w_{y}^T z + \log \sum_j e^{w_{j}^T z}$ is the cross-entropy loss.

\subsubsection{Trade-off between empirical and predicted transferability}
%\label{app:tau_tradeoff}
Constraining the Lipschitz coefficient of the classifier increases both the target and the reference task cross-entropy loss since the hypothesis set is being restricted.
The smaller the $\tau$ is, the larger the loss becomes. On the other hand, the smaller $\tau$ makes the distribution mismatch term in Theorem~\ref{theorem:final_bound} also smaller.
Since the bound is the sum of the reference task loss and the distribution mismatch (and label mismatch), there is a trade-off determined by the value of $\tau$.
%Therefore the $\tau$ balances the empirical performance and the predicted transferability. 
%Next, we demonstrate the trade-off between the loss after fine-tuning and the bound value as $\tau$ changes. 
%in App.~\ref{app:tau_tradeoff}.
We illustrate the effect of the values of $\tau$ on the empirical and predicted transferability.
As mentioned previously, we train both the classifier for the reference task $h_R$ and the target $h_T$ with an additional penalty on the gradient norm to make them $\tau-$Lipschitz. 
In Fig.~\ref{fig:tau_loss_vs_bound}, we present results of varying the value of $\tau$ for the transfer to the Pets dataset with ImageNet as the reference task. For this experiment, we selected 37 random classes from ImageNet and only learned the transform $A$ by keeping $B$ fixed to a random permutation and $C$ fixed to the uniform prior over reference task classes. 
%We observe that increasing the value of $\tau$ decreases both the loss after linear fine-tuning and the first term in Eq.~\ref{eq:transform}.
We observe that the performance of linear fine-tuning degrades as we decrease the value of $\tau$ but explainability through the bound improves since the distribution mismatch term (dependent on $\tau$) decreases in the bound.
However, making $\tau$ too small is not preferable since it leads to an increase in the first term of the bound (re-weighted reference task loss) increasing the bound overall. 
Moreover, it also leads to a degradation in the accuracy after linear fine-tuning. 
For our experiments, we use $\tau=0.02$ since it doesn't decrease the accuracy of fine-tuning significantly and leads to a small gap between empirical and predicted transferability.
%Explaining the transferability of performance for a general class of loss functions is left for future work.

\begin{figure}[t]
  \centering{
  \includegraphics[width=0.7\textwidth]{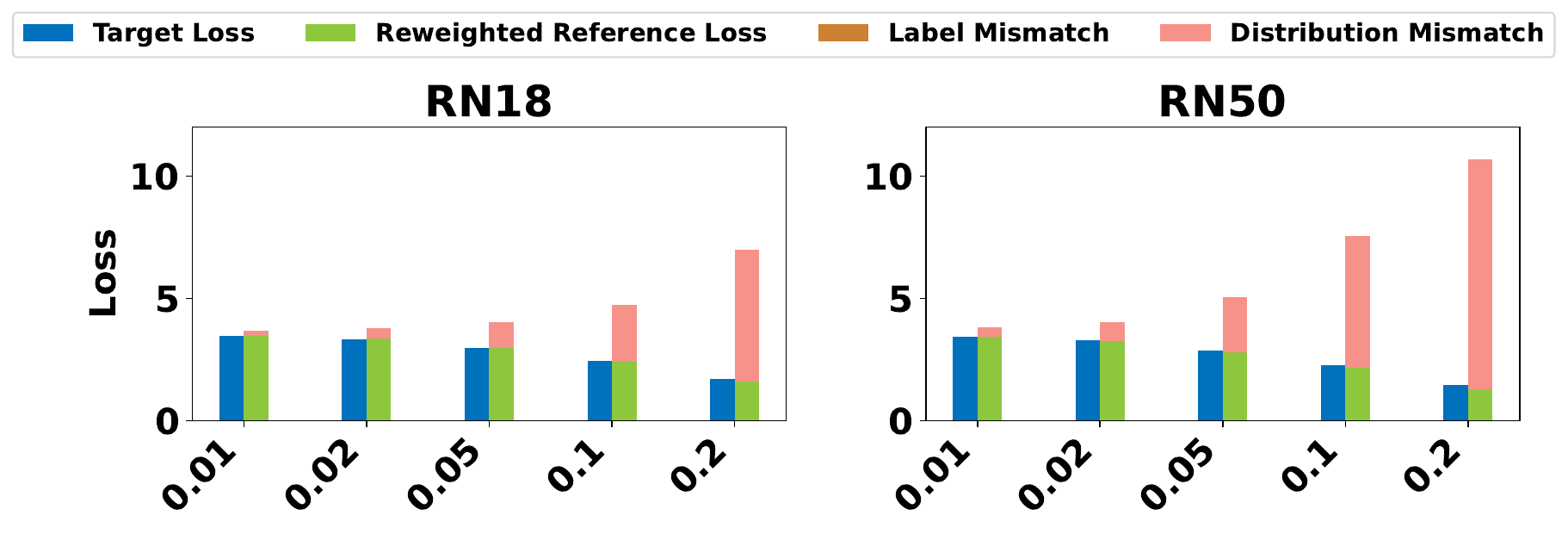}
  }
  \caption{Trade-off between the cross-entropy loss (y-axis) after linear fine-tuning and the upper bound in Theorem~\ref{theorem:final_bound} as a function of $\tau$ (x-axis), for ResNet18 and ResNet50 models pre-trained on ImageNet and linearly fine-tuned on the Pets dataset. Increasing the value of $\tau$ leads to a decrease in the cross-entropy loss after fine-tuning but increases in the proposed bound mainly due to the $\tau \cdot W_d$ term.
  %leading to a lowered explainability of transfer learning.
  }
  \label{fig:tau_loss_vs_bound}
\end{figure}

%%%%%%%%%%%%%%%%%%%%%%%%%%%%%%%%%%%%%%%%%%%%%%%%%%%%%%%%%%%%
\section{Details of the experiments}
\label{App:experimental_details}
%%%%%%%%%%%%%%%%%%%%%%%%%%%%%%%%%%%%%%%%%%%%%%%%%%%%%%%%%%%%
%\JH{Mention the codes submitted with this doc?}\AM{Added}

All codes are written in Python using Tensorflow/Pytorch and were run on an Intel(R) Xeon(R) Platinum 8358 CPU with 200 GB of RAM and an Nvidia A10 GPU. Implementation and hyperparameters are described below. Our codes can be found in the supplementary material.
We report an average of three independent runs for experiments in Sec.~\ref{sec:source_relatedness} and~\ref{sec:sbte_evaluation}.

\subsection{Dataset details}
In our work, we used the standard image classification benchmark datasets along with standard natural language processing datasets\footnote{All NLP datasets and models are obtained from \url{https://huggingface.co/}.}.

{\bf Aircraft \citep{maji2013fine}:} consists of 10,000 aircraft images belonging to 100 classes. 

{\bf CIFAR-10/100 \citep{krizhevsky2009learning}:} These datasets contain 60,000 images belonging to 10/100 categories. Additionally, we created two subsets of CIFAR100 with the first 25 (small CIFAR-100-S) and 50 (medium CIFAR-100-M) classes. %\AM{add details of small and medium} In the main text already.

{\bf DTD\citep{cimpoi2014describing}:}  consists of 5,640 textural images belonging to 47 categories.

{\bf Fashion MNIST \citep{xiao2017fashion}:}  consists of 70,000 grayscale images belonging to 10 categories.

{\bf Pets \citep{parkhi2012cats}:}  consists of 7049 images of Cats and Dogs spread across 37 categories.

{\bf ImageNet \citep{deng2009imagenet}:}  consists of 1.1 million images belonging to 1000 categories.

{\bf Yelp \citep{zhang2015character}:} consists of 650,000 training and 50,000 test examples belonging to 5 classes.

{\bf Stanford Sentiment Treebank (SST-5) \citep{zhang2015character}:} consists of 8,544 training and 2,210 test samples belonging to 5 classes.

{\bf AG News \citep{zhang2015character}:}  consists of 120,000 training and 7,600 test examples belonging to 4 classes

{\bf DBPedia \citep{zhang2015character}:} consists of 560,000 training and 70,000 test examples belonging to 14 classes

\subsection{Semantically similar classes for CIFAR-10 from ImageNet} %and Pets 
 For our experiments with CIFAR-10 in Sec.~\ref{sec:optimization_effectiveness}, we selected the following semantically similar classes from ImageNet,  \{\texttt{airliner, minivan, cock, tabby cat, ox, chihuahua, bullfrog, sorrel, submarine, fire engine}\}. 
%For our experiments with the Pets dataset in App.~\ref{app:optimization_effectiveness_pets}, we selected the following classes for Dogs \{\texttt{boston bull, miniature schnauzer, giant schnauzer, standard schnauzer, scotch terrier, chrysanthemum dog, silky terrier, a soft-coated wheaten terrier, west Highland white terrier, lhasa, lat-coated retriever, curly-coated retriever, golden retriever, labrador retriever, chesapeake bay retriever, german short-haired pointer, vizsla, hungarian pointer, english setter, irish setter, gordon setter, brittany spaniel, clumber, english springer, welsh springer spaniel, cocker spaniel}\}
%and the following for Cats \{\texttt{tabby, tiger cat, persian cat, siamese cat, egyptian cat, cougar, lynx, leopard, snow leopard, jaguar, lion, tiger}\}. 
%Since some species of Cats and Dogs present in the Pets dataset are not present in ImageNet, we select broadly related classes for our experiments.

% \begin{figure}[t]
% \centering{\includegraphics[width=0.650\textwidth]{Images/nlp_loss_wd.pdf}}
% \caption{}
% \label{fig:nlp_loss_wd}
% \end{figure} 

\subsection{Additional experimental details}
%\vspace{-1cm}
Details of the optimization problem in Eq.~\ref{Eq:optimization} and Alg.~\ref{alg:bound_estimation}.
In Step 5 of the algorithm, we use the network simplex flow algorithm from POT~\citep{flamary2021pot} to compute the optimal coupling. % for matching the distributions $R'''$ and $T$. 
Since computing the Wasserstein distance over the entire dataset can be slow, we follow \citep{damodaran2018deepjdot} and compute the coupling over batches. 
%Using the optimal coupling from Step 3, we use batch SGD in Step 4 of the algorithm. 
Note that the base distance defined in Eq.~\ref{eq:base_distance} %$d((z, y), (z', y')) := d_{features}(z, z') + \infty \cdot I[y\neq y']$ 
is non-differentiable. 
Thus, we use a differentiable approximation $\tilde{d}((z, y), (z', y')) := d_{features}(z, z') + \nu \cdot \|e(y) - e(y')\|_2$ (with $\nu=\mathrm{10^8}$) %enforces samples with the same labels to match across the two distributions)
where $(z,y)$ and $(z', y')$ are samples from the domains $R^{'''}$ 
%\YZ{$R^{'''} $ and step 5 $\tilde d$?} \AM{done} 
and $T$ and $e(\cdot)$ denotes the one-hot embedding of the labels in Step 5/6 .
The first three terms in Step 6 of our algorithm correspond to the terms in the objective of Eq.~\ref{Eq:optimization} while the two additional terms are added to penalize the constraints of class prior matching $P_T(y)= BD$ and invertibility of the matrix $A$, respectively as required by Theorem~\ref{theorem:final_bound}. 
We use the softmax operation to ensure $B$ and $D$ are a valid probability matrix and vector.
%When the target task has fewer classes than the reference task, the conditional entropy term in Eq.~\ref{Eq:optimization} contributes to a sparse selection of the reference task classes while $D$ suppresses the prior of the reference task classes to reduce the re-weighted reference task loss.
%The terms $\|P_T(y) - BD\|_2$ and the Wasserstein distance in Step 6 prevent $D$ from collapsing and putting all the weight on just a single reference task class. 
%This allows Alg.~\ref{alg:bound_estimation} to minimize the upper bound.
%in a way to make the to reduce the number of a similar number of source classes leading to a reduction in the re-weighted source loss. 
%On one hand, the conditional entropy contributes to a sparse selection of the source classes, while on the other, 
%smaller number of source classes than the number of target classes since a 
%This is due to the increased Wasserstein distance penalty that is incurred when the classes from source and target cannot be matched. 

In our experiments, in Sec.~\ref{sec:empirical_vs_predicted}, we used pre-trained models available from Pytorch for ResNet18/50, along with publicly available pre-trained models provided in the official repositories of each training method. 
For each experiment, we subsample data from the ImageNet dataset belonging to the same number of classes as those present in the target dataset and use this data to train the linear layer on top of the representations extracted from the pre-trained model along with a gradient norm penalty (Reference task classifier).
To speed up the experiments, we use only 10,000 points from the subsample of ImageNet for training the linear classifier and computing the transfer. 
For evaluation, we use a similar subsample of the validation dataset of ImageNet containing all the samples belonging to the subsampled classes. Fine-tuning on this dataset takes about 0.05 seconds per epoch for the task of transfer from ImageNet to Pets with the ResNet-18 model (we run the fine-tuning for a total of 5000 epochs).

Along with training the linear classifiers with a gradient norm penalty (with $\tau=0.02$), we standardize the features extracted from the pre-trained models to remove their mean (along each axis) and make them have a unit standard deviation (along each axis). 
While standardizing the features do not have a significant impact on the loss of the classifiers, including it makes it easier to match the distributions of the reference task and target data after transformations.
Since our optimization problem transforms the reference task distribution to match the distribution of the target by solving the optimization problem in Eq.~\ref{Eq:optimization} by working on mini-batches, the size of the batch must be greater than the dimension of the representation space of the pre-trained encoder. 
E.g., for ResNet18 models which have a representation dimension of 512, we use a batch size of 1000 and for ResNet50 models which have a representation dimension of 2048, we use a batch size of 2500. 
Having a smaller batch size than the dimension could lead to a noisy gradient since for that batch the transformation can achieve a perfect matching, which may not generalize to data from other batches or unseen test data. 

While computing the transformations, we apply the same augmentation (re-sizing and center cropping)/normalization to the training data as those applied to the test data.
Along with this, we extract the features of the training and test data from the pre-trained model once and use these to train the linear layer. 
We note that this is done to save the computation time and better results could be obtained by allowing for extracting features after data augmentation for every batch. 

Finally, for our experiments in Sec.~\ref{sec:source_relatedness}, the encoders are trained end-to-end on the reference task. This is in contrast to our other experiments where the encoders are pre-trained and data from the reference task is only used for linear fine-tuning. 
%This is done since there are no pre-trained models available for MNIST-type tasks considered in this section and training a model on these datasets is relatively easy and cheap.
Using these models, task-relatedness is evaluated by fine-tuning a linear layer using the data from the target task as well as the transformations computed by solving Eq.~\ref{Eq:optimization}. We used $\tau=0.2$ here. We run the experiments with 3 random seeds and report the average results.

%\AM{Mention the same dimension and need of classifier for NCE and LEEP makes it difficult to use CLIP since we don't have a classifier for it}
For our experiments in Sec.~\ref{sec:sbte_evaluation}, we used the official code from \cite{you2021logme} to compute the scores for NCE, Leep, and LogMe along with the official code of \citep{shao2022not} for SFDA. 
For PACTran \cite{ding2022pactran}, we also use their official code with the PACTran-Gaussian method with $N/K = 100, \beta=10N, \sigma^2 = D/100$ where $N$ denotes the number of samples and $K$ denotes the number of classes. 
This setting is similar to the PACTran-Gauss$_{fix}$ setting used in their work with the difference that we use $N/K = 100$ to use a sufficiently large number of samples, especially considering that all our other results for SbTE methods are computed on the full training set. For OTCE, we follow the official code and compute the recommended OT-based NCE score and OTCE score ($\lambda_1=-0.0001$ and $\lambda_2=-1$) using 4000 randomly selected training samples from the two tasks. For the source task, we subsample the same number of classes as the target task and use. For the H-score, we use the official code to compute the H-score \citep{bao2019information}.

For estimating transferability by solving Eq.~\ref{eq:mean_covar_matching}, we set $\lambda$=0.01 for all the experiments in Sec.~\ref{sec:sbte_evaluation}.

%%%%%%%%%%%%%%%%%%%%%%%%%%%%%%%%%%%%%%%%%%%%%%%%%%%%%%%%%%%%%%%%%%%%%%%%%%%%%%%
%%%%%%%%%%%%%%%%%%%%%%%%%%%%%%%%%%%%%%%%%%%%%%%%%%%%%%%%%%%%%%%%%%%%%%%%%%%%%%%

%%%%%%%%%%%%%%%%%%%%%%%%%%%%%%%%%%%%%%%%%%%%%%%%%%%%%%%%%%%%

\newpage
\section*{NeurIPS Paper Checklist}

\if0
%%% BEGIN INSTRUCTIONS %%%
The checklist is designed to encourage best practices for responsible machine learning research, addressing issues of reproducibility, transparency, research ethics, and societal impact. Do not remove the checklist: {\bf The papers not including the checklist will be desk rejected.} The checklist should follow the references and follow the (optional) supplemental material.  The checklist does NOT count towards the page
limit. 

Please read the checklist guidelines carefully for information on how to answer these questions. For each question in the checklist:
\begin{itemize}
    \item You should answer \answerYes{}, \answerNo{}, or \answerNA{}.
    \item \answerNA{} means either that the question is Not Applicable for that particular paper or the relevant information is Not Available.
    \item Please provide a short (1–2 sentence) justification right after your answer (even for NA). 
   % \item {\bf The papers not including the checklist will be desk rejected.}
\end{itemize}

{\bf The checklist answers are an integral part of your paper submission.} They are visible to the reviewers, area chairs, senior area chairs, and ethics reviewers. You will be asked to also include it (after eventual revisions) with the final version of your paper, and its final version will be published with the paper.

The reviewers of your paper will be asked to use the checklist as one of the factors in their evaluation. While "\answerYes{}" is generally preferable to "\answerNo{}", it is perfectly acceptable to answer "\answerNo{}" provided a proper justification is given (e.g., "error bars are not reported because it would be too computationally expensive" or "we were unable to find the license for the dataset we used"). In general, answering "\answerNo{}" or "\answerNA{}" is not grounds for rejection. While the questions are phrased in a binary way, we acknowledge that the true answer is often more nuanced, so please just use your best judgment and write a justification to elaborate. All supporting evidence can appear either in the main paper or the supplemental material, provided in appendix. If you answer \answerYes{} to a question, in the justification please point to the section(s) where related material for the question can be found.

IMPORTANT, please:
\begin{itemize}
    \item {\bf Delete this instruction block, but keep the section heading ``NeurIPS paper checklist"},
    \item  {\bf Keep the checklist subsection headings, questions/answers and guidelines below.}
    \item {\bf Do not modify the questions and only use the provided macros for your answers}.
\end{itemize} 
 \fi

%%% END INSTRUCTIONS %%%

\begin{enumerate}

\item {\bf Claims}
    \item[] Question: Do the main claims made in the abstract and introduction accurately reflect the paper's contributions and scope?
    \item[] Answer: \answerYes{} % Replace by \answerYes{}, \answerNo{}, or \answerNA{}.
    \item[] Justification: The abstract summarizes the paper's contributions. %\justificationTODO{}
    \item[] Guidelines:
    \begin{itemize}
        \item The answer NA means that the abstract and introduction do not include the claims made in the paper.
        \item The abstract and/or introduction should clearly state the claims made, including the contributions made in the paper and important assumptions and limitations. A No or NA answer to this question will not be perceived well by the reviewers. 
        \item The claims made should match theoretical and experimental results, and reflect how much the results can be expected to generalize to other settings. 
        \item It is fine to include aspirational goals as motivation as long as it is clear that these goals are not attained by the paper. 
    \end{itemize}

\item {\bf Limitations}
    \item[] Question: Does the paper discuss the limitations of the work performed by the authors?
    \item[] Answer: \answerYes{} % Replace by \answerYes{}, \answerNo{}, or \answerNA{}.
    \item[] Justification: 
    Discussed in Sec. 5 and elaborated here.
    
    Here we present some of the potential limitations of our work and discuss the avenues for future work. The analysis presented in the paper studies transferability using the popular cross-entropy loss similar to that analyzed by \cite{tran2019transferability}. While this is important and practically useful, extending the analysis to other losses, primarily to the 0-1 loss, is an interesting future direction since in most classification tasks, accuracy is the metric of primary interest.

    Another limitation of our work is the dependence of our bound on the Wasserstein distance. While it is a popular choice for analyzing performance in various distribution shift scenarios \cite{shen2018wasserstein,damodaran2018deepjdot,sehwag2021robust}, it is difficult to estimate in practice due to its sensitivity to the number of samples and the dimension of the representation space. An analysis based on a different and easy-to-compute divergence measure might be more amenable for making transferability estimation via task-relatedness easier and faster.
    
    We emphasize that even though we present some of the limitations of our current work above, these are by no means weaknesses of our work, which non-trivially extends the current understanding of the transfer learning setting through a rigorous analysis and achieves impressive performance on practical applications. %\justificationTODO{}
    \item[] Guidelines:
    \begin{itemize}
        \item The answer NA means that the paper has no limitation while the answer No means that the paper has limitations, but those are not discussed in the paper. 
        \item The authors are encouraged to create a separate "Limitations" section in their paper.
        \item The paper should point out any strong assumptions and how robust the results are to violations of these assumptions (e.g., independence assumptions, noiseless settings, model well-specification, asymptotic approximations only holding locally). The authors should reflect on how these assumptions might be violated in practice and what the implications would be.
        \item The authors should reflect on the scope of the claims made, e.g., if the approach was only tested on a few datasets or with a few runs. In general, empirical results often depend on implicit assumptions, which should be articulated.
        \item The authors should reflect on the factors that influence the performance of the approach. For example, a facial recognition algorithm may perform poorly when image resolution is low or images are taken in low lighting. Or a speech-to-text system might not be used reliably to provide closed captions for online lectures because it fails to handle technical jargon.
        \item The authors should discuss the computational efficiency of the proposed algorithms and how they scale with dataset size.
        \item If applicable, the authors should discuss possible limitations of their approach to address problems of privacy and fairness.
        \item While the authors might fear that complete honesty about limitations might be used by reviewers as grounds for rejection, a worse outcome might be that reviewers discover limitations that aren't acknowledged in the paper. The authors should use their best judgment and recognize that individual actions in favor of transparency play an important role in developing norms that preserve the integrity of the community. Reviewers will be specifically instructed to not penalize honesty concerning limitations.
    \end{itemize}

\item {\bf Theory Assumptions and Proofs}
    \item[] Question: For each theoretical result, does the paper provide the full set of assumptions and a complete (and correct) proof?
    \item[] Answer: \answerYes{} % Replace by \answerYes{}, \answerNo{}, or \answerNA{}.
    \item[] Justification: In App.~\ref{App:proofs}. %\justificationTODO{}
    \item[] Guidelines:
    \begin{itemize}
        \item The answer NA means that the paper does not include theoretical results. 
        \item All the theorems, formulas, and proofs in the paper should be numbered and cross-referenced.
        \item All assumptions should be clearly stated or referenced in the statement of any theorems.
        \item The proofs can either appear in the main paper or the supplemental material, but if they appear in the supplemental material, the authors are encouraged to provide a short proof sketch to provide intuition. 
        \item Inversely, any informal proof provided in the core of the paper should be complemented by formal proofs provided in appendix or supplemental material.
        \item Theorems and Lemmas that the proof relies upon should be properly referenced. 
    \end{itemize}

    \item {\bf Experimental Result Reproducibility}
    \item[] Question: Does the paper fully disclose all the information needed to reproduce the main experimental results of the paper to the extent that it affects the main claims and/or conclusions of the paper (regardless of whether the code and data are provided or not)?
    \item[] Answer: \answerYes{} % Replace by \answerYes{}, \answerNo{}, or \answerNA{}.
    \item[] Justification: In App.~\ref{App:experimental_details}. %\justificationTODO{}
    \item[] Guidelines:
    \begin{itemize}
        \item The answer NA means that the paper does not include experiments.
        \item If the paper includes experiments, a No answer to this question will not be perceived well by the reviewers: Making the paper reproducible is important, regardless of whether the code and data are provided or not.
        \item If the contribution is a dataset and/or model, the authors should describe the steps taken to make their results reproducible or verifiable. 
        \item Depending on the contribution, reproducibility can be accomplished in various ways. For example, if the contribution is a novel architecture, describing the architecture fully might suffice, or if the contribution is a specific model and empirical evaluation, it may be necessary to either make it possible for others to replicate the model with the same dataset, or provide access to the model. In general. releasing code and data is often one good way to accomplish this, but reproducibility can also be provided via detailed instructions for how to replicate the results, access to a hosted model (e.g., in the case of a large language model), releasing of a model checkpoint, or other means that are appropriate to the research performed.
        \item While NeurIPS does not require releasing code, the conference does require all submissions to provide some reasonable avenue for reproducibility, which may depend on the nature of the contribution. For example
        \begin{enumerate}
            \item If the contribution is primarily a new algorithm, the paper should make it clear how to reproduce that algorithm.
            \item If the contribution is primarily a new model architecture, the paper should describe the architecture clearly and fully.
            \item If the contribution is a new model (e.g., a large language model), then there should either be a way to access this model for reproducing the results or a way to reproduce the model (e.g., with an open-source dataset or instructions for how to construct the dataset).
            \item We recognize that reproducibility may be tricky in some cases, in which case authors are welcome to describe the particular way they provide for reproducibility. In the case of closed-source models, it may be that access to the model is limited in some way (e.g., to registered users), but it should be possible for other researchers to have some path to reproducing or verifying the results.
        \end{enumerate}
    \end{itemize}

\item {\bf Open access to data and code}
    \item[] Question: Does the paper provide open access to the data and code, with sufficient instructions to faithfully reproduce the main experimental results, as described in supplemental material?
    \item[] Answer: \answerYes{} % Replace by \answerYes{}, \answerNo{}, or \answerNA{}.
    \item[] Justification: Our codes can be found at \url{https://github.com/akshaymehra24/TaskTransferAnalysis}.
    \item[] Guidelines:
    \begin{itemize}
        \item The answer NA means that paper does not include experiments requiring code.
        \item Please see the NeurIPS code and data submission guidelines (\url{https://nips.cc/public/guides/CodeSubmissionPolicy}) for more details.
        \item While we encourage the release of code and data, we understand that this might not be possible, so “No” is an acceptable answer. Papers cannot be rejected simply for not including code, unless this is central to the contribution (e.g., for a new open-source benchmark).
        \item The instructions should contain the exact command and environment needed to run to reproduce the results. See the NeurIPS code and data submission guidelines (\url{https://nips.cc/public/guides/CodeSubmissionPolicy}) for more details.
        \item The authors should provide instructions on data access and preparation, including how to access the raw data, preprocessed data, intermediate data, and generated data, etc.
        \item The authors should provide scripts to reproduce all experimental results for the new proposed method and baselines. If only a subset of experiments are reproducible, they should state which ones are omitted from the script and why.
        \item At submission time, to preserve anonymity, the authors should release anonymized versions (if applicable).
        \item Providing as much information as possible in supplemental material (appended to the paper) is recommended, but including URLs to data and code is permitted.
    \end{itemize}

\item {\bf Experimental Setting/Details}
    \item[] Question: Does the paper specify all the training and test details (e.g., data splits, hyperparameters, how they were chosen, type of optimizer, etc.) necessary to understand the results?
    \item[] Answer: \answerYes{} % Replace by \answerYes{}, \answerNo{}, or \answerNA{}.
    \item[] Justification: In App.~\ref{App:experimental_details}. %\justificationTODO{}
    \item[] Guidelines:
    \begin{itemize}
        \item The answer NA means that the paper does not include experiments.
        \item The experimental setting should be presented in the core of the paper to a level of detail that is necessary to appreciate the results and make sense of them.
        \item The full details can be provided either with the code, in appendix, or as supplemental material.
    \end{itemize}

\item {\bf Experiment Statistical Significance}
    \item[] Question: Does the paper report error bars suitably and correctly defined or other appropriate information about the statistical significance of the experiments?
    \item[] Answer: \answerYes{} % Replace by \answerYes{}, \answerNo{}, or \answerNA{}.
    \item[] Justification: An average of three independent runs are reported. %\justificationTODO{}
    \item[] Guidelines:
    \begin{itemize}
        \item The answer NA means that the paper does not include experiments.
        \item The authors should answer "Yes" if the results are accompanied by error bars, confidence intervals, or statistical significance tests, at least for the experiments that support the main claims of the paper.
        \item The factors of variability that the error bars are capturing should be clearly stated (for example, train/test split, initialization, random drawing of some parameter, or overall run with given experimental conditions).
        \item The method for calculating the error bars should be explained (closed form formula, call to a library function, bootstrap, etc.)
        \item The assumptions made should be given (e.g., Normally distributed errors).
        \item It should be clear whether the error bar is the standard deviation or the standard error of the mean.
        \item It is OK to report 1-sigma error bars, but one should state it. The authors should preferably report a 2-sigma error bar than state that they have a 96\% CI, if the hypothesis of Normality of errors is not verified.
        \item For asymmetric distributions, the authors should be careful not to show in tables or figures symmetric error bars that would yield results that are out of range (e.g. negative error rates).
        \item If error bars are reported in tables or plots, The authors should explain in the text how they were calculated and reference the corresponding figures or tables in the text.
    \end{itemize}

\item {\bf Experiments Compute Resources}
    \item[] Question: For each experiment, does the paper provide sufficient information on the computer resources (type of compute workers, memory, time of execution) needed to reproduce the experiments?
    \item[] Answer: \answerYes{} % Replace by \answerYes{}, \answerNo{}, or \answerNA{}.
    \item[] Justification: In App.~\ref{App:experimental_details}. %\justificationTODO{}
    \item[] Guidelines:
    \begin{itemize}
        \item The answer NA means that the paper does not include experiments.
        \item The paper should indicate the type of compute workers CPU or GPU, internal cluster, or cloud provider, including relevant memory and storage.
        \item The paper should provide the amount of compute required for each of the individual experimental runs as well as estimate the total compute. 
        \item The paper should disclose whether the full research project required more compute than the experiments reported in the paper (e.g., preliminary or failed experiments that didn't make it into the paper). 
    \end{itemize}
    
\item {\bf Code Of Ethics}
    \item[] Question: Does the research conducted in the paper conform, in every respect, with the NeurIPS Code of Ethics \url{https://neurips.cc/public/EthicsGuidelines}?
    \item[] Answer: \answerYes{} % Replace by \answerYes{}, \answerNo{}, or \answerNA{}.
    \item[] Justification: There are no ethics concerns. %\justificationTODO{}
    \item[] Guidelines:
    \begin{itemize}
        \item The answer NA means that the authors have not reviewed the NeurIPS Code of Ethics.
        \item If the authors answer No, they should explain the special circumstances that require a deviation from the Code of Ethics.
        \item The authors should make sure to preserve anonymity (e.g., if there is a special consideration due to laws or regulations in their jurisdiction).
    \end{itemize}

\item {\bf Broader Impacts}
    \item[] Question: Does the paper discuss both potential positive societal impacts and negative societal impacts of the work performed?
    \item[] Answer: \answerNA{} % Replace by \answerYes{}, \answerNo{}, or \answerNA{}.
    \item[] Justification: NA %\justificationTODO{}
    \item[] Guidelines:
    \begin{itemize}
        \item The answer NA means that there is no societal impact of the work performed.
        \item If the authors answer NA or No, they should explain why their work has no societal impact or why the paper does not address societal impact.
        \item Examples of negative societal impacts include potential malicious or unintended uses (e.g., disinformation, generating fake profiles, surveillance), fairness considerations (e.g., deployment of technologies that could make decisions that unfairly impact specific groups), privacy considerations, and security considerations.
        \item The conference expects that many papers will be foundational research and not tied to particular applications, let alone deployments. However, if there is a direct path to any negative applications, the authors should point it out. For example, it is legitimate to point out that an improvement in the quality of generative models could be used to generate deepfakes for disinformation. On the other hand, it is not needed to point out that a generic algorithm for optimizing neural networks could enable people to train models that generate Deepfakes faster.
        \item The authors should consider possible harms that could arise when the technology is being used as intended and functioning correctly, harms that could arise when the technology is being used as intended but gives incorrect results, and harms following from (intentional or unintentional) misuse of the technology.
        \item If there are negative societal impacts, the authors could also discuss possible mitigation strategies (e.g., gated release of models, providing defenses in addition to attacks, mechanisms for monitoring misuse, mechanisms to monitor how a system learns from feedback over time, improving the efficiency and accessibility of ML).
    \end{itemize}
    
\item {\bf Safeguards}
    \item[] Question: Does the paper describe safeguards that have been put in place for responsible release of data or models that have a high risk for misuse (e.g., pretrained language models, image generators, or scraped datasets)?
    \item[] Answer: \answerNA{} % Replace by \answerYes{}, \answerNo{}, or \answerNA{}.
    \item[] Justification: NA %\justificationTODO{}
    \item[] Guidelines:
    \begin{itemize}
        \item The answer NA means that the paper poses no such risks.
        \item Released models that have a high risk for misuse or dual-use should be released with necessary safeguards to allow for controlled use of the model, for example by requiring that users adhere to usage guidelines or restrictions to access the model or implementing safety filters. 
        \item Datasets that have been scraped from the Internet could pose safety risks. The authors should describe how they avoided releasing unsafe images.
        \item We recognize that providing effective safeguards is challenging, and many papers do not require this, but we encourage authors to take this into account and make a best faith effort.
    \end{itemize}

\item {\bf Licenses for existing assets}
    \item[] Question: Are the creators or original owners of assets (e.g., code, data, models), used in the paper, properly credited and are the license and terms of use explicitly mentioned and properly respected?
    \item[] Answer: \answerNA{} % Replace by \answerYes{}, \answerNo{}, or \answerNA{}.
    \item[] Justification: NA %\justificationTODO{}
    \item[] Guidelines:
    \begin{itemize}
        \item The answer NA means that the paper does not use existing assets.
        \item The authors should cite the original paper that produced the code package or dataset.
        \item The authors should state which version of the asset is used and, if possible, include a URL.
        \item The name of the license (e.g., CC-BY 4.0) should be included for each asset.
        \item For scraped data from a particular source (e.g., website), the copyright and terms of service of that source should be provided.
        \item If assets are released, the license, copyright information, and terms of use in the package should be provided. For popular datasets, \url{paperswithcode.com/datasets} has curated licenses for some datasets. Their licensing guide can help determine the license of a dataset.
        \item For existing datasets that are re-packaged, both the original license and the license of the derived asset (if it has changed) should be provided.
        \item If this information is not available online, the authors are encouraged to reach out to the asset's creators.
    \end{itemize}

\item {\bf New Assets}
    \item[] Question: Are new assets introduced in the paper well documented and is the documentation provided alongside the assets?
    \item[] Answer: \answerNA{} % Replace by \answerYes{}, \answerNo{}, or \answerNA{}.
    \item[] Justification: NA %\justificationTODO{}
    \item[] Guidelines:
    \begin{itemize}
        \item The answer NA means that the paper does not release new assets.
        \item Researchers should communicate the details of the dataset/code/model as part of their submissions via structured templates. This includes details about training, license, limitations, etc. 
        \item The paper should discuss whether and how consent was obtained from people whose asset is used.
        \item At submission time, remember to anonymize your assets (if applicable). You can either create an anonymized URL or include an anonymized zip file.
    \end{itemize}

\item {\bf Crowdsourcing and Research with Human Subjects}
    \item[] Question: For crowdsourcing experiments and research with human subjects, does the paper include the full text of instructions given to participants and screenshots, if applicable, as well as details about compensation (if any)? 
    \item[] Answer: \answerNA{} % Replace by \answerYes{}, \answerNo{}, or \answerNA{}.
    \item[] Justification: NA %\justificationTODO{}
    \item[] Guidelines:
    \begin{itemize}
        \item The answer NA means that the paper does not involve crowdsourcing nor research with human subjects.
        \item Including this information in the supplemental material is fine, but if the main contribution of the paper involves human subjects, then as much detail as possible should be included in the main paper. 
        \item According to the NeurIPS Code of Ethics, workers involved in data collection, curation, or other labor should be paid at least the minimum wage in the country of the data collector. 
    \end{itemize}

\item {\bf Institutional Review Board (IRB) Approvals or Equivalent for Research with Human Subjects}
    \item[] Question: Does the paper describe potential risks incurred by study participants, whether such risks were disclosed to the subjects, and whether Institutional Review Board (IRB) approvals (or an equivalent approval/review based on the requirements of your country or institution) were obtained?
    \item[] Answer: \answerNA{} % Replace by \answerYes{}, \answerNo{}, or \answerNA{}.
    \item[] Justification: NA %\justificationTODO{}
    \item[] Guidelines:
    \begin{itemize}
        \item The answer NA means that the paper does not involve crowdsourcing nor research with human subjects.
        \item Depending on the country in which research is conducted, IRB approval (or equivalent) may be required for any human subjects research. If you obtained IRB approval, you should clearly state this in the paper. 
        \item We recognize that the procedures for this may vary significantly between institutions and locations, and we expect authors to adhere to the NeurIPS Code of Ethics and the guidelines for their institution. 
        \item For initial submissions, do not include any information that would break anonymity (if applicable), such as the institution conducting the review.
    \end{itemize}

\end{enumerate}

\end{document}